\documentclass{article}
\PassOptionsToPackage{numbers,compress}{natbib}
\usepackage[preprint]{neurips_2026}

\usepackage[T1]{fontenc}
\usepackage{hyperref}
\usepackage{url}
\usepackage{booktabs}
\usepackage{amsmath,amssymb,amsfonts,mathtools}
\usepackage{amsthm}
\usepackage{graphicx}
\usepackage{xcolor}
\usepackage{microtype}
\usepackage{nicefrac}
\usepackage{enumitem}
\usepackage{array}
\usepackage{float}

\title{ManifoldFlow: SPD-Relaxed Stiefel Layers with Learnable Singular Spectrum}

\author{
Haiwen Yi\thanks{Equal contribution.} \\
University of Toronto
\And
Xinyuan Song\footnotemark[1]\thanks{Corresponding author.} \\
Emory University
}

\newcommand{\St}{\mathrm{St}}
\newcommand{\SPD}{\mathrm{SPD}}
\newcommand{\Sym}{\mathrm{Sym}}
\newcommand{\sym}{\mathrm{sym}}
\newcommand{\tr}{\mathrm{tr}}
\newcommand{\R}{\mathbb{R}}
\newcommand{\E}{\mathbb{E}}
\newcommand{\Prb}{\mathbb{P}}
\newcommand{\Exp}{\mathrm{Exp}}
\newcommand{\Log}{\mathrm{Log}}
\newcommand{\dist}{\mathrm{dist}}
\newcommand{\diag}{\mathrm{diag}}
\newcommand{\rank}{\mathrm{rank}}
\newcommand{\op}{\mathrm{op}}
\newcommand{\grad}{\operatorname{grad}}
\newcommand{\Retr}{\operatorname{Retr}}

\newcommand{\clip}{\operatorname{clip}}
\newcommand{\ours}{ManifoldFlow}
\newcommand{\fixed}{Fixed-Stiefel}

\newcounter{algorithm}

\theoremstyle{plain}
\newtheorem{assumption}{Assumption}
\newtheorem{lemma}{Lemma}
\newtheorem{theorem}{Theorem}
\newtheorem{proposition}{Proposition}
\newtheorem{corollary}{Corollary}

\begin{document}

\maketitle

\begin{abstract}
Orthogonal and Stiefel layers give neural weights exact spectral control, but
they also impose a strong modeling constraint: all represented singular values
are fixed at one. Many settings that benefit from an orthonormal basis still
need direction-dependent attenuation or amplification. We introduce \ours{}, a
minimal relaxation of a fixed-spectrum Stiefel layer that keeps the basis on
$\St(p,r)$ while learning a bounded positive spectrum through
$W=QS^{1/2}$, with $Q^\top Q=I$ and $S\succ0$. Since $W^\top W=S$, the
eigenvalues of $S$ are exactly the squared singular values of the realized
weight, making eigenvalue clipping a direct singular-value control mechanism.
Across paired sequence, tabular, and image experiments, the learnable SPD
spectrum consistently improves the fixed-spectrum Stiefel counterpart when the
Stiefel prior is useful, with the largest gains in recurrent language-model
projections. Boundary cases in convolutional classifier heads clarify the
intended scope: \ours{} is not a universal dense-layer replacement, but a
spectrum-learnable Stiefel relaxation for settings where an orthonormal basis
is a useful prior. When the basis should be orthonormal, its spectrum need not
be frozen. Our code is available at
\url{https://github.com/Hik289/manifold_flow.git}.
\end{abstract}

\section{Introduction}
\label{sec:intro}

Modern neural networks are built from a small set of linear maps repeated inside multilayer perceptrons, convolutional classifiers, recurrent gates, and Transformer feed-forward blocks \citep{rosenblatt1958perceptron,rumelhart1986learning,lecun1998gradient,hochreiter1997lstm,cho2014gru,vaswani2017attention}. Their behavior is shaped not only by depth and nonlinearity, but also by the geometry and spectrum of these maps: initialization, rectifiers, normalization, regularization, and adaptive optimizers all act partly by controlling how signals and gradients are scaled \citep{glorot2010understanding,nair2010relu,he2015delving,ioffe2015batchnorm,srivastava2014dropout,kingma2014adam}. Stiefel and orthogonal constraints offer one especially clean form of control. If $Q\in\St(p,r)$, then $Q^\top Q=I_r$: the layer preserves norms on its represented subspace and admits mature Riemannian optimization tools \citep{absil2008matrix,boumal2023smooth,wen2012orthogonality}. This property is useful in recurrent networks, deep linear maps, and Transformer feed-forward blocks because it stabilizes signal propagation and limits spectral growth \citep{saxe2013exact,vorontsov2017orthogonality,jing2017goru,mhammedi2017efficient,helfrich2018orthogonal,huang2017orthogonalweight}. The same property also creates a sharp modeling constraint: a fixed Stiefel layer has every nonzero singular value equal to one.

\noindent\textbf{Overview.}
Fixed Stiefel learns an orthonormal basis but fixes the represented spectrum; \ours{} keeps that basis while learning bounded singular values (Figure~\ref{fig:schematic}).

\begin{figure}[!ht]
    \centering
    \includegraphics[width=0.98\linewidth]{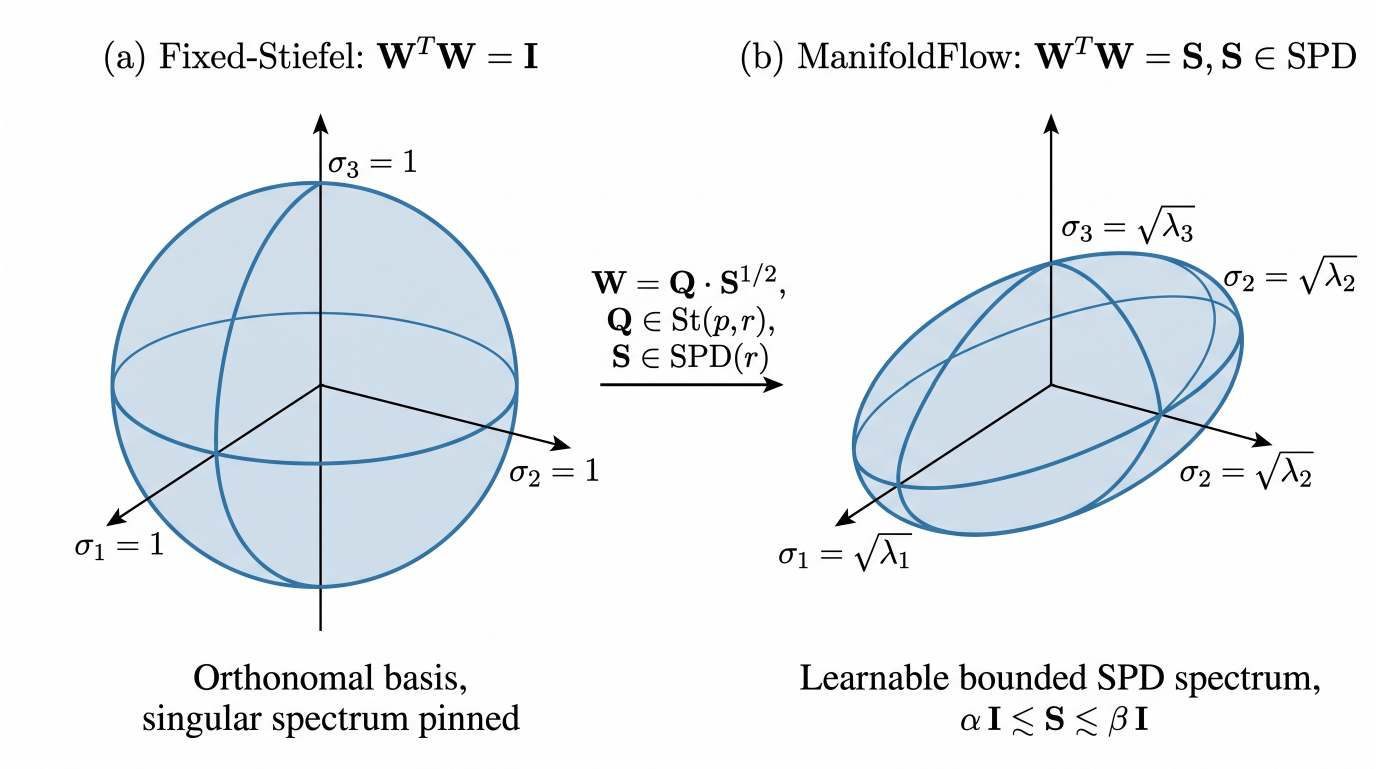}
    \caption{\textbf{ManifoldFlow overview.} Fixed Stiefel fixes the represented spectrum at one; \ours{} keeps the basis and learns a bounded SPD spectrum.}
    \label{fig:schematic}
\end{figure}

The unit spectrum is a modeling restriction, not merely an optimization detail. A language-model projection, a tabular hidden layer, an image classifier head, and a feed-forward block may all benefit from an orthonormal coordinate system while still needing to damp some directions and amplify others. A pure Stiefel layer can choose directions, but it cannot choose their relative gains. The question in this paper is therefore deliberately focused: when the Stiefel basis is the right inductive bias, should its spectrum remain pinned at one, or should the spectrum be learned under explicit control?

We present \ours, an SPD-relaxed Stiefel layer that separates basis learning from spectrum learning:
\begin{equation}
    W = Q S^{1/2}, \qquad Q\in\St(p,r),\quad S\in\SPD(r).
    \label{eq:intro_param}
\end{equation}
The Stiefel factor $Q$ provides orthonormal directions, while the symmetric positive definite factor $S$ stores the layer's Gram geometry. Proposition~\ref{prop:spectral_identity} proves the exact identity $W^\top W=S$, hence $\sigma_i^2(W)=\lambda_i(S)$. Clipping the eigenvalues of $S$ is therefore direct singular-value control on the realized weight. The fixed-spectrum baseline is recovered when $S=I$, so \ours{} is a paired relaxation of the same Stiefel layer rather than a separate dense-layer competitor.

The update rule follows the factorization. The $Q$ factor uses the same Stiefel optimizer as the paired baseline, while the $S$ factor is updated by an affine-invariant SPD retraction. Our implementation uses a symmetric pressure matrix $P_t=\sym(Q_t^\top\bar G_t)$, where $\bar G_t$ is the normal component discarded by the Stiefel tangent projection, as a default direction on the SPD cone. Figure~\ref{fig:motivation} previews the two empirical facts that motivate the design: the pressure signal is structured, and the learned SPD factor moves away from the identity while remaining bounded. The pressure signal is useful, but it is not the core claim. Proposition~\ref{prop:geo_equiv} shows that near an isotropic $S$, equal-norm bounded symmetric directions induce equal affine-invariant movement, and the random-pressure ablation remains within $0.1\%$ on Adult and Covertype. The central object is the bounded spectrum relaxation, not the uniqueness of one pressure heuristic.

\begin{figure}[t]
    \centering
    \includegraphics[width=\linewidth]{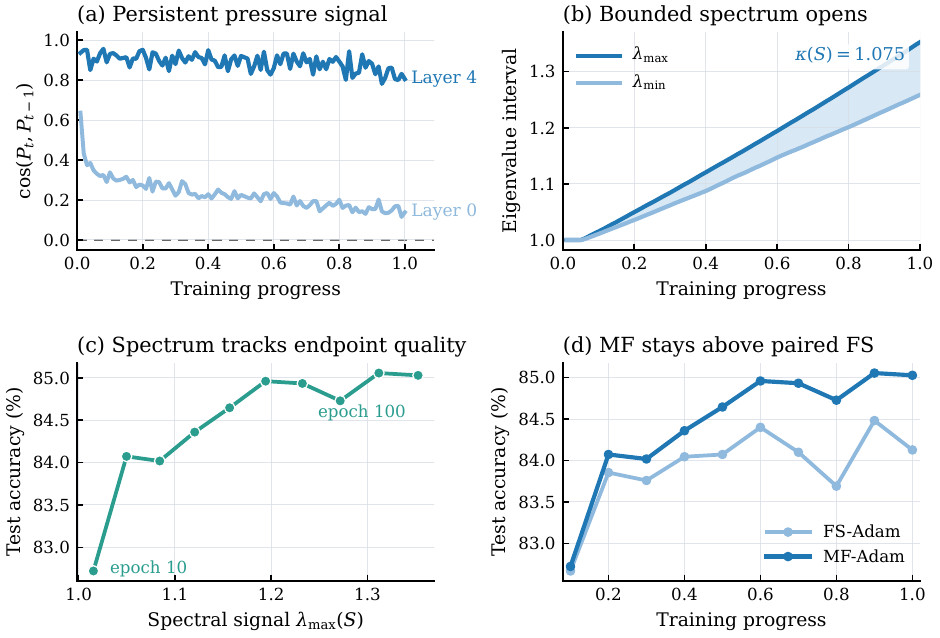}
    \caption{\textbf{Motivation diagnostics on Adult MLP.}
    \textbf{(a)} Pressure coherence by layer.
    \textbf{(b)} Learned eigenvalue interval over training.
    \textbf{(c)} Accuracy versus $\lambda_{\max}(S_t)$.
    \textbf{(d)} Paired FS-Adam and MF-Adam accuracy trajectories.}
    \label{fig:motivation}
\end{figure}

Our contributions are threefold. First, we formulate the spectrum-learnable parametrization $W=QS^{1/2}$ and identify its exact Gram and singular-spectrum meaning. Second, we give a bounded SPD update that preserves positive definiteness, recovers \fixed{} by setting the geometry learning rate to zero, and admits a product-manifold convergence analysis. Third, we report paired evidence across recurrent language modeling, tabular and image MLPs, Transformer feed-forward layers, and convolutional classifier heads. The strongest gains appear where a Stiefel basis is a useful prior, while the mixed classifier-head results identify where that prior becomes limiting.

\section{Related work}
\label{sec:related}

\paragraph{Stiefel and orthogonal optimization.}
Optimization on matrix manifolds provides the tangent projections, retractions, vector transports, and convergence tools used by Stiefel neural layers \citep{absil2008matrix,boumal2023smooth}. Feasible Cayley-style updates, manifold trivializations, and Riemannian adaptive methods make constrained updates practical for neural networks \citep{wen2012orthogonality,lezcano2019trivializations,becigneul2019riemannian,kochurov2020geoopt}. Orthogonal and unitary parametrizations have been used to stabilize recurrent dynamics and improve conditioning in deep networks \citep{wisdom2016full,vorontsov2017orthogonality,jing2017goru,mhammedi2017efficient,helfrich2018orthogonal,huang2017orthogonalweight}. This line of work primarily asks how to optimize an orthogonal factor efficiently. \ours{} asks a complementary modeling question: once a Stiefel basis is chosen, must its spectrum remain fixed?

\paragraph{SPD geometry and spectral parametrizations.}
Symmetric positive definite matrices form a Riemannian manifold with well-studied affine-invariant and log-Euclidean geometries \citep{pennec2006riemannian,moakher2005differential,arsigny2007logeuclidean,higham2008functions,bhatia2007positive}. We use this geometry as a weight factor rather than as an input covariance or activation representation. Spectral normalization constrains the largest singular value of an unconstrained layer \citep{miyato2018spectral}, weight normalization separates scale from direction \citep{salimans2016weight}, and orthogonal weight normalization enforces dependent Stiefel constraints \citep{huang2017orthogonalweight}. In contrast, \ours{} keeps a semi-orthogonal basis $Q$ but gives the layer a full SPD factor $S$ whose eigenvalues are exactly the squared singular values of $W$.

\paragraph{Singular spectra and signal propagation.}
Orthogonal initialization and dynamical-isometry analyses show that training depends on the distribution of singular values, not only on the largest singular value \citep{saxe2013exact,mishkin2016all,pennington2017resurrecting}. Fixed Stiefel layers occupy the most rigid point in this design space: all represented singular values are exactly one. \ours{} turns this observation into a trainable layer design by learning the entire bounded spectrum while preserving an orthonormal basis.

\paragraph{Intrinsic spectral optimization and tangent-only competitors.}
Intrinsic Muon is the closest recent comparator because it optimizes spectral quantities intrinsically on Riemannian matrix manifolds \citep{li2026intrinsicmuon}. Dense layers, diagonal-spectrum layers, and intrinsic spectral optimizers are important boundary comparisons because they test whether the Stiefel prior itself is appropriate. Our claim is narrower: given a fixed-spectrum Stiefel layer, adding a bounded SPD spectrum improves the paired layer on the task families evaluated below.

\paragraph{Lipschitz control and generalization.}
A key appeal of Stiefel layers is spectral control. Classical Rademacher and margin analyses relate neural-network capacity to products of spectral norms and sums of normalized Frobenius terms \citep{bartlett2002rademacher,bartlett2017spectrally}. Theorem~\ref{thm:mf_generalization} substitutes $W_l=Q_lS_l^{1/2}$ into these bounds, expressing the complexity terms through $\lambda_{\max}(S_l)$ and $\tr(S_l)$. This places \ours{} between the fixed Stiefel corner $S_l=I$ and unconstrained dense layers whose spectra must be controlled separately.

\section{Preliminaries}
\label{sec:prelims}

Let $p\ge r$. The compact Stiefel manifold is
\begin{equation}
    \St(p,r)=\{Q\in\R^{p\times r}:Q^\top Q=I_r\},
\end{equation}
and its tangent space is
\begin{equation}
    T_Q\St(p,r)=\{\Delta\in\R^{p\times r}:Q^\top\Delta+\Delta^\top Q=0\}.
\end{equation}
A Stiefel retraction maps a tangent step back to $\St(p,r)$; QR and Cayley-style maps are standard choices \citep{wen2012orthogonality,boumal2023smooth}. A fixed Stiefel layer uses $W=Q$. Hence $W^\top W=I_r$, and every represented singular value is exactly one.

We write $\mathrm{Sym}(r)$ for real symmetric matrices and
\begin{equation}
    \SPD(r)=\{S\in\mathrm{Sym}(r):x^\top Sx>0\ \text{for all }x\ne0\}
\end{equation}
for the SPD cone. Under the affine-invariant metric $g_S(X,Y)=\tr(S^{-1}XS^{-1}Y)$, the exponential map from $S$ in direction $X\in\mathrm{Sym}(r)$ is
\begin{equation}
    \Exp_S(X)=S^{1/2}\exp(S^{-1/2}XS^{-1/2})S^{1/2},
    \label{eq:ai_exp}
\end{equation}
with distance $d_{\mathrm{AI}}(S,T)=\|\log(S^{-1/2}TS^{-1/2})\|_F$ \citep{pennec2006riemannian,moakher2005differential,bhatia2007positive}. The matrix exponential of a symmetric matrix is SPD, so updates of the form \eqref{eq:ai_exp} preserve positive definiteness.

A \ours{} layer uses the product manifold $\St(p,r)\times\SPD(r)$ and represents the weight as $W(Q,S)=QS^{1/2}$. Since $Q^\top Q=I_r$,
\begin{equation}
    W(Q,S)^\top W(Q,S)=S.
\end{equation}
All spectral diagnostics of $S_t$ are therefore diagnostics of the actual forward weight, not of an auxiliary state.

\section{Methodology}
\label{sec:method}

\ours{} separates two properties that are tied together in a fixed Stiefel layer. The Stiefel factor keeps the represented directions orthonormal; the SPD factor learns the gains assigned to those directions. The construction in Figure~\ref{fig:schematic} is therefore a basis--spectrum decoupling: the update keeps the usual Stiefel step for $Q$ and adds a bounded geometry step for $S$. Figure~\ref{fig:method_update} shows one full update: the realized weight induces a Stiefel-gradient decomposition, the tangent branch updates $Q_t$, and the rejected-normal pressure branch updates the bounded SPD spectrum.

\begin{figure}[t]
    \centering
    \includegraphics[width=0.98\linewidth]{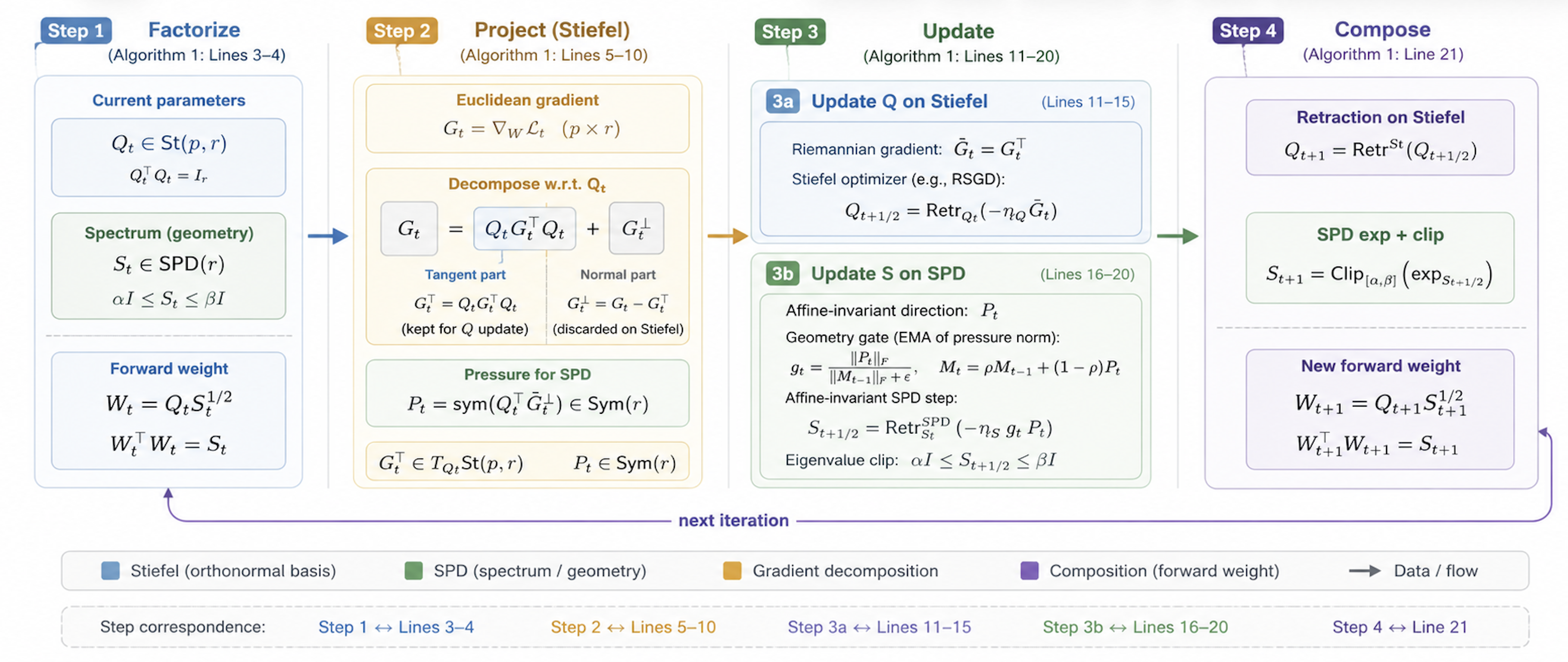}
    \caption{\textbf{One \ours{} update.} The schematic shows the forward factorization, the projected-gradient split, the tangent update for $Q_t$, the SPD update for $S_t$, and the final composed weight.}
    \label{fig:method_update}
\end{figure}

\paragraph{Parametrization and spectral identity.}
For a layer with $p\ge r$, \ours{} writes
\begin{equation}
    W_t=Q_tS_t^{1/2},\qquad Q_t\in\St(p,r),\quad S_t\in\SPD(r).
    \label{eq:method_param}
\end{equation}
Proposition~\ref{prop:spectral_identity} proves that $W_t^\top W_t=S_t$ and $\sigma_i^2(W_t)=\lambda_i(S_t)$. Thus $S_t$ is not an auxiliary regularizer; it is exactly the Gram matrix of the composed weight. Proposition~\ref{prop:coverage} shows that the parametrization covers the full-column-rank matrix set, while the fixed baseline is recovered at the special point $S_t=I_r$.

\paragraph{Rejected-gradient pressure.}
Let $G_{W,t}=\nabla_{W_t}\mathcal L_t$ and $R_t=S_t^{1/2}$. The ambient gradient with respect to $Q_t$ is
\begin{equation}
    \bar G_t=G_{W,t}R_t.
    \label{eq:q_ambient_gradient}
\end{equation}
The Stiefel tangent projection decomposes this matrix as
\begin{equation}
    G_{Q,t}^{\mathrm{tan}}
    =
    \bar G_t-Q_t\sym(Q_t^\top\bar G_t),
    \qquad
    \bar G_t=G_{Q,t}^{\mathrm{tan}}+Q_tP_t,
    \label{eq:stiefel_decomposition}
\end{equation}
where
\begin{equation}
    P_t=\sym(Q_t^\top\bar G_t)=\frac{1}{2}\left(Q_t^\top\bar G_t+\bar G_t^\top Q_t\right).
    \label{eq:pressure}
\end{equation}
The tangent component updates $Q_t$; the normal component $Q_tP_t$ is the part a fixed Stiefel optimizer would reject. Proposition~\ref{prop:pressure_decomposition} shows that, at $S=I$, the same symmetric matrix is the first-order signal for moving the Gram factor $S$. Thus pressure is not an extra heuristic gradient invented outside the model; it is the rejected component reinterpreted as a geometry direction.

\paragraph{Persistent pressure and geometry gate.}
Because $P_t$ is expressed in the current Stiefel basis, we align the previous pressure state before averaging it. Let
\begin{equation}
    Q_t^\top Q_{t-1}=U_t\Sigma_tV_t^\top,
    \qquad
    O_t=U_tV_t^\top,
    \qquad
    \widetilde M_{t-1}=O_tM_{t-1}O_t^\top .
    \label{eq:procrustes_alignment}
\end{equation}
Proposition~\ref{prop:alignment_clipping} proves that this is the orthogonal Procrustes alignment and preserves the spectrum and Frobenius norm of the pressure state. We then use the normalized EMA
\begin{equation}
    M_t=\beta_P\widetilde M_{t-1}+(1-\beta_P)\frac{P_t}{\|\bar G_t\|_F+\varepsilon}.
    \label{eq:pressure_ema}
\end{equation}
The geometry gate combines pressure consistency, normal-to-tangent magnitude, and spectral damping:
\begin{align}
    c_t
    &=
    \frac{\langle P_t,\widetilde M_{t-1}\rangle_F}
    {\|P_t\|_F\|\widetilde M_{t-1}\|_F+\varepsilon},
    \label{eq:pressure_consistency}\\
    r_t
    &=
    \frac{\|Q_tP_t\|_F}{\|G_{Q,t}^{\mathrm{tan}}\|_F+\varepsilon},
    \label{eq:normal_tangent_ratio}\\
    a_t
    &=
    \sigma\!\left(\alpha_c(c_t-\tau_c)\right)
    \sigma\!\left(\alpha_r(\log r_t-\tau_r)\right)
    \min\!\left(1,\frac{\lambda_{\min}(S_t)}{\lambda_{\max}(S_t)}\right).
    \label{eq:gate}
\end{align}
This gate only damps noisy or overly anisotropic geometry updates; it does not define a separate model.

\paragraph{Affine-invariant SPD update.}
The regularized objective used in the theory is
\begin{equation}
    \mathcal J(Q,S)
    =
    F(QS^{1/2})
    +
    \frac{\lambda_S}{2}\|\log S\|_F^2.
    \label{eq:method_regularized_objective}
\end{equation}
Under the affine-invariant metric, the exact Riemannian gradient of the log-barrier-to-identity term is
$\lambda_SS_t^{1/2}\log(S_t)S_t^{1/2}$. We write the exponential generator as
\begin{equation}
    A_t
    =
    S_t^{-1/2}\sym(M_t)S_t^{-1/2}
    +
    \lambda_S\log S_t.
    \label{eq:spd_generator}
\end{equation}
On geometry-update steps, after the warmup period, \ours{} uses $\gamma_t=\rho_{\mathrm{geo}}\eta_t$ and
\begin{equation}
    S_{t+1}^{\mathrm{raw}}
    =
    S_t^{1/2}\exp\!\left(-\gamma_ta_tA_t\right)S_t^{1/2}.
    \label{eq:spd_update}
\end{equation}
If $S_{t+1}^{\mathrm{raw}}=U\diag(\lambda_1,\ldots,\lambda_r)U^\top$, spectral clipping sets
\begin{equation}
    S_{t+1}
    =
    U\diag\!\left(\clip(\lambda_i,\alpha,\beta)\right)U^\top.
    \label{eq:spectral_clipping}
\end{equation}
Proposition~\ref{prop:update_stability} proves that \eqref{eq:spd_update} remains in $\SPD(r)$ before clipping, and Proposition~\ref{prop:alignment_clipping} proves that \eqref{eq:spectral_clipping} places the iterate in $\mathcal S_{\alpha,\beta}$. Setting $\rho_{\mathrm{geo}}=0$ freezes $S_t=I_r$ and reduces the method exactly to the paired \fixed{} update used throughout the experiments.

\paragraph{Geometric interpretation.}
The pressure direction is useful but not unique. Proposition~\ref{prop:geo_equiv} states that, at an isotropic $S_t=\lambda I$, equal-norm symmetric directions produce equal affine-invariant movement under \eqref{eq:spd_update}. This result explains the random-pressure ablation: if the main benefit is relaxing the unit spectrum while keeping bounded SPD motion, several symmetric directions can yield similar performance. The method's invariant is the exact spectrum identity plus bounded motion on the SPD cone.

\paragraph{Tangent optimizers.}
The geometry mechanism is shared across base optimizers. SGD with momentum, Adam/AMSGrad-style adaptive updates, Shampoo-style Kronecker preconditioning, and Muon-style orthogonalized matrix directions \citep{robbins1951sgd,kingma2014adam,reddi2018adam,duchi2011adaptive,gupta2018shampoo,jordan2024muon,li2026intrinsicmuon} differ only in the tangent direction $D_t$ used for $Q_t$:
\begin{equation}
    D_t\in T_{Q_t}\St(p,r),
    \qquad
    Q_{t+1}=\Retr_{Q_t}(-\eta_tD_t).
    \label{eq:tangent_optimizer_form}
\end{equation}
Whenever the tangent direction can be written as a bounded positive-definite preconditioner applied to a stochastic tangent gradient, Proposition~\ref{prop:bounded_preconditioner} shows that the stationarity rate changes only by constants.

\refstepcounter{algorithm}\label{alg:mf}
\begin{center}
\fbox{\begin{minipage}{0.94\linewidth}
\textbf{Algorithm~\thealgorithm: One \ours{} layer update.}
\begin{enumerate}[noitemsep,leftmargin=1.4em]
    \item Form $\bar G_t=G_{W,t}S_t^{1/2}$ and decompose it by \eqref{eq:stiefel_decomposition}.
    \item Update $Q_t$ with the selected Stiefel optimizer using a tangent direction $D_t$.
    \item Align and average pressure by \eqref{eq:procrustes_alignment}--\eqref{eq:pressure_ema}.
    \item Compute the gate $a_t$ in \eqref{eq:gate}.
    \item On geometry-update steps, update $S_t$ by \eqref{eq:spd_update}, clip by \eqref{eq:spectral_clipping}, and use $W_{t+1}=Q_{t+1}S_{t+1}^{1/2}$ in the next forward pass.
\end{enumerate}
\end{minipage}}
\end{center}

\section{Theory}
\label{sec:theory}

The theory formalizes why the relaxation is still controlled. The fixed-Stiefel layer gives exact orthonormal directions but only the unit spectrum. \ours{} enlarges this class by adding $S\in\SPD(r)$, while clipping
\begin{equation}
    \mathcal S_{\alpha,\beta}
    =
    \{S\in\SPD(r): \alpha I_r\preceq S\preceq \beta I_r\},
    \qquad 0<\alpha\leq\beta<\infty,
    \label{eq:main_clipped_spd}
\end{equation}
keeps the realized singular values bounded. The statements below are the mathematical contract used by the method: rejected-gradient pressure is a Gram-geometry signal, the parametrization has exact spectral meaning, the alignment and clipping operations preserve the intended geometry, the SPD motion is feasible, and the resulting product-manifold optimization retains standard stationarity and spectral-complexity guarantees. Proofs are deferred to the appendix.

\begin{proposition}[Rejected pressure is a Gram-geometry signal]
\label{prop:pressure_decomposition}
\label{prop:mfv4_pressure_decomposition}
Let $\ell(W)$ be differentiable, let $W=QS^{1/2}$ with $Q\in\St(p,r)$ and $S\in\SPD(r)$, and let $G_W=\nabla_W\ell(W)$. Define $\bar G=G_WS^{1/2}$ and
\begin{equation}
    \Pi_Q(\bar G)=\bar G-Q\sym(Q^\top\bar G),
    \qquad
    P=\sym(Q^\top\bar G).
    \label{eq:main_pressure_projection}
\end{equation}
Then $\Pi_Q(\bar G)\in T_Q\St(p,r)$ and $\bar G=\Pi_Q(\bar G)+QP$. Moreover, at $S=I_r$, for every symmetric perturbation $U\in\Sym(r)$,
\begin{equation}
    D_S\ell(QS^{1/2})\big|_{S=I_r}[U]
    =
    \frac12\langle \sym(Q^\top G_W),U\rangle_F.
    \label{eq:main_pressure_gram_signal}
\end{equation}
\end{proposition}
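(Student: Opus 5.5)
The proof has two parts: an algebraic check for the projection identity and a first-order expansion of the matrix square root at the identity.

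\emph{Decomposition.} The identity $\bar G=\Pi_Q(\bar G)+QP$ is immediate from the definitions, since $\Pi_Q(\bar G)=\bar G-QP$. For tangency, set $\Delta=\Pi_Q(\bar G)$ and use $Q^\top Q=I_r$:
\[
Q^\top\Delta=Q^\top\bar G-\sym(Q^\top\bar G)=\tfrac12\bigl(Q^\top\bar G-\bar G^\top Q\bigr),
\]
which is skew-symmetric. Hence $Q^\top\Delta+\Delta^\top Q=0$, so $\Delta\in T_Q\St(p,r)$.

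\emph{Gram signal.} Fix $Q$ and $U\in\Sym(r)$, and consider $S(\epsilon)=I_r+\epsilon U$, which is SPD for small $|\epsilon|$. The key step is the derivative of the square root at the identity. Write $S(\epsilon)^{1/2}=I_r+\epsilon X+o(\epsilon)$. Squaring gives $I_r+2\epsilon X+o(\epsilon)=I_r+\epsilon U$, so $X=\tfrac12 U$. To justify that $\epsilon\mapsto S(\epsilon)^{1/2}$ is differentiable, I would use either of two routes. One is that the square root is smooth (indeed analytic) on $\SPD(r)$. The other, simpler here, is that $I_r$ and $U$ commute, so $(I_r+\epsilon U)^{1/2}=V\diag\bigl(\sqrt{1+\epsilon\mu_i}\bigr)V^\top$ with $U=V\diag(\mu_i)V^\top$. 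This gives the derivative $\tfrac12 U$ eigenvalue by eigenvalue. Since the base point is the identity, the general Sylvester equation $S^{1/2}X+XS^{1/2}=U$ reduces to $2X=U$, so nothing delicate is needed.

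By the chain rule,
\[
\frac{d}{d\epsilon}\,\ell\bigl(QS(\epsilon)^{1/2}\bigr)\Big|_{\epsilon=0}=\bigl\langle G_W,\tfrac12 QU\bigr\rangle_F=\tfrac12\langle Q^\top G_W,U\rangle_F.
\]
Because $U$ is symmetric, the skew part of $Q^\top G_W$ is orthogonal to $U$, so $\langle Q^\top G_W,U\rangle_F=\langle\sym(Q^\top G_W),U\rangle_F$, which gives \eqref{eq:main_pressure_gram_signal}. Finally, at $S=I_r$ we have $\bar G=G_W$, so $P=\sym(Q^\top G_W)$. This identifies the rejected pressure as twice the Gram-direction gradient.

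The main technical obstacle is the derivative of $S\mapsto S^{1/2}$. At the identity it is trivial by the commuting argument above, so I expect no real difficulty.
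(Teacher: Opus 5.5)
Your proof is correct and follows the paper's argument essentially step for step: the same algebraic tangency check using $Q^\top Q=I_r$, the same derivative $D(S^{1/2})|_{I_r}[U]=U/2$, then the chain rule and symmetric--skew Frobenius orthogonality. The one difference is that the paper obtains $U/2$ by differentiating $X^2=S$, which presupposes differentiability, while your commuting eigendecomposition of $I_r+\epsilon U$ makes that step explicit; your closing observation that $P=\sym(Q^\top G_W)$ at $S=I_r$ is a correct extra not needed for the statement.
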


\noindent\textbf{Implication.}
The normal component discarded by a Stiefel projection is mathematically aligned with first-order changes in the Gram factor. Persistent pressure therefore has a precise role: it is a low-variance direction for relaxing $W^\top W=I$ into $W^\top W=S$. The proof is in Appendix~\ref{app:proof_pressure_decomposition}.

\begin{proposition}[Exact spectrum and full-rank coverage]
\label{prop:spectral_identity}
\label{prop:mfv4_spectral_identity}
\label{prop:coverage}
\label{prop:mfv4_coverage}
Let $Q\in\St(p,r)$, $S\in\SPD(r)$, and $W=QS^{1/2}$. Then
\begin{equation}
    W^\top W=S,
    \qquad
    \sigma_i^2(W)=\lambda_i(S),\quad i=1,\ldots,r.
    \label{eq:main_spectral_identity}
\end{equation}
Moreover, the image of $(Q,S)\mapsto QS^{1/2}$ over $\St(p,r)\times\SPD(r)$ is exactly the set of full-column-rank matrices in $\R^{p\times r}$. If $S\in\mathcal S_{\alpha,\beta}$, then
\begin{equation}
    \sqrt{\alpha}\leq\sigma_i(W)\leq\sqrt{\beta},
    \qquad
    \kappa(W)\leq\sqrt{\beta/\alpha}.
    \label{eq:main_singular_clip}
\end{equation}
\end{proposition}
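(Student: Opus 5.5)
The plan is to prove the three claims in the order stated, since each uses only the previous one plus elementary linear algebra.

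\emph{Spectral identity.} Here $S^{1/2}$ denotes the unique SPD square root, so it is symmetric. Using $Q^\top Q=I_r$ gives
\[
W^\top W=S^{1/2}Q^\top QS^{1/2}=S^{1/2}S^{1/2}=S.
\]
The squared singular values of $W$ are by definition the eigenvalues of $W^\top W$. Since $W$ has $r\le p$ columns, there are exactly $r$ of them, so $\sigma_i^2(W)=\lambda_i(S)$ once both are sorted in decreasing order. If $S\in\mathcal S_{\alpha,\beta}$, then every $\lambda_i(S)\in[\alpha,\beta]$. Taking square roots gives $\sqrt\alpha\le\sigma_i(W)\le\sqrt\beta$. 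Hence $\kappa(W)=\sigma_1/\sigma_r\le\sqrt{\beta/\alpha}$.

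\emph{Coverage.} I would show two inclusions.

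For ``image $\subseteq$ full column rank'': $W^\top W=S$ is SPD, hence invertible. So $Wx=0$ implies $x^\top Sx=0$, which forces $x=0$.

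For the reverse inclusion, let $W$ have full column rank. Then $W^\top W$ is SPD. Set $S:=W^\top W$ and $Q:=WS^{-1/2}$. A direct check gives $Q^\top Q=S^{-1/2}W^\top WS^{-1/2}=I_r$, so $Q\in\St(p,r)$. Moreover $QS^{1/2}=W$. This is just the polar decomposition $W=Q(W^\top W)^{1/2}$. It also shows the factorization is unique, because $S$ is forced to equal $W^\top W$ and then $Q=WS^{-1/2}$.

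\emph{Main obstacle.} There is no substantive difficulty. The only points needing care are:
\begin{itemize}
\item fixing $S^{1/2}$ as the symmetric SPD root, which is what the step $S^{1/2}Q^\top QS^{1/2}=S$ requires;
\item noting that the singular values are counted with $r\le p$, so that exactly $r$ of them match the eigenvalues of $S$.
\end{itemize}
I would state both explicitly. I would also remark that the fixed-Stiefel layer corresponds to the single fiber $S=I_r$.
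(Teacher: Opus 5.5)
Your proposal is correct and follows the paper's proof essentially step for step. It uses the same computation $W^\top W=S^{1/2}Q^\top QS^{1/2}=S$, the same matching of the eigenvalues of $W^\top W$ with $\sigma_i^2(W)$ together with the clipping bounds, and the same argument that $W^\top W\succ0$ forces full column rank.

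The only variation is in the reverse inclusion. The paper builds $Q=UV^\top$ and $S=V\Sigma^2V^\top$ from a compact SVD, while you take the polar factors $S=W^\top W$ and $Q=WS^{-1/2}$. These are literally the same matrices, since $WS^{-1/2}=U\Sigma V^\top V\Sigma^{-1}V^\top=UV^\top$. Your formulation has the small bonus of making uniqueness of the factorization immediate, which the paper does not state.
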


\noindent\textbf{Implication.}
\ours{} is not a heuristic scale parameter attached to a Stiefel layer. It is an exact Gram-factor parametrization: moving $S$ moves the singular spectrum of the actual forward weight, and clipping $S$ clips the layer. The proof is in Appendix~\ref{app:proof_spectral_identity} and Appendix~\ref{app:proof_coverage}.

\begin{proposition}[Feasible SPD update]
\label{prop:update_stability}
\label{prop:mfv4_update_stability}
Let $S_t\in\SPD(r)$, $A_t\in\Sym(r)$, $\gamma_t>0$, and $a_t\in[0,1]$. Define
\begin{equation}
    S_{t+1}
    =
    S_t^{1/2}
    \exp\!\left(-\gamma_ta_tA_t\right)
    S_t^{1/2}.
    \label{eq:main_spd_update}
\end{equation}
Then $S_{t+1}\in\SPD(r)$.
\end{proposition}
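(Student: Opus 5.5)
The plan is to verify the two defining properties of $\SPD(r)$ for $S_{t+1}$ directly: symmetry, and strict positivity of the quadratic form. The key observation is that $S_{t+1}$ is a congruence transform of a matrix exponential of a symmetric matrix. Congruence by an invertible matrix preserves positive definiteness, so the argument reduces to two facts. First, $S_t^{1/2}$ is a well-defined, symmetric, invertible matrix. Second, $E:=\exp(-\gamma_ta_tA_t)$ is SPD.

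\textbf{Step 1 (the square root).} Since $S_t\in\SPD(r)$, the spectral theorem gives $S_t=V\diag(\mu_1,\ldots,\mu_r)V^\top$ with $V$ orthogonal and all $\mu_i>0$. I define $S_t^{1/2}=V\diag(\sqrt{\mu_i})V^\top$. This matrix is symmetric, and its eigenvalues $\sqrt{\mu_i}$ are strictly positive, so it is invertible. This is the same principal square root already used in the parametrization $W=QS^{1/2}$.

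\textbf{Step 2 (the exponential factor).} Set $B=-\gamma_ta_tA_t$. Because $A_t\in\Sym(r)$ and $\gamma_ta_t\in\R$, we have $B\in\Sym(r)$, so $B=U\diag(\nu_i)U^\top$ with $U$ orthogonal. The power series of $\exp$ commutes with orthogonal conjugation, which gives
\[
E=U\diag(e^{\nu_1},\ldots,e^{\nu_r})U^\top .
\]
Hence $E$ is symmetric with eigenvalues $e^{\nu_i}>0$, and so $E\in\SPD(r)$. Note that no sign or size restriction on $\gamma_t$, $a_t$ or $A_t$ is needed; the hypotheses $\gamma_t>0$ and $a_t\in[0,1]$ matter only elsewhere. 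In the degenerate case $a_t=0$ we get $E=I_r$ and $S_{t+1}=S_t$, which is consistent with the claim.

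\textbf{Step 3 (congruence).} Symmetry follows from
\[
S_{t+1}^\top=(S_t^{1/2})^\top E^\top (S_t^{1/2})^\top=S_t^{1/2}ES_t^{1/2}=S_{t+1}.
\]
For positivity, take any $x\neq0$ and let $y=S_t^{1/2}x$. Invertibility from Step 1 gives $y\neq0$, and therefore
\[
x^\top S_{t+1}x=y^\top Ey>0
\]
by Step 2. This is exactly the defining condition of $\SPD(r)$ in the Preliminaries, so $S_{t+1}\in\SPD(r)$.

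There is no real obstacle here. The only point requiring care is to justify the spectral form of the exponential of a symmetric matrix, i.e.\ that $\exp(U\Lambda U^\top)=U\exp(\Lambda)U^\top$. This follows termwise from the power series, because $(U\Lambda U^\top)^k=U\Lambda^kU^\top$. As a remark, the same computation identifies \eqref{eq:main_spd_update} with $\Exp_{S_t}$ from \eqref{eq:ai_exp} in the direction $-\gamma_ta_tS_t^{1/2}A_tS_t^{1/2}$, which matches the Preliminaries' observation that affine-invariant exponential updates preserve positive definiteness.
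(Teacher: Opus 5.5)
Your proof is correct and takes essentially the same route as the paper: you diagonalize $-\gamma_ta_tA_t$ to get $\exp(-\gamma_ta_tA_t)\succ0$, then use the nonsingularity of $S_t^{1/2}$ to write $x^\top S_{t+1}x=y^\top\exp(-\gamma_ta_tA_t)\,y>0$. Your explicit symmetry check of $S_{t+1}$ is a small addition that the paper leaves implicit. It is needed, because $\SPD(r)$ is defined as a subset of $\Sym(r)$.
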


\noindent\textbf{Implication.}
Every symmetric geometry direction gives a valid SPD iterate before clipping. Thus the method can learn a spectrum without leaving the cone or relying on a post-hoc projection to repair indefiniteness. The proof is in Appendix~\ref{app:proof_update_stability}.

\begin{proposition}[Alignment and spectral clipping invariants]
\label{prop:alignment_clipping}
\label{prop:mfv4_alignment_clipping}
Let $Q,Q_-\in\St(p,r)$ and let $Q^\top Q_-=U\Sigma V^\top$ be a compact singular-value decomposition. Then
\begin{equation}
    O_\star=UV^\top
    \in
    \arg\min_{O\in\mathrm O(r)}\|QO-Q_-\|_F^2 .
    \label{eq:main_procrustes_solution}
\end{equation}
For every $M\in\Sym(r)$, the aligned state $OMO^\top$ is symmetric and has the same eigenvalues and Frobenius norm as $M$. If $S=U_S\diag(\lambda_i)U_S^\top\in\SPD(r)$ and
\begin{equation}
    \mathcal C_{\alpha,\beta}(S)
    =
    U_S\diag(\clip(\lambda_i,\alpha,\beta))U_S^\top ,
    \label{eq:main_spectral_clip_operator}
\end{equation}
then $\mathcal C_{\alpha,\beta}(S)\in\mathcal S_{\alpha,\beta}$.
\end{proposition}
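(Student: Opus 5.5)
The proposition has three independent parts, and I will treat them in order: the Procrustes claim, the invariance of the aligned state, and the clipping feasibility.

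\emph{Procrustes.} For $O\in\mathrm O(r)$, expand
\[
\|QO-Q_-\|_F^2=\|QO\|_F^2+\|Q_-\|_F^2-2\,\tr(O^\top Q^\top Q_-).
\]
Since $Q^\top Q=I_r$, we have $\|QO\|_F^2=\tr(O^\top O)=r$, and $\|Q_-\|_F^2=r$. Minimizing the objective is therefore equivalent to maximizing $\tr(O^\top U\Sigma V^\top)=\tr(\Sigma\,V^\top O^\top U)$.

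Set $Z=V^\top O^\top U\in\mathrm O(r)$. Its diagonal entries satisfy $|Z_{ii}|\le 1$ and $\Sigma\succeq0$, so $\tr(\Sigma Z)\le\sum_i\sigma_i=\tr\Sigma$. Equality holds at $Z=I$, that is, at $O=UV^\top$. This proves \eqref{eq:main_procrustes_solution}.

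The only care needed here concerns the compact SVD. Because both factors lie in $\St(p,r)$, $Q^\top Q_-$ is square $r\times r$, so $U$ and $V$ are orthogonal. If some $\sigma_i=0$, the minimizer need not be unique. This is why the statement uses ``$\in\arg\min$'', and the argument above still shows $O_\star$ attains the minimum.

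\emph{Aligned state.} For $O\in\mathrm O(r)$ and $M\in\Sym(r)$:
\begin{itemize}[noitemsep]
\item $(OMO^\top)^\top=OM^\top O^\top=OMO^\top$, so the aligned state is symmetric.
\item $OMO^\top=OMO^{-1}$ is a similarity transform, so its characteristic polynomial, and hence its spectrum, is unchanged.
\item $\|OMO^\top\|_F^2=\tr(OM^\top O^\top OMO^\top)=\tr(M^\top M)$ by cyclicity, so the Frobenius norm is preserved.
\end{itemize}

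\emph{Clipping.} Let $\mu_i=\clip(\lambda_i,\alpha,\beta)=\min(\beta,\max(\alpha,\lambda_i))$, so $\alpha\le\mu_i\le\beta$ for every $i$. The matrix $\mathcal C_{\alpha,\beta}(S)=U_S\diag(\mu_i)U_S^\top$ is symmetric with eigenvalues $\mu_i$. For any unit vector $x$, write $y=U_S^\top x$, which is also a unit vector. Then
\[
x^\top\mathcal C_{\alpha,\beta}(S)\,x=\sum_i\mu_iy_i^2\in[\alpha,\beta].
\]
This gives $\alpha I_r\preceq\mathcal C_{\alpha,\beta}(S)\preceq\beta I_r$. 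Positive definiteness follows from $\alpha>0$, so $\mathcal C_{\alpha,\beta}(S)\in\mathcal S_{\alpha,\beta}$.

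One might worry that the clipping operator depends on the choice of eigenbasis when eigenvalues repeat. It does not affect membership in $\mathcal S_{\alpha,\beta}$, and it is in fact well defined because clipping is a spectral function applied to each eigenspace. No step here is a real obstacle; the Procrustes trace inequality is the only nontrivial ingredient.
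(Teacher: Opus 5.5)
Your proof is correct and follows the paper's argument. It uses the same expansion to $2r-2\tr(O^\top Q^\top Q_-)$, the same trace maximization with equality at $O=UV^\top$, and the same orthogonal-similarity and eigenvalue-clipping steps; the only difference is that you verify $\tr(\Sigma Z)\le\tr\Sigma$ directly from $|Z_{ii}|\le 1$ for the orthogonal matrix $Z=V^\top O^\top U$, where the paper cites von Neumann's trace inequality, so your version is a self-contained proof of the special case needed.
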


\noindent\textbf{Implication.}
The pressure EMA compares directions in aligned Stiefel coordinates rather than mixing incompatible bases, and the clipping operator turns every raw SPD step into a bounded-spectrum layer. The proof is in Appendix~\ref{app:proof_alignment_clipping}.

\begin{proposition}[Local equivalence of bounded SPD directions]
\label{prop:geo_equiv}
\label{prop:mfv4_geo_equiv}
Let $S_t=\lambda I_r$ with $\lambda>0$. For $M\in\Sym(r)$, define
\[
    S_{t+1}(M)
    =
    S_t^{1/2}
    \exp\!\left(-\rho_{\mathrm{geo}}a_tS_t^{-1/2}MS_t^{-1/2}\right)
    S_t^{1/2}.
\]
If $M,M'\in\Sym(r)$ satisfy $\|M\|_F=\|M'\|_F$, then
\begin{equation}
    d_{\mathrm{AI}}\!\left(S_t,S_{t+1}(M)\right)
    =
    d_{\mathrm{AI}}\!\left(S_t,S_{t+1}(M')\right)
    =
    \rho_{\mathrm{geo}}a_t\lambda^{-1}\|M\|_F.
    \label{eq:main_geo_equiv}
\end{equation}
\end{proposition}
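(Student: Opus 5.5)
The plan is to substitute $S_t=\lambda I_r$ directly into the update and reduce the affine-invariant distance to a Frobenius norm. With $S_t^{1/2}=\sqrt{\lambda}\,I_r$ and $S_t^{-1/2}=\lambda^{-1/2}I_r$, the generator becomes
\[
S_t^{-1/2}MS_t^{-1/2}=\lambda^{-1}M .
\]
The update then collapses to
\[
S_{t+1}(M)=\lambda\exp\!\left(-\rho_{\mathrm{geo}}a_t\lambda^{-1}M\right).
\]
This matrix is SPD, for instance by Proposition~\ref{prop:update_stability} with $\gamma_ta_t$ replaced by $\rho_{\mathrm{geo}}a_t$. Positivity of $\gamma_t$ is not really needed there: the exponential of any symmetric matrix is SPD. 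So the distance is well defined.

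Next I evaluate $d_{\mathrm{AI}}(S_t,T)=\|\log(S_t^{-1/2}TS_t^{-1/2})\|_F$ at $T=S_{t+1}(M)$. The inner argument is
\[
S_t^{-1/2}S_{t+1}(M)S_t^{-1/2}=\lambda^{-1}\cdot\lambda\exp(-\rho_{\mathrm{geo}}a_t\lambda^{-1}M)=\exp(-\rho_{\mathrm{geo}}a_t\lambda^{-1}M).
\]
The one point needing care is that $\log\exp(X)=X$ for symmetric $X$. This is not the general branch issue for matrix logarithms. Because $X$ is symmetric, write $X=V\diag(\mu_i)V^\top$. Then $\exp X=V\diag(e^{\mu_i})V^\top$ is SPD, and its principal logarithm is $V\diag(\mu_i)V^\top=X$, since the real logarithm inverts the real exponential on each eigenvalue.

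Hence
\[
d_{\mathrm{AI}}(S_t,S_{t+1}(M))=\|\rho_{\mathrm{geo}}a_t\lambda^{-1}M\|_F=\rho_{\mathrm{geo}}a_t\lambda^{-1}\|M\|_F,
\]
using $\rho_{\mathrm{geo}}a_t\ge0$ and $\lambda>0$. The same computation applies verbatim to $M'$. Since $\|M\|_F=\|M'\|_F$, the two distances coincide and both equal the right-hand side of \eqref{eq:main_geo_equiv}. No step is a real obstacle; the only one needing justification is the $\log\exp$ identity above.
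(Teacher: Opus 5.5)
Your proposal is correct and matches the paper's proof essentially step for step: you substitute $S_t=\lambda I_r$ to get $S_t^{-1/2}S_{t+1}(M)S_t^{-1/2}=\exp(-\rho_{\mathrm{geo}}a_t\lambda^{-1}M)$, apply the closed form of $d_{\mathrm{AI}}$ together with $\log\exp(X)=X$ for symmetric $X$, and finish by homogeneity of the Frobenius norm. The paper's final equality likewise uses $\rho_{\mathrm{geo}}a_t\ge 0$ without stating it, and you were right to make that assumption explicit.
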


\noindent\textbf{Implication.}
Near the isotropic initialization, the method's first-order movement is controlled by the size of the symmetric SPD direction, not by a fragile choice of pressure heuristic. This explains why random-pressure ablations can remain competitive while still supporting the spectrum-relaxation mechanism. The proof is in Appendix~\ref{app:proof_geo_equiv}.

\begin{theorem}[Product-manifold stationarity]
\label{thm:mf_convergence}
\label{thm:mfv4_nonconvex_convergence}
Let $x_t=(Q_t,S_t)\in\St(p,r)\times\mathcal S_{\alpha,\beta}$. Assume the objective $\mathcal J$ is bounded below and retraction-smooth on this product manifold, and let $\widehat g_t$ be an unbiased tangent stochastic gradient estimator with conditional variance bounded by $\nu_t^2$. If
$x_{t+1}=R_{x_t}(-\eta_t\widehat g_t)$ and $0<\eta_t\leq L_R^{-1}$, then for a random iterate $R$ sampled with probability proportional to $\eta_t$,
\begin{equation}
    \E\|\operatorname{grad}\mathcal J(x_R)\|_{x_R}^2
    \leq
    \frac{2(\mathcal J(x_1)-\mathcal J_\star)+L_R\sum_{t=1}^{T}\eta_t^2\nu_t^2}
    {\sum_{t=1}^{T}\eta_t}.
    \label{eq:main_convergence_rate}
\end{equation}
For $\eta_t=c/\sqrt t$ and bounded accumulated variance, the right-hand side is $\mathcal O(T^{-1/2})$.
\end{theorem}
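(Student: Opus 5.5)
The plan is the standard descent-lemma argument for nonconvex stochastic Riemannian gradient methods (as in Boumal's treatment), applied on the product manifold $\mathcal M=\St(p,r)\times\mathcal S_{\alpha,\beta}$ with the product metric. Everything is stated in terms of the retraction pullback, so the geometry only enters through retraction-smoothness.

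\textbf{Step 1: one-step bound.} I read retraction-smoothness with constant $L_R$ as the inequality
\[
\mathcal J(R_x(v))\le \mathcal J(x)+\langle \operatorname{grad}\mathcal J(x),v\rangle_x+\tfrac{L_R}{2}\|v\|_x^2
\]
for all $x\in\mathcal M$ and tangent $v$. Apply it with $x=x_t$ and $v=-\eta_t\widehat g_t$.

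\textbf{Step 2: conditional expectation.} Condition on the past $\mathcal F_t$. Unbiasedness gives $\E[\widehat g_t\mid\mathcal F_t]=\operatorname{grad}\mathcal J(x_t)$. The variance bound gives $\E[\|\widehat g_t\|^2\mid\mathcal F_t]\le\|\operatorname{grad}\mathcal J(x_t)\|^2+\nu_t^2$. Combining these,
\[
\E[\mathcal J(x_{t+1})\mid\mathcal F_t]\le \mathcal J(x_t)-\eta_t\bigl(1-\tfrac{L_R\eta_t}{2}\bigr)\|\operatorname{grad}\mathcal J(x_t)\|^2+\tfrac{L_R}{2}\eta_t^2\nu_t^2 .
\]
Since $\eta_t\le L_R^{-1}$, the factor in parentheses is at least $1/2$.

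\textbf{Step 3: telescoping.} Take total expectation and sum from $t=1$ to $T$. Then use $\E\mathcal J(x_{T+1})\ge\mathcal J_\star$, which holds because $\mathcal J$ is bounded below. This yields
\[
\tfrac12\sum_{t}\eta_t\,\E\|\operatorname{grad}\mathcal J(x_t)\|^2\le \mathcal J(x_1)-\mathcal J_\star+\tfrac{L_R}{2}\sum_t\eta_t^2\nu_t^2 .
\]

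\textbf{Step 4: random iterate and rate.} Sampling $R$ with probability $\eta_t/\sum_s\eta_s$ turns the left-hand side into $\tfrac12\bigl(\sum_s\eta_s\bigr)\E\|\operatorname{grad}\mathcal J(x_R)\|^2$. Multiplying by 2 gives exactly \eqref{eq:main_convergence_rate}. For the rate, take $\eta_t=c/\sqrt t$ (with $c\le L_R^{-1}$). Then $\sum_t\eta_t\ge c\sqrt T$. If "bounded accumulated variance" means $\sum_t\eta_t^2\nu_t^2\le C$, the bound is $\mathcal O(T^{-1/2})$. If it instead means $\nu_t\le\nu$, the numerator grows like $\log T$, giving $\mathcal O(\log T/\sqrt T)$. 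I would state the first reading explicitly so the claim holds as written.

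\textbf{Main obstacle: feasibility.} The steps above are routine. The delicate point is keeping the iterates inside $\mathcal S_{\alpha,\beta}$. The affine-invariant exponential retraction of Proposition~\ref{prop:update_stability} keeps iterates in $\SPD(r)$, but not necessarily in the clipped set. I would handle this by taking $\mathcal S_{\alpha,\beta}$ as part of the hypothesis $x_t\in\mathcal M$. That is, $R$ is a retraction on a domain where the descent inequality holds and iterates are assumed to stay in the set, and Proposition~\ref{prop:alignment_clipping} is the justification in practice.

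If one instead wants the clipping to be part of the analysis, the clipped step is a projected-retraction step, and the descent inequality must survive the projection. I would try to show clipping does not increase $\mathcal J$ to first order, but that would need extra structure. So I would keep it as an assumption, treat $L_R$ as uniform over the compact set $\St(p,r)\times\mathcal S_{\alpha,\beta}$, and note that compactness is what makes a uniform $L_R$ plausible for smooth $F$ plus the log-regularizer.
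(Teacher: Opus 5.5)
Your proposal is correct and follows the paper's proof essentially step for step. The shared steps are the descent inequality from retraction-smoothness, unbiasedness plus the variance decomposition $\E[\|\widehat g_t\|^2\mid\mathcal F_t]\le\|\operatorname{grad}\mathcal J(x_t)\|^2+\nu_t^2$, the bound $1-L_R\eta_t/2\ge 1/2$, telescoping against $\mathcal J_\star$, and the $\eta_t$-weighted random-iterate identity; the paper also settles your two side issues the same way, reading ``bounded accumulated variance'' as $\sum_t\eta_t^2\nu_t^2\le C_\nu$ (with a $\log T$-inflated variant when $\nu_t\le\nu$) and simply assuming that the iterates stay in $\mathcal S_{\alpha,\beta}$ rather than analyzing the clipping step.
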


\noindent\textbf{Implication.}
Learning $S$ does not remove the method from standard stochastic Riemannian optimization. The product update retains the usual nonconvex stationarity guarantee, while the clipped SPD factor keeps all spectral terms finite. The proof is in Appendix~\ref{app:proof_convergence}.

\begin{theorem}[Two-block stationarity with biased geometry estimates]
\label{thm:two_block_convergence}
\label{thm:mfv4_two_block_convergence}
Let $\mathcal J$ be retraction-smooth and bounded below on $\St(p,r)\times\mathcal S_{\alpha,\beta}$. Suppose the $Q$-block uses an unbiased tangent estimator $\widehat g_{Q,t}$ with conditional variance at most $\sigma_Q^2$. Suppose the SPD block uses a symmetric estimator $\widehat g_{S,t}$ with conditional variance at most $\sigma_S^2$ and bias satisfying
\begin{equation}
    \left|
    \left\langle
    \grad_S\mathcal J(Q_t,S_t),
    \E[\widehat g_{S,t}\mid\mathcal F_t]-\grad_S\mathcal J(Q_t,S_t)
    \right\rangle_{S_t}
    \right|
    \leq b_t .
    \label{eq:main_geometry_bias}
\end{equation}
If the updates use step sizes $\eta_t,\gamma_t$ small enough for the descent lemma and if $R$ is sampled with probability proportional to $\eta_t+\gamma_t$, then
\begin{align}
    &\E\!\left[
    \|\grad_Q\mathcal J(Q_R,S_R)\|^2
    +
    \|\grad_S\mathcal J(Q_R,S_R)\|_{S_R}^2
    \right]
    \notag\\
    &\leq
    \frac{
    C_\mathcal J
    +C_Q\sigma_Q^2\sum_t\eta_t^2}
    {\sum_t(\eta_t+\gamma_t)}
    +
    \frac{
    C_S\sigma_S^2\sum_t\gamma_t^2
    +2\sum_t\gamma_tb_t}
    {\sum_t(\eta_t+\gamma_t)} .
    \label{eq:main_two_block_rate}
\end{align}
With $\eta_t,\gamma_t=\Theta(T^{-1/2})$ and $T^{-1}\sum_tb_t=\mathcal O(T^{-1/2})$, the right-hand side is $\mathcal O(T^{-1/2})$.
\end{theorem}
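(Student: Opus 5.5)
The plan is to follow the standard one-step descent-lemma argument from the proof of Theorem~\ref{thm:mf_convergence}, applied block by block. A bias term is carried only in the SPD block.

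\textbf{Step 1: one-step descent inequality.} I will interpret the two-block update as a single product-manifold retraction step
\[
x_{t+1}=R_{x_t}\bigl(-\eta_t\widehat g_{Q,t},\,-\gamma_t\widehat g_{S,t}\bigr),
\]
or, if the blocks are updated sequentially, as two retraction steps. By retraction-smoothness with constant $L_R$,
\[
\mathcal J(x_{t+1})\le \mathcal J(x_t)-\eta_t\langle \grad_Q\mathcal J,\widehat g_{Q,t}\rangle-\gamma_t\langle\grad_S\mathcal J,\widehat g_{S,t}\rangle_{S_t}+\tfrac{L_R}{2}\bigl(\eta_t^2\|\widehat g_{Q,t}\|^2+\gamma_t^2\|\widehat g_{S,t}\|_{S_t}^2\bigr).
\]
The clipping to $\mathcal S_{\alpha,\beta}$ is taken as part of the retraction, as in Theorem~\ref{thm:mf_convergence}. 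Equivalently, I assume the descent lemma holds on the clipped set, which is what ``retraction-smooth on $\St(p,r)\times\mathcal S_{\alpha,\beta}$'' supplies. I expect this to be the main obstacle. Clipping is not a smooth retraction near the boundary, so I would state explicitly that the hypothesis covers the composed map. The compactness of $\St(p,r)\times\mathcal S_{\alpha,\beta}$ guarantees uniform constants.

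\textbf{Step 2: conditional expectations.} Next I take $\E[\cdot\mid\mathcal F_t]$ in each block.

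For the $Q$-block, unbiasedness gives $-\eta_t\|\grad_Q\mathcal J\|^2$. The variance decomposition gives $\E\|\widehat g_{Q,t}\|^2\le\|\grad_Q\mathcal J\|^2+\sigma_Q^2$.

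For the $S$-block, write $\E[\widehat g_{S,t}\mid\mathcal F_t]=\grad_S\mathcal J+\delta_t$. The inner product becomes $\|\grad_S\mathcal J\|_{S_t}^2+\langle\grad_S\mathcal J,\delta_t\rangle_{S_t}$. The bias assumption~\eqref{eq:main_geometry_bias} bounds the second term in absolute value by $b_t$.

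For the second moment of $\widehat g_{S,t}$, I bound $\|\grad_S\mathcal J+\delta_t\|^2$ plus the variance. By Cauchy--Schwarz and Young's inequality this is at most $2\|\grad_S\mathcal J\|^2+2\|\delta_t\|^2+\sigma_S^2$. If $\|\delta_t\|$ is not separately controlled, I will absorb it into $C_S\sigma_S^2$, assuming the estimator's second moment is bounded by a constant multiple of $\|\grad_S\mathcal J\|^2+\sigma_S^2$. This is the reason the statement carries the unspecified constant $C_S$.

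\textbf{Step 3: rearrange and telescope.} With $\eta_t\le 1/L_R$ and $\gamma_t\le 1/(2L_R)$, which is the meaning of ``small enough for the descent lemma'', the quadratic gradient terms are absorbed. This leaves
\[
\tfrac{\eta_t}{2}\|\grad_Q\mathcal J\|^2+\tfrac{\gamma_t}{2}\|\grad_S\mathcal J\|^2_{S_t}\le \E[\mathcal J(x_t)-\mathcal J(x_{t+1})\mid\mathcal F_t]+\tfrac{L_R}{2}\eta_t^2\sigma_Q^2+C\gamma_t^2\sigma_S^2+\gamma_tb_t.
\]
Both block coefficients are at least $\min(\eta_t,\gamma_t)/2$. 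To get a weight proportional to $\eta_t+\gamma_t$, I use $\eta_t\|a\|^2+\gamma_t\|b\|^2\ge \tfrac12\min$-type comparisons. Alternatively, I simply assume comparable step sizes $\eta_t\asymp\gamma_t$ and fold the ratio into the constants $C_\mathcal J,C_Q,C_S$. I would state this comparability assumption explicitly, since without it weighting by $\eta_t+\gamma_t$ cannot control both blocks.

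I then sum over $t$, take full expectations, and use $\mathcal J\ge\mathcal J_\star$, which gives $C_{\mathcal J}=c(\mathcal J(x_1)-\mathcal J_\star)$. Dividing by $\sum_t(\eta_t+\gamma_t)$ and recognising the left side as the expectation under the sampling of $R$ gives~\eqref{eq:main_two_block_rate}. The factor $2$ on $\sum_t\gamma_tb_t$ comes from multiplying through by $2$.

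\textbf{Step 4: rate.} With $\eta_t,\gamma_t=\Theta(T^{-1/2})$, the denominator is $\Theta(T^{1/2})$. The variance sums are $\Theta(1)$. The bias term is $T^{-1/2}\sum_tb_t=O(1)$ under the assumption $T^{-1}\sum_t b_t=O(T^{-1/2})$. Dividing by the $\Theta(T^{1/2})$ denominator then yields $O(T^{-1/2})$.
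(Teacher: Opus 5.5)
Your proposal is correct and follows essentially the same route as the paper. Both use a blockwise retraction-smooth descent inequality evaluated at $(Q_t,S_t)$, unbiasedness in the $Q$-block, the inner-product bias bound in the $S$-block, the variance decomposition, telescoping, and sampling $R$ with probability proportional to $\eta_t+\gamma_t$. The caveats you make explicit are exactly the points the paper's proof silently absorbs into its constants $L_Q,L_S,C_b,c_Q,c_S$: clipping has to be folded into the retraction-smoothness hypothesis, a norm or second-moment control on the bias of $\widehat g_{S,t}$ is needed, and passing from $\eta_t\|g_{Q,t}\|^2+\gamma_t\|g_{S,t}\|_{S_t}^2$ to the $(\eta_t+\gamma_t)$-weighted sum requires $\eta_t\asymp\gamma_t$.
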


\noindent\textbf{Implication.}
EMA, gating, and low-frequency geometry updates can be analyzed as a biased SPD-gradient estimator. As long as the accumulated bias is controlled, \ours{} keeps the same nonconvex stochastic first-order rate as the standard product-manifold update. The proof is in Appendix~\ref{app:proof_two_block_convergence}.

\begin{proposition}[Bounded tangent preconditioners]
\label{prop:bounded_preconditioner}
\label{prop:mfv4_bounded_tangent_preconditioner}
Let the $Q$-block direction be $D_t=B_t\widehat g_{Q,t}$, where $B_t:T_{Q_t}\St(p,r)\to T_{Q_t}\St(p,r)$ is self-adjoint and satisfies
\begin{equation}
    m\|\xi\|^2\leq\langle \xi,B_t\xi\rangle\leq M\|\xi\|^2,
    \qquad
    \|B_t\xi\|\leq M\|\xi\|
    \label{eq:main_preconditioner_bounds}
\end{equation}
for constants $0<m\leq M<\infty$. Under the assumptions of Theorem~\ref{thm:two_block_convergence}, replacing $\widehat g_{Q,t}$ by $D_t$ preserves the same stationarity order, with constants scaled by $m$ and $M$.
\end{proposition}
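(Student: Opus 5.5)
The plan is to rerun the two-block descent argument behind Theorem~\ref{thm:two_block_convergence} with $D_t=B_t\widehat g_{Q,t}$ in the $Q$-block, leaving the SPD block unchanged. The work is to redo the one-step $Q$-block inequality and confirm that $B_t$ changes only the constants.

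\textbf{Step 1: one-step bound for the $Q$-block.} Retraction-smoothness of $\mathcal J$ on $\St(p,r)\times\mathcal S_{\alpha,\beta}$, with the SPD block held fixed, gives
\[
\mathcal J(Q_{t+1},S_t)\le \mathcal J(Q_t,S_t)-\eta_t\langle g_t,D_t\rangle+\tfrac{L_R}{2}\eta_t^2\|D_t\|^2 ,
\]
where $g_t=\grad_Q\mathcal J(Q_t,S_t)$. I take $B_t$ to be $\mathcal F_t$-measurable; this is the standard convention and is needed so that $B_t$ can be pulled out of the conditional expectation. Taking conditional expectations and using unbiasedness:
\begin{itemize}
\item the cross term becomes $\langle g_t,B_tg_t\rangle\ge m\|g_t\|^2$;
\item the quadratic term satisfies $\E\|D_t\|^2\le M^2\E\|\widehat g_{Q,t}\|^2\le M^2(\|g_t\|^2+\sigma_Q^2)$.
\end{itemize}
With $\eta_t\le m/(L_RM^2)$, this yields
\[
\E[\mathcal J(Q_{t+1},S_t)\mid\mathcal F_t]\le \mathcal J(Q_t,S_t)-\tfrac{m}{2}\eta_t\|g_t\|^2+\tfrac{L_RM^2}{2}\eta_t^2\sigma_Q^2 .
\]
This has the same form as the unpreconditioned bound, with $\tfrac12$ replaced by $\tfrac m2$ and $L_R$ replaced by $L_RM^2$.

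\textbf{Step 2: reuse the SPD block.} The SPD-block inequality from the proof of Theorem~\ref{thm:two_block_convergence} does not involve $B_t$, so it carries over verbatim. It has descent coefficient $\gamma_t/2$, variance term $C_S\gamma_t^2\sigma_S^2$, and bias term $\gamma_tb_t$ from \eqref{eq:main_geometry_bias}.

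\textbf{Step 3: telescope.} Add the two block inequalities, take total expectations, telescope over $t$, and use that $\mathcal J$ is bounded below. This gives
\[
\sum_t\Bigl(\min(m,1)\,\eta_t\,\E\|g_t\|^2+\min(m,1)\,\gamma_t\,\E\|\grad_S\mathcal J\|^2_{S_t}\Bigr)\le C_{\mathcal J}+C_QM^2\sigma_Q^2\sum_t\eta_t^2+C_S\sigma_S^2\sum_t\gamma_t^2+2\sum_t\gamma_tb_t .
\]
To pass from per-block weights $\eta_t$ and $\gamma_t$ to the common weight $\eta_t+\gamma_t$ used for sampling $R$, I follow whatever device Theorem~\ref{thm:two_block_convergence} uses. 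This could be comparable step sizes, $\eta_t\asymp\gamma_t$, or its sampling rule. That device only changes constants, so dividing by $\sum_t(\eta_t+\gamma_t)$ gives \eqref{eq:main_two_block_rate}. The constants are now scaled by $1/\min(m,1)$ and by $M^2$, and the $\Theta(T^{-1/2})$ step sizes again give $\mathcal O(T^{-1/2})$.

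\textbf{Main obstacle: the interaction term.} The $Q$-update evaluated at $S_t$ and the $S$-update evaluated at $Q_{t+1}$ interact. I would handle this exactly as the base theorem does, either by a joint descent lemma on the product retraction or by Lipschitz continuity of $\grad_S\mathcal J$ in $Q$. In that argument, $\|D_t\|\le M\|\widehat g_{Q,t}\|$ ensures the preconditioned step perturbs the SPD-block bound by at most an $M^2$-scaled $\mathcal O(\eta_t\gamma_t)$ term. That term is absorbed into the variance constants under the step-size restriction above.
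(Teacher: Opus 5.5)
Your proposal is correct and follows essentially the same route as the paper. You bound the $Q$-block step using $\langle g_{Q,t},B_tg_{Q,t}\rangle\geq m\|g_{Q,t}\|^2$ and $\E_t\|B_t\widehat g_{Q,t}\|^2\leq M^2(\|g_{Q,t}\|^2+\sigma_Q^2)$, restrict $\eta_t\leq m/(L_QM^2)$, and reuse the telescoping of Theorem~\ref{thm:two_block_convergence} with constants rescaled by $m$ and $M^2$. Your explicit assumption that $B_t$ is $\mathcal F_t$-measurable is what the paper uses implicitly when it pulls $B_t$ out of the conditional expectation. Your remarks on the cross-block interaction and on needing comparable step sizes to sample $R$ flag points that the base theorem's proof also passes over.
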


\noindent\textbf{Implication.}
AMSGrad-style Adam, Shampoo, and other tangent optimizers fit the same proof template when their effective preconditioners are uniformly positive definite and bounded. The geometry mechanism is therefore not tied to one base optimizer. The proof is in Appendix~\ref{app:proof_bounded_preconditioner}.

\begin{theorem}[Spectral complexity bound]
\label{thm:mf_generalization}
\label{thm:mfv4_generalization_bound}
Consider an $L_{\mathrm{net}}$-layer network with one-Lipschitz activations, inputs $\|x\|_2\leq B$, and ManifoldFlow weights $W_l=Q_lS_l^{1/2}$ with $S_l\in\mathcal S_{\alpha_l,\beta_l}$. With probability at least $1-\delta$, the clipped class satisfies
\begin{equation}
    R(f)-\widehat R(f)
    \leq
    \frac{4C_\ell B}{\sqrt n}\prod_{l=1}^{L_{\mathrm{net}}}\sqrt{\beta_l}
    +3\sqrt{\frac{\log(2/\delta)}{2n}}
    +\epsilon_{\mathrm{stab}}.
    \label{eq:main_generalization_simple}
\end{equation}
The spectrally normalized margin form can be written as
\begin{equation}
    R(f)
    \leq
    \widehat R_\gamma(f)
    +
    \frac{C_{\mathrm{BFT}}B}{\gamma\sqrt n}
    \sqrt{
    \left[\prod_l\lambda_{\max}(S_l)\right]
    \left[\sum_l\frac{\tr(S_l)}{\lambda_{\max}(S_l)}\right]
    }
    +3\sqrt{\frac{\log(2/\delta)}{2n}}.
    \label{eq:main_bft_bound}
\end{equation}
\end{theorem}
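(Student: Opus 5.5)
The plan is to reduce both bounds to standard results by substituting the spectral identity of Proposition~\ref{prop:spectral_identity} into existing capacity estimates. No new concentration argument is needed.

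\textbf{Step 1: per-layer spectral quantities.} Since $W_l^\top W_l=S_l$, we have
\[
\|W_l\|_{\op}=\sigma_{\max}(W_l)=\sqrt{\lambda_{\max}(S_l)}\le\sqrt{\beta_l},
\qquad
\|W_l\|_F^2=\tr(W_l^\top W_l)=\tr(S_l).
\]
The first relation is (\ref{eq:main_singular_clip}) applied to $S_l\in\mathcal S_{\alpha_l,\beta_l}$. The second uses only $Q_l^\top Q_l=I$. Consequently every ManifoldFlow network lies in the class of networks with spectral norms at most $\sqrt{\beta_l}$ and Frobenius norms $\sqrt{\tr(S_l)}$. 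This containment is what lets us invoke the classical bounds.

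\textbf{Step 2: the simple bound (\ref{eq:main_generalization_simple}).}
\begin{itemize}
\item We bound the empirical Rademacher complexity of the clipped class by peeling layers. Each one-Lipschitz activation is removed by contraction, and each linear layer contributes at most its operator norm. This gives a complexity of order $B\prod_l\sqrt{\beta_l}/\sqrt n$.
\item A constant-factor loss from the contraction/peeling and Talagrand steps, combined with a $C_\ell$-Lipschitz loss, produces the $4C_\ell$ prefactor.
\item The standard two-sided Rademacher generalization theorem of \citet{bartlett2002rademacher}, combined with McDiarmid's inequality, gives the $3\sqrt{\log(2/\delta)/(2n)}$ term.
\item We interpret $\epsilon_{\mathrm{stab}}$ as a slack term. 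It absorbs the data-dependence of the trained $(Q_l,S_l)$ beyond class membership, or equivalently the gap between the peeling bound and its precise form.
\end{itemize}

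\textbf{Step 3: the margin bound (\ref{eq:main_bft_bound}).} We invoke the spectrally normalized margin bound of \citet{bartlett2017spectrally} with reference matrices equal to zero. Its complexity term is
\[
\prod_l\|W_l\|_{\op}\Big(\sum_l \|W_l\|_{2,1}^{2/3}/\|W_l\|_{\op}^{2/3}\Big)^{3/2}.
\]
We replace the $(2,1)$ norms by Frobenius norms. This relaxation is standard up to dimension factors, which we absorb into $C_{\mathrm{BFT}}$. It yields a geometry term of the form
\[
\Big[\prod_l\|W_l\|_{\op}^2\Big]\Big[\sum_l\|W_l\|_F^2/\|W_l\|_{\op}^2\Big]
\]
under a square root, which is the Neyshabur-style PAC-Bayes form. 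Substituting the identities from Step 1 gives $\prod_l\lambda_{\max}(S_l)$ and $\sum_l\tr(S_l)/\lambda_{\max}(S_l)$ exactly.

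Because this bound must hold uniformly while the $S_l$ are data-dependent, a union bound over a geometric grid of spectral scales is needed. The clipping interval $[\alpha_l,\beta_l]$ makes that grid finite. The resulting logarithmic factor is absorbed into $C_{\mathrm{BFT}}$ and the confidence term.

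\textbf{Main obstacle.} The delicate step is Step 3, for two reasons.
\begin{itemize}
\item The $(2,1)$-to-Frobenius relaxation introduces width and log factors. We must state explicitly that these are absorbed into $C_{\mathrm{BFT}}$ rather than claim an exact match with \citet{bartlett2017spectrally}.
\item The uniformity over data-dependent spectra requires the grid and union-bound argument described there.
\end{itemize}
We would present this step as a direct substitution into the cited theorem, with constants made explicit only up to those absorbed factors.
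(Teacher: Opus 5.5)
Your outline is the paper's own argument. It embeds the clipped class in $\{\|W_l\|_{\op}\le\sqrt{\beta_l}\}$ via $W_l^\top W_l=S_l$, peels layers by contraction, applies the standard Rademacher generalization theorem, appends $\epsilon_{\mathrm{stab}}$ as a nonnegative term, and substitutes $\|W_l\|_{\op}^2=\lambda_{\max}(S_l)$ and $\|W_l\|_F^2=\tr(S_l)$ into the spectral-margin template. The genuine gap is the step you treat as routine: ``each activation is removed by contraction and each linear layer contributes at most its operator norm.'' The paper also only asserts this, as ``repeated application of the contraction inequality.'' Talagrand's contraction removes a Lipschitz map from a \emph{real-valued} class. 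Peeling a vector-valued layer $h\mapsto\sigma(Wh)$ needs a norm that decomposes over the rows of $W$, such as the Frobenius or group norms used by Neyshabur--Tomioka--Srebro or Golowich--Rakhlin--Shamir. The operator norm does not decompose this way.

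In fact no bound $cB\prod_l\|W_l\|_{\op}/\sqrt n$ with $c$ independent of width can hold. Here is a counterexample.
Take ReLU $\sigma$ and $f(x)=w^\top\sigma(Rx)$, with $R\in\St(d,d)$ (so $S=I$) and $w\in\St(d,1)$.
Let the inputs be $x_i=Be_i$, and let $d=2m$ with $m\ge n+1$ a power of two.
Fix signs $\epsilon$, and define $R$ as follows:
\begin{itemize}
\item if $\epsilon_i=+1$, send $x_i$ to $Bm^{-1/2}(h_{k(i)},0)$, where the $h_{k(i)}$ are distinct non-constant rows of a Sylvester--Hadamard matrix, each with $m/2$ entries equal to $+1$;
\item if $\epsilon_i=-1$, send $x_i$ to $-Be_{m+i}$.
\end{itemize}
These images are $B$ times an orthonormal family, so such an orthogonal $R$ exists. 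Put $w=m^{-1/2}(\mathbf{1}_m,0)$. Then $f(x_i)=B/2$ when $\epsilon_i=+1$ and $f(x_i)=0$ otherwise. Hence $\widehat{\mathfrak R}_n\ge B/4$ for every $n$, while both operator norms equal one. So the width-free first inequality cannot be obtained by this route. A valid peeling bound must use $\|W_l\|_F=\sqrt{\tr(S_l)}\le\sqrt{r_l\beta_l}$, or carry an explicit width factor.

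Two smaller remarks. First, you should not let $\epsilon_{\mathrm{stab}}$ absorb ``the gap between the peeling bound and its precise form'' or ``data-dependence beyond class membership.'' A uniform bound over the clipped class already covers data-dependent $(Q_l,S_l)$. Moreover, $\epsilon_{\mathrm{stab}}$ is a specific algorithm-dependent quantity from the stability lemma and says nothing about a supremum over the class; the paper merely appends it as a nonnegative term. Second, your Step~3 is more careful than the paper's. The Frobenius-norm template is the Neyshabur-style PAC-Bayes bound, not the literal Bartlett--Foster--Telgarsky theorem. The $(2,1)$-to-Frobenius passage costs width, depth and logarithmic factors, so $C_{\mathrm{BFT}}$ is not the universal constant the paper claims. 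The scale union bound you add is already built into those theorems. The real obstacle is therefore Step~2, not Step~3.
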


\noindent\textbf{Implication.}
The same matrix $S_l$ that gives modeling flexibility also exposes the quantities controlling spectral capacity. Fixed Stiefel is the corner $S_l=I$; dense layers require separate spectral control; \ours{} sits between them with learnable but clipped spectrum. The proof is in Appendix~\ref{app:proof_generalization}.

\begin{theorem}[Network and graph propagation spectra]
\label{thm:propagation_spectrum}
\label{thm:mfv4_propagation_spectrum}
Let $W_l=Q_lS_l^{1/2}$ with $Q_l\in\St(p_l,r_l)$ and $S_l\in\mathcal S_{\alpha_l,\beta_l}$. For a differentiable feed-forward network with layer Jacobians $D_l$ and end-to-end Jacobian
\begin{equation}
    J=D_LW_L\cdots D_1W_1,
    \label{eq:main_jacobian_product}
\end{equation}
we have
\begin{equation}
    \|J\|_{\op}
    \leq
    \prod_l\|D_l\|_{\op}\sqrt{\lambda_{\max}(S_l)} .
    \label{eq:main_jacobian_upper}
\end{equation}
If each factor is full rank on the propagated subspace, then
\begin{equation}
    \sigma_{\min}(J)
    \geq
    \prod_l\sigma_{\min}(D_l)\sqrt{\lambda_{\min}(S_l)} .
    \label{eq:main_jacobian_lower}
\end{equation}
For a graph layer $H^{(l+1)}=\widehat A H^{(l)}W_l$,
\begin{equation}
    \operatorname{vec}(H^{(l+1)})
    =
    (W_l^\top\otimes\widehat A)\operatorname{vec}(H^{(l)}),
    \qquad
    \sigma(W_l^\top\otimes\widehat A)
    =
    \{\sigma_i(W_l)\sigma_j(\widehat A)\}_{i,j}.
    \label{eq:main_graph_kronecker}
\end{equation}
\end{theorem}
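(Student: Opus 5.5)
The plan is to reduce all three claims to Proposition~\ref{prop:spectral_identity} together with elementary operator-norm and singular-value facts for products. Proposition~\ref{prop:spectral_identity} gives $\|W_l\|_{\op}=\sqrt{\lambda_{\max}(S_l)}$ and $\sigma_{\min}(W_l)=\sqrt{\lambda_{\min}(S_l)}$, the latter being the smallest of the $r_l$ singular values of the full-column-rank map $W_l$.

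For the upper bound \eqref{eq:main_jacobian_upper}, apply submultiplicativity of $\|\cdot\|_{\op}$ to $J=D_LW_L\cdots D_1W_1$. This gives $\|J\|_{\op}\le\prod_l\|D_l\|_{\op}\|W_l\|_{\op}$. Substituting $\|W_l\|_{\op}=\sqrt{\lambda_{\max}(S_l)}$ then finishes this part.

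For the lower bound \eqref{eq:main_jacobian_lower}, use the reverse inequality $\sigma_{\min}(AB)\ge\sigma_{\min}(A)\sigma_{\min}(B)$. It holds whenever $B$ is injective on the relevant domain and $A$ is injective on the range of $B$, via
\[
\|ABx\|\ge\sigma_{\min}(A)\|Bx\|\ge\sigma_{\min}(A)\sigma_{\min}(B)\|x\|.
\]
The hypothesis ``each factor is full rank on the propagated subspace'' is exactly what makes this valid. I read $\sigma_{\min}(D_l)$ as the smallest singular value of $D_l$ restricted to the range of $W_l\cdots$, and $\sigma_{\min}(W_l)$ likewise. Since $W_l$ has full column rank by Proposition~\ref{prop:spectral_identity}, $\|W_lx\|\ge\sqrt{\lambda_{\min}(S_l)}\|x\|$ holds globally on $\R^{r_l}$. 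Chaining requires the input of $W_{l+1}$ to match the output dimension of $D_l$, and I assume conformable dimensions. Inducting over the $2L$ factors gives the product bound.

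The main obstacle is making precise the meaning of $\sigma_{\min}$ for the non-square $D_l$ and the rectangular chain. A vector in the range of $W_l$ must not be annihilated by $D_l$, and this is only true under the stated rank assumption. So I will state explicitly that $\sigma_{\min}(D_l)$ denotes $\inf_{y\in\mathrm{range}(W_l\cdots D_1W_1),\|y\|=1}\|D_ly\|$. The inequality chain is then immediate. I also note that $J$ is injective only if $r_1$ is the smallest width, which is implicit in the hypothesis.

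For the graph identity \eqref{eq:main_graph_kronecker}, use the standard identity $\operatorname{vec}(AXB)=(B^\top\otimes A)\operatorname{vec}(X)$ with $A=\widehat A$, $X=H^{(l)}$ and $B=W_l$. For the spectrum, take SVDs $W_l^\top=U_1\Sigma_1V_1^\top$ and $\widehat A=U_2\Sigma_2V_2^\top$. The mixed-product property gives
\[
W_l^\top\otimes\widehat A=(U_1\otimes U_2)(\Sigma_1\otimes\Sigma_2)(V_1\otimes V_2)^\top.
\]
Kronecker products of orthogonal matrices are orthogonal, and $\Sigma_1\otimes\Sigma_2$ is (rectangular) diagonal with entries $\sigma_i(W_l)\sigma_j(\widehat A)$. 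This is therefore an SVD, and the singular values are as claimed. Combined with Proposition~\ref{prop:spectral_identity}, they equal $\sqrt{\lambda_i(S_l)}\,\sigma_j(\widehat A)$, so they lie in $[\sqrt{\alpha_l}\sigma_j(\widehat A),\sqrt{\beta_l}\sigma_j(\widehat A)]$.
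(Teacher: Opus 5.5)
Your proposal is correct and follows the paper's proof essentially step for step. The paper likewise uses submultiplicativity with $\|W_l\|_{\op}=\sqrt{\lambda_{\max}(S_l)}$ for the upper bound, the chained inequality $\sigma_{\min}(AB)\ge\sigma_{\min}(A)\sigma_{\min}(B)$ under the full-rank hypothesis for the lower bound, and the vec identity with the mixed-product Kronecker SVD for the graph layer. Your explicit definition of $\sigma_{\min}(D_l)$ as an infimum over the propagated range usefully sharpens what the paper leaves informal, since the product inequality can fail for rectangular factors if $\sigma_{\min}$ is read as the unrestricted smallest singular value.
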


\noindent\textbf{Implication.}
The learned SPD spectrum controls not only isolated layer singular values but also end-to-end Jacobian bounds and graph propagation operators. In graph layers, the spectral modes of $\widehat A$ are multiplied directly by the learned singular values $\sqrt{\lambda_i(S_l)}$. The proof is in Appendix~\ref{app:proof_propagation_spectrum}.

\begin{proposition}[First-order spectral dynamics]
\label{obs:mf_spectral_dynamics}
\label{prop:mfv4_eigenvalue_dynamics}
Let $S_t$ have simple eigenpairs $(\lambda_{i,t},u_{i,t})$, let $M_t\in\Sym(r)$, and define
\[
    S_{t+1}
    =
    S_t^{1/2}
    \exp\!\left(-\tau S_t^{-1/2}M_tS_t^{-1/2}\right)
    S_t^{1/2},
    \qquad
    \tau=\rho_{\mathrm{geo}}a_t.
\]
Then
\begin{equation}
    \log\lambda_i(S_{t+1})
    =
    \log\lambda_{i,t}
    -
    \tau\frac{u_{i,t}^\top M_tu_{i,t}}{\lambda_{i,t}}
    +
    \mathcal O\!\left(
    \tau^2\|S_t^{-1/2}M_tS_t^{-1/2}\|_F^2
    \right).
    \label{eq:main_log_eigen_dynamics}
\end{equation}
\end{proposition}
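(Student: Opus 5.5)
Write $X=S_t^{-1/2}M_tS_t^{-1/2}\in\Sym(r)$. The plan is to expand the update to first order in $\tau$ and then apply simple-eigenvalue perturbation theory to the result. Because $X$ is symmetric, the exponential has the expansion $\exp(-\tau X)=I-\tau X+E$. The remainder satisfies $\|E\|_F\le \tfrac12\tau^2\|X\|_F^2 e^{\tau\|X\|_{\op}}$, which is $\mathcal O(\tau^2\|X\|_F^2)$ on the bounded-step regime we consider. Substituting gives
\[
S_{t+1}=S_t-\tau S_t^{1/2}XS_t^{1/2}+S_t^{1/2}ES_t^{1/2}=S_t-\tau M_t+S_t^{1/2}ES_t^{1/2}.
\]
So $S_{t+1}=S_t+\Delta$, with $\Delta=-\tau M_t+\mathcal O(\|S_t\|_{\op}\tau^2\|X\|_F^2)$. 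We treat $\|S_t\|_{\op}$ as a fixed constant absorbed into the $\mathcal O$; in the clipped regime it is at most $\beta$.

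Next we apply first-order perturbation theory to the simple eigenvalue $\lambda_{i,t}$ with unit eigenvector $u_{i,t}$. The Hellmann--Feynman formula, with a second-order remainder, gives
\[
\lambda_i(S_t+\Delta)=\lambda_{i,t}+u_{i,t}^\top\Delta u_{i,t}+\mathcal O\!\left(\|\Delta\|_{\op}^2/\mathrm{gap}_i\right),
\]
where $\mathrm{gap}_i$ is the distance from $\lambda_{i,t}$ to the rest of the spectrum. Substituting $\Delta$ yields
\[
\lambda_i(S_{t+1})=\lambda_{i,t}-\tau\,u_{i,t}^\top M_tu_{i,t}+\mathcal O(\tau^2\|X\|_F^2).
\]
To write $\|\Delta\|_{\op}^2$ in terms of $X$, use $\tau\|M_t\|=\tau\|S_t^{1/2}XS_t^{1/2}\|\le\tau\|S_t\|_{\op}\|X\|_F$. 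Its square is therefore $\mathcal O(\tau^2\|X\|_F^2)$, with the constant depending on $\|S_t\|_{\op}$ and $\mathrm{gap}_i$.

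Finally we take logarithms. Write $\lambda_i(S_{t+1})=\lambda_{i,t}(1+\delta)$, where
\[
\delta=-\tau\,u_{i,t}^\top M_tu_{i,t}/\lambda_{i,t}+\mathcal O(\tau^2\|X\|_F^2/\lambda_{i,t}).
\]
Then $\log(1+\delta)=\delta+\mathcal O(\delta^2)$. The first-order term is bounded by $|u_{i,t}^\top M_tu_{i,t}|/\lambda_{i,t}\le\|S_t\|_{\op}\|X\|_F/\lambda_{i,t}$, so $\delta^2=\mathcal O(\tau^2\|X\|_F^2)$ and we obtain \eqref{eq:main_log_eigen_dynamics}.

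A cleaner alternative works in the affine-invariant frame. Note that $u_{i,t}^\top M_tu_{i,t}/\lambda_{i,t}=u_{i,t}^\top Xu_{i,t}$, because $S_t^{1/2}u_{i,t}=\lambda_{i,t}^{1/2}u_{i,t}$. This suggests perturbing the symmetric matrix $\exp(-\tau X)$ conjugated appropriately: the eigenvalues of $S_{t+1}$ coincide with those of $S_t\exp(-\tau X)$, which is similar to a symmetric matrix. The log-scale statement then follows with constants depending only on the condition number of $S_t$.

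The main obstacle is controlling the constants hidden in $\mathcal O(\cdot)$ uniformly. The eigenvalue perturbation remainder depends on the spectral gap, and the reduction from $\|M_t\|$ to $\|X\|_F$ depends on $\|S_t\|_{\op}$ and $\lambda_{i,t}^{-1}$. I would state explicitly that the implied constant depends on $\kappa(S_t)$ and $\mathrm{gap}_i/\lambda_{i,t}$. In the clipped regime $\kappa(S_t)\le\beta/\alpha$, and simplicity of the eigenvalue gives a positive gap, so the constant is finite. This is the sense in which the statement is local.
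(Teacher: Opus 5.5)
Your proposal is correct and follows essentially the paper's route. The paper differentiates the curve $S(\tau)$ to get $\dot S(0)=-M_t$, applies the simple-eigenvalue (Rayleigh--Ritz/Hellmann--Feynman) derivative formula, and then uses the chain rule for $\log$ with a Taylor remainder; your discrete linearization $\Delta=-\tau M_t+\mathcal O(\tau^2\|X\|_F^2)$, first-order eigenvalue perturbation and $\log(1+\delta)=\delta+\mathcal O(\delta^2)$ mirror these steps, and your version has the added benefit of making explicit that the hidden constant depends on $\kappa(S_t)$ and $\mathrm{gap}_i/\lambda_{i,t}$, which the paper leaves implicit in ``a constant multiple \ldots on a sufficiently small neighborhood of zero.''
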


\noindent\textbf{Implication.}
Persistent SPD pressure changes log-eigenvalues directly. This gives the spectral-adaptation plots a mathematical interpretation: they are trajectories of the singular spectrum of the realized layer. The proof is in Appendix~\ref{app:proof_spectral_dynamics}.

\begin{proposition}[Structured SGD preconditioning]
\label{prop:mfv4_mf_structured_preconditioner}
Let $S=V\diag(s_1,\ldots,s_r)V^\top$ and suppose that, in a local tangent frame aligned with the SPD eigenvectors, the realized-weight Hessian has eigenvalues $\mu_1\geq\cdots\geq\mu_r>0$. Define
\begin{equation}
    \kappa_{\mathrm{eff}}(S)
    =
    \frac{\max_i s_i\mu_i}{\min_i s_i\mu_i},
    \qquad
    \kappa=\frac{\mu_1}{\mu_r}.
    \label{eq:main_effective_condition}
\end{equation}
In the local strongly convex and smooth comparison neighborhood, gradient descent with step size $1/\max_i s_i\mu_i$ contracts as
\begin{equation}
    \mathcal J(Q_T,S)-\mathcal J(Q_\star,S)
    \leq
    \left(1-\kappa_{\mathrm{eff}}(S)^{-1}\right)^T
    \left[\mathcal J(Q_0,S)-\mathcal J(Q_\star,S)\right].
    \label{eq:main_preconditioned_rate}
\end{equation}
If $a/\mu_i\leq s_i\leq b/\mu_i$, then $\kappa_{\mathrm{eff}}(S)\leq b/a$.
\end{proposition}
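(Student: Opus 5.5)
The plan is to reduce the statement to the textbook linear-convergence argument for gradient descent on a smooth, strongly convex quadratic model, written in the coordinates in which $S$ acts diagonally. In the comparison neighborhood, and in the local tangent frame aligned with the eigenvectors $V$ of $S$, I would take the objective in $Q$ (with $S$ frozen) to be
\begin{equation*}
    \mathcal J(Q,S)-\mathcal J(Q_\star,S)\approx\tfrac12\textstyle\sum_i s_i\mu_i z_i^2,
\end{equation*}
where $z_i$ are the tangent coordinates of $Q-Q_\star$ along the eigenvector directions.

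The factor $s_i$ comes from the chain rule through $W=QS^{1/2}$. The parametrization rescales the $i$-th eigendirection by $s_i^{1/2}$. The realized-weight Hessian, whose eigenvalues are $\mu_i$ in this frame, is therefore pulled back to an effective Hessian $\diag(s_i\mu_i)$ in $Q$-coordinates. This is exactly the sense in which $S$ acts as a structured preconditioner.

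I will take this reading, together with the hypothesis that the frame diagonalizes both $S$ and the Hessian, as the precise meaning of the statement's local assumption. The whole analysis is then confined to the comparison neighborhood, where these quadratic bounds hold.

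Given that reduction, the rate follows from the standard argument:
\begin{enumerate}[noitemsep]
\item The effective objective is $L$-smooth with $L=\max_i s_i\mu_i$ and $m$-strongly convex with $m=\min_i s_i\mu_i$.
\item The descent lemma with step size $1/L$ gives $\mathcal J(Q_{k+1})\le\mathcal J(Q_k)-\tfrac1{2L}\|\nabla\mathcal J(Q_k)\|^2$.
\item The Polyak--Łojasiewicz inequality $\|\nabla\mathcal J\|^2\ge 2m(\mathcal J-\mathcal J_\star)$, which follows from strong convexity, turns this into the per-step contraction $(1-m/L)=(1-\kappa_{\mathrm{eff}}(S)^{-1})$.
\item Iterating $T$ times gives \eqref{eq:main_preconditioned_rate}.
\end{enumerate}
For the exact quadratic, the same contraction can be checked coordinatewise: each $z_i$ is multiplied by $(1-s_i\mu_i/L)$ per step. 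The final claim is immediate. If $a/\mu_i\le s_i\le b/\mu_i$, then $s_i\mu_i\in[a,b]$ for every $i$, so $\max_i s_i\mu_i\le b$ and $\min_i s_i\mu_i\ge a$, giving $\kappa_{\mathrm{eff}}\le b/a$. This is independent of $\kappa=\mu_1/\mu_r$.

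The main obstacle is the first step: justifying that the Hessian in $Q$ is genuinely $\diag(s_i\mu_i)$ in the chosen frame. This is delicate because the Stiefel constraint, the curvature terms from the retraction, and the mixed coordinates of $Q$ (which has $p\times r$ entries, not $r$) do not automatically align with the eigenvectors of $S$.

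My first attempt would be to declare the alignment a hypothesis of the "comparison neighborhood", as the statement's phrasing permits. I would then bound the non-diagonal and retraction terms as higher order, so that the smoothness and strong-convexity constants are $\max_i s_i\mu_i$ and $\min_i s_i\mu_i$ up to factors that can be absorbed. If that proves unclean, the fallback is to state the result for the exact local quadratic model, for which the computation above is rigorous as written.
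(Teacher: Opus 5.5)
Your proposal is essentially the paper's proof: the paper likewise uses aligned tangent directions with $\|D_Q(QS^{1/2})[\Xi_i]\|_F=\sqrt{s_i}$, so that $y_i=\sqrt{s_i}\,z_i$ and the pulled-back quadratic form is $\sum_i s_i\mu_i z_i^2$. It then simply \emph{assumes} that the fixed-$S$ chart objective is $\min_i s_i\mu_i$-strongly convex and $\max_i s_i\mu_i$-smooth with the iterates staying in the neighborhood, invokes the standard step-$1/L$ contraction, and finishes with $a\le s_i\mu_i\le b$.

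Your ``declare it a hypothesis'' option is exactly that route, and it is the one that yields the stated rate with exact constants. Do not rely on the ``absorb the extra terms as higher order'' alternative. At best it gives a rate of $1-c\,\kappa_{\mathrm{eff}}(S)^{-1}$. It also fails as described, because the Stiefel Weingarten term $-\xi P_\star$ in the Riemannian Hessian at $Q_\star$ is of the same order as the main term, not higher order. Here $P_\star=\mathrm{sym}(Q_\star^\top\bar G_\star)$ is the rejected pressure, which is generally nonzero at a Riemannian critical point.
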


\noindent\textbf{Implication.}
Fixed-Stiefel SGD has $s_i=1$, so its local conditioning is inherited from the Hessian. \ours{} can learn singular scaling that reduces this effective condition number, explaining why the gains are largest under SGD in the sequence experiments. The proof is in Appendix~\ref{app:proof_structured_preconditioner}.

\section{Experiments}
\label{sec:experiments}

The experiments isolate the value of learning the SPD spectrum. Across most paired task--optimizer cells, \ours{} improves the corresponding fixed-spectrum Stiefel layer; the strongest and most stable gains appear in recurrent language modeling and MLP hidden-layer settings, while convolutional classifier heads mark a clearer boundary case. We therefore present the results by task family rather than as a flat list of endpoints: recurrent language models on WikiText, tabular MLPs on UCI benchmarks, image-derived MLPs on Fashion-MNIST and CIFAR, a Mini-Transformer feed-forward block, and convolutional classifier heads on CIFAR \citep{merity2016pointer,kohavi1996adult,blackard1999covertype,uci1998repository,cortez2009wine,xiao2017fashionmnist,krizhevsky2009learning,lecun1998gradient,he2016resnet,vaswani2017attention}.

In every FS/MF pair, the architecture, optimizer family, data split, and Stiefel update are matched; only the geometry update for $S$ is enabled or disabled. Positive $\Delta$ values favor \ours{}: for perplexity, $\Delta=\mathrm{PPL}(\mathrm{FS})-\mathrm{PPL}(\mathrm{MF})$; for accuracy, $\Delta=\mathrm{Acc}(\mathrm{MF})-\mathrm{Acc}(\mathrm{FS})$. The evidence is organized around one question: when the Stiefel basis is held fixed as the architectural prior, does freeing its spectrum improve the paired layer?

\paragraph{Reporting protocol.}
The main convergence tables use a fixed reporting protocol: up to 1000 epochs with plateau-triggered early stopping, minimum 300 epochs, and patience 100. The same reporting rule is applied before comparing each FS/MF pair. Appendix~\ref{app:number_audit} keeps a compact shorter-budget paired replicate check from an earlier reporting pass; those rows show the same direction in representative sequence, tabular, and feed-forward settings, but they are not mixed into the 1000-epoch endpoint. This separation matters because some archived boundary baselines were not rerun under the convergence protocol; we use them only to mark the scope of the Stiefel-family comparison.

\subsection{LSTM/WikiText-2}
\label{sec:paired_results}
\label{sec:lstm}

The main sequence comparison is an LSTM language model \citep{hochreiter1997lstm} on WikiText-2 \citep{merity2016pointer}. The hidden-to-vocabulary projection is wrapped as either a fixed Stiefel layer or a \ours{} layer; the rest of the model is unchanged. Table~\ref{tab:lstm} and Figure~\ref{fig:lstm} show the central experimental pattern. Learning the SPD spectrum improves the paired Stiefel projection under both Adam \citep{kingma2014adam} and stochastic gradient descent \citep{robbins1951sgd}, with a much larger separation under SGD. This is the cleanest empirical instance of the theory: when the external optimizer supplies little adaptive scaling, the layer-internal spectrum can act as structured preconditioning while the Stiefel basis remains intact.

\begin{table}[t]
\centering
\caption{\textbf{LSTM/WikiText-2 convergence.} The wrapped layer is the hidden-to-vocabulary projection; all other architecture and training choices are matched between FS and MF. Lower validation perplexity is better, and positive $\Delta=\mathrm{PPL}(\mathrm{FS})-\mathrm{PPL}(\mathrm{MF})$ favors \ours{}.}
\label{tab:lstm}
\begin{tabular}{lccc}
\toprule
Optimizer & FS & MF & $\Delta$ \\
\midrule
Adam & $255.99$ & $\mathbf{212.37}$ & $+43.62$ \\
SGD & $397.05$ & $\mathbf{212.54}$ & $+184.51$ \\
\bottomrule
\end{tabular}
\end{table}

\begin{figure}[t]
    \centering
    \includegraphics[width=0.98\linewidth]{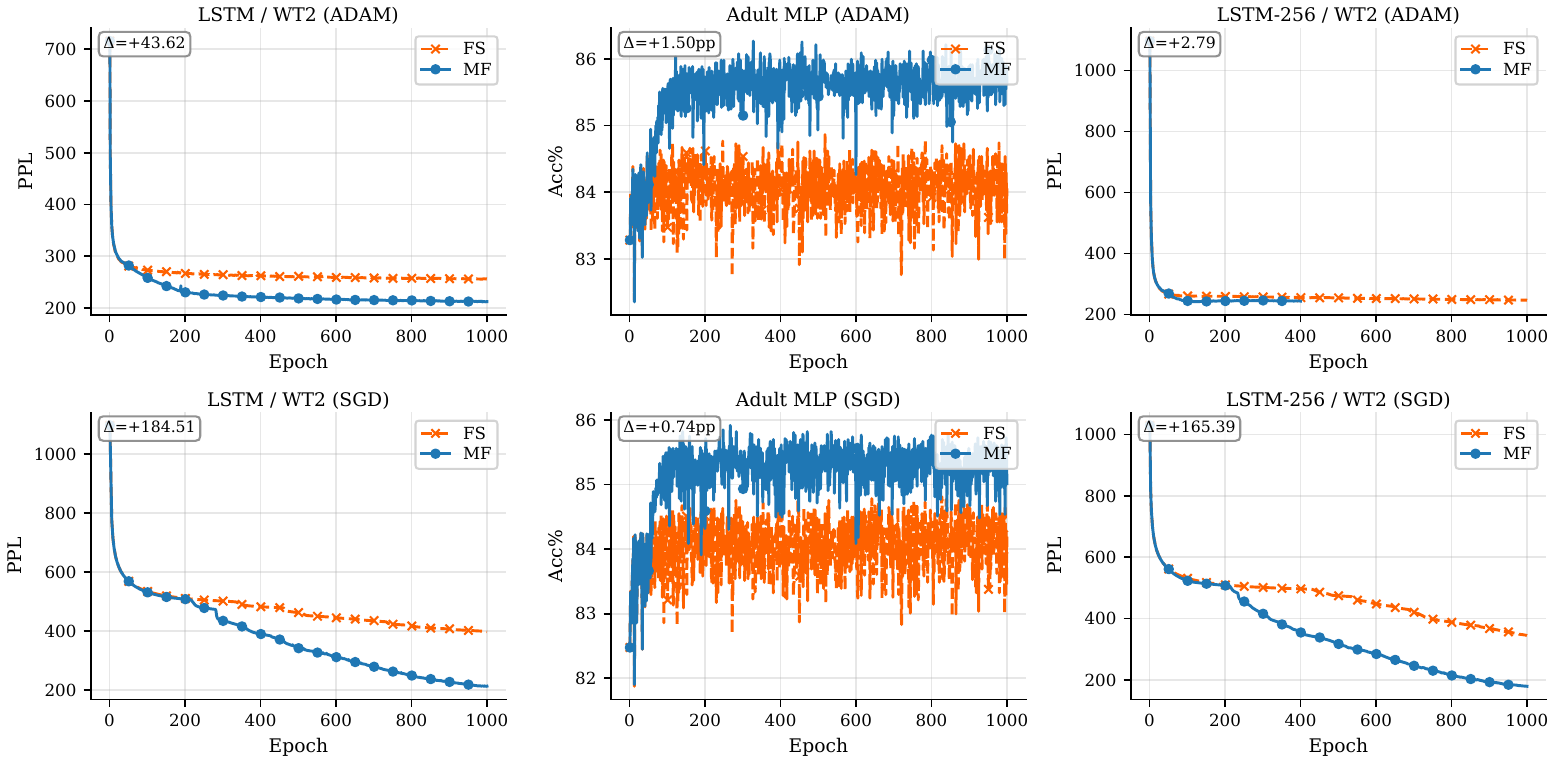}
    \caption{\textbf{Main convergence comparisons.} Paired FS and MF trajectories for the headline sequence and tabular settings.}
    \label{fig:lstm}
    \label{fig:adult}
\end{figure}

\subsection{Adult MLP}
\label{sec:adult}

Adult Income is a UCI tabular classification benchmark derived from census records \citep{kohavi1996adult,uci1998repository}. We use a multilayer perceptron and wrap the hidden linear layers, following the same paired FS/MF protocol. Table~\ref{tab:adult} and Figure~\ref{fig:adult} show that the tabular setting is positive for both optimizers, although the effect is naturally smaller than in recurrent language modeling. This matters because the wrapped object is now a stack of hidden layers rather than a single output projection. The repeated outcome is that the fixed unit spectrum is not the best member of the Stiefel family once the basis is allowed to keep its orthogonality and learn its gains.

\begin{table}[t]
\centering
\caption{\textbf{Adult MLP convergence.} Hidden linear layers are wrapped with either FS or MF. Higher test accuracy is better, and positive $\Delta=\mathrm{Acc}(\mathrm{MF})-\mathrm{Acc}(\mathrm{FS})$ favors \ours{}.}
\label{tab:adult}
\begin{tabular}{lccc}
\toprule
Optimizer & FS & MF & $\Delta$ \\
\midrule
Adam & $84.13$ & $\mathbf{85.63}$ & $+1.50\%$ \\
SGD & $84.28$ & $\mathbf{85.01}$ & $+0.74\%$ \\
\bottomrule
\end{tabular}
\end{table}

\subsection{Mini-Transformer FFN}
\label{sec:transformer}

We also test a two-block Mini-Transformer on WikiText-2 \citep{vaswani2017attention,merity2016pointer}. Only the feed-forward network linear layers are wrapped; attention projections and embeddings remain unconstrained. This cell was not part of the 1000-epoch convergence batch, so we keep the earlier exploratory protocol and report it separately from the converged main cells. Table~\ref{tab:transformer} and Figure~\ref{fig:transformer} show a small but consistent architecture-diversity signal: the relaxation helps in the Transformer feed-forward block without changing attention. We treat this as evidence that the mechanism is not tied only to recurrent projections, not as a claim about large-scale Transformer training.

\begin{table}[t]
\centering
\caption{\textbf{Mini-Transformer FFN exploratory comparison.} Only feed-forward linear layers are wrapped; attention and embeddings are not constrained. Lower validation perplexity is better. This table follows the earlier short-budget paired replicate protocol rather than the 1000-epoch convergence protocol.}
\label{tab:transformer}
\begin{tabular}{lccc}
\toprule
Optimizer & FS & MF & $\Delta$ \\
\midrule
Adam & $80.29\pm0.09$ & $\mathbf{80.06\pm0.15}$ & $+0.23\pm0.10$ \\
SGD & $301.98\pm2.47$ & $\mathbf{300.33\pm2.45}$ & $+1.65\pm0.05$ \\
\bottomrule
\end{tabular}
\end{table}

\begin{figure}[t]
    \centering
    \includegraphics[width=0.80\linewidth]{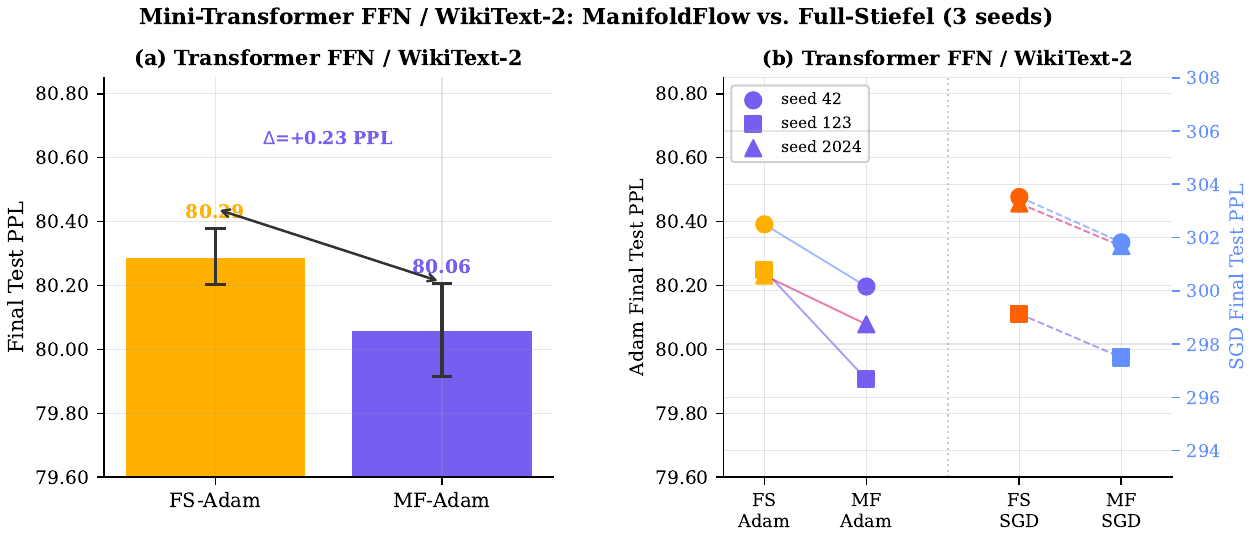}
    \caption{\textbf{Mini-Transformer FFN trajectories.} Validation perplexity curves when only feed-forward linear layers are wrapped.}
    \label{fig:transformer}
\end{figure}

\subsection{Spectral adaptation}
\label{sec:spectral}

Proposition~\ref{obs:mf_spectral_dynamics} predicts that persistent SPD pressure changes the log-eigenvalues of $S_t$. Figure~\ref{fig:spectral} shows the recorded $S_t$ over the LSTM/WikiText-2 MF-Adam trajectory. The spectrum moves decisively away from the identity and then saturates near the clipping boundary, so the learned factor is not a dormant parameter. The recurrent projection mainly uses a shared scaling phase with only transient anisotropy, while the CIFAR-100 ResNet traces \citep{krizhevsky2009learning,he2016resnet} show stronger layer-dependent anisotropy in deeper classifier-head layers. These traces connect the identity $W^\top W=S$ to observed training dynamics: the SPD factor is active, bounded, and architecture dependent.

\begin{figure}[t]
    \centering
    \includegraphics[width=0.98\linewidth]{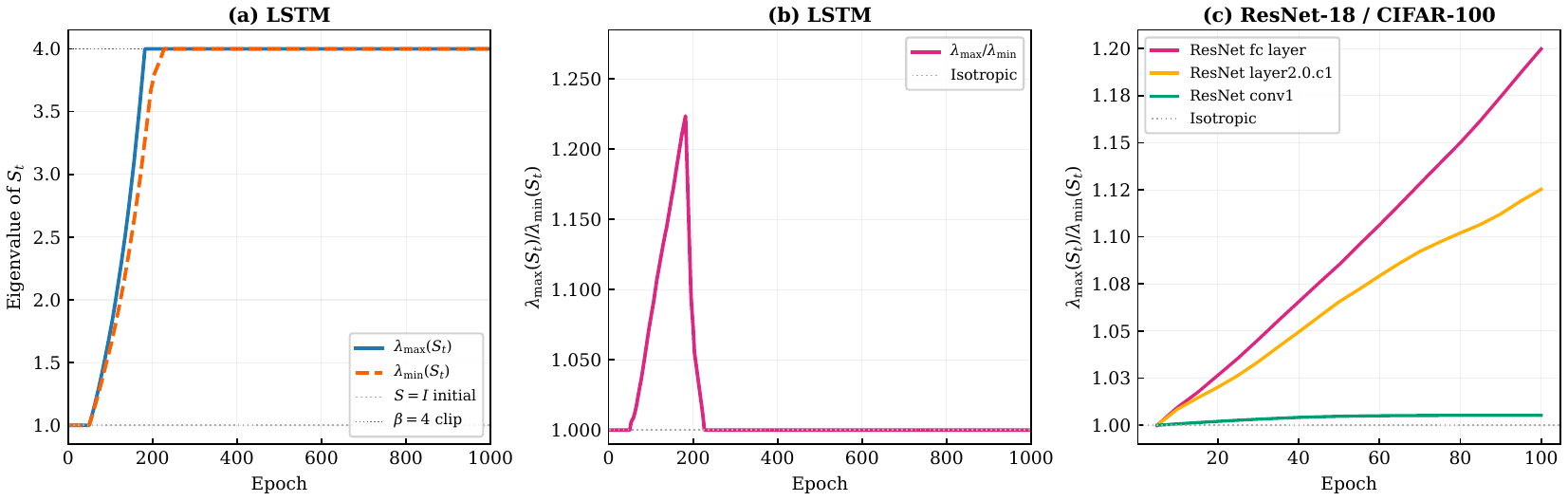}
    \caption{\textbf{Spectral adaptation.} \textbf{(a,b)} LSTM/WikiText-2 MF-Adam eigenvalue traces for $S_t$. \textbf{(c)} CIFAR-100 ResNet-18 per-layer eigenvalue-ratio traces.}
    \label{fig:spectral}
\end{figure}

\subsection{Pressure and ablations}
\label{sec:pressure_ablation}
\label{sec:ablation}

Figure~\ref{fig:persistence} measures the cosine similarity $\cos(P_t,P_{t-1})$ on Adult MLP. The pressure direction becomes increasingly coherent in deeper layers, so the default signal is not merely random noise. Table~\ref{tab:ablation} gives the counterweight: removing the EMA, removing the geometry gate, or replacing pressure with random symmetric matrices changes the endpoints only marginally on Adult and Covertype \citep{kohavi1996adult,blackard1999covertype}. The mechanism supported by these data is therefore bounded SPD spectrum learning; pressure is a practical default for moving that spectrum, not the sole explanation for the paired gains.

\begin{figure}[t]
    \centering
    \includegraphics[width=0.84\linewidth]{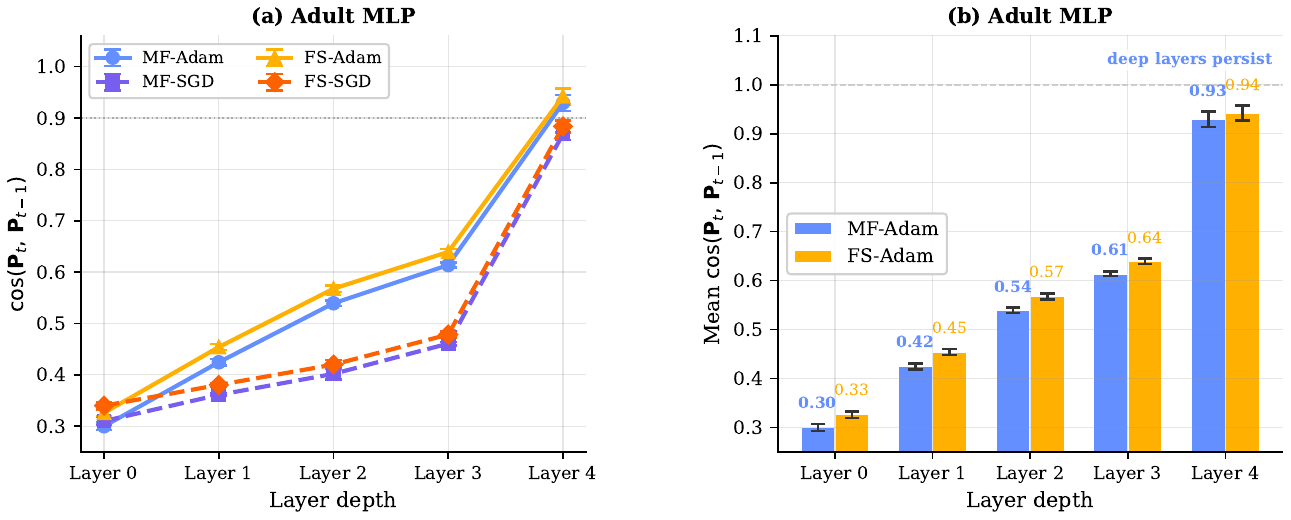}
    \caption{\textbf{Pressure persistence on Adult MLP.} Cosine similarity between consecutive pressure matrices $P_t$ and $P_{t-1}$ by layer.}
    \label{fig:persistence}
\end{figure}

\begin{table}[t]
\centering
    \caption{\textbf{Ablation audit for MF-Adam.} ``A2'' removes the pressure EMA, ``A3'' removes the geometry gate, and ``A6'' replaces pressure with random symmetric directions. Deltas are full MF minus ablation and are reported as accuracy changes in \%. All reported deltas are within $0.1\%$, so these cells support the spectrum-relaxation story more than a pressure-specific story.}
\label{tab:ablation}
\begin{tabular}{llccc}
\toprule
Ablation & Task & Full MF & Ablation & $\Delta$ \\
\midrule
No EMA (A2) & Adult & $84.92$ & $85.00$ & $-0.08$ \\
No EMA (A2) & Covertype & $73.96$ & $73.94$ & $+0.02$ \\
No gate (A3) & Adult & $84.92$ & $84.92$ & $+0.00$ \\
No gate (A3) & Covertype & $73.96$ & $74.03$ & $-0.07$ \\
Random pressure (A6) & Adult & $84.92$ & $84.96$ & $-0.04$ \\
Random pressure (A6) & Covertype & $73.96$ & $73.91$ & $+0.06$ \\
\bottomrule
\end{tabular}
\end{table}

\subsection{Boundary of the paired claim}
\label{sec:claim_boundary}

The paired result should not be read as broad dominance over dense or diagonal-spectrum alternatives. The boundary audit in Table~\ref{tab:app_boundary} reports archived comparisons collected under the earlier protocol: on Covertype \citep{blackard1999covertype,uci1998repository}, unconstrained dense MLP baselines \citep{rumelhart1986learning,glorot2010understanding,nair2010relu} and diagonal-spectrum baselines can be much stronger than the Stiefel-family models. The convergence batch did not rerun those baselines, so we use the archived values only as a scope marker. The converged paired Covertype result in Table~\ref{tab:tabular_image_mlp} should therefore be interpreted inside the FS/MF comparison rather than as a replacement for the boundary audit. The LSTM hidden-projection audit reaches the same conclusion against Dense, $Q\operatorname{diag}(s)$, and Intrinsic Muon alternatives \citep{jordan2024muon,li2026intrinsicmuon}. These results define the paper's scope: \ours{} improves fixed-spectrum Stiefel layers when the Stiefel prior is appropriate; it is not a universal replacement for unconstrained dense layers.

\subsection{Additional settings}
\label{sec:additional}

We extend the paired comparison across recurrent architecture, language corpus, hidden-state scale, tabular dataset, flattened-image MLPs, and convolutional classifier heads. These experiments use the same converged reporting protocol as the main LSTM and Adult cells, except for Wine Quality, which remains an earlier single-run tabular sanity check. The goal is not to turn every architecture into a headline result, but to identify where the spectrum relaxation reliably helps and where the Stiefel prior becomes a limiting modeling choice.

\paragraph{Sequence models.}
Table~\ref{tab:additional_sequence} reports five additional sequence settings, including GRU language models \citep{cho2014gru}. The pattern is uniformly positive across optimizer cells, and the largest gains again occur in recurrent language-model projections, especially under SGD. Figure~\ref{fig:additional_sequence} shows that this is not a single endpoint artifact: the MF trajectories separate from their paired FS baselines across the extended recurrent settings. Together with the main LSTM/WikiText-2 cell, these runs are the strongest evidence that learning the spectrum matters when the Stiefel basis is used as a recurrent projection prior.

\begin{table}[t]
\centering
\caption{\textbf{Additional sequence-model convergence.} The wrapped layer is the recurrent language model's hidden-to-vocabulary projection. Lower validation perplexity is better. Every listed optimizer setting favors MF, with the largest gains on GRU/WikiText-2.}
\label{tab:additional_sequence}
\begin{tabular}{llccc}
\toprule
Setting & Optimizer & FS & MF & $\Delta$ \\
\midrule
LSTM-256 / WikiText-2 & Adam & $247.17$ & $\mathbf{244.38}$ & $+2.79$ \\
LSTM-256 / WikiText-2 & SGD & $344.19$ & $\mathbf{178.80}$ & $+165.39$ \\
GRU / WikiText-2 & Adam & $231.99$ & $\mathbf{173.54}$ & $+58.45$ \\
GRU / WikiText-2 & SGD & $423.06$ & $\mathbf{204.46}$ & $+218.61$ \\
LSTM / WikiText-103 & Adam & $279.55$ & $\mathbf{246.78}$ & $+32.78$ \\
LSTM / WikiText-103 & SGD & $472.41$ & $\mathbf{339.42}$ & $+132.99$ \\
GRU-256 / WikiText-2 & Adam & $269.08$ & $\mathbf{227.30}$ & $+41.77$ \\
GRU-256 / WikiText-2 & SGD & $342.00$ & $\mathbf{169.40}$ & $+172.60$ \\
GRU / WikiText-103 & Adam & $244.45$ & $\mathbf{199.21}$ & $+45.24$ \\
GRU / WikiText-103 & SGD & $498.36$ & $\mathbf{375.10}$ & $+123.26$ \\
\bottomrule
\end{tabular}
\end{table}

\begin{figure}[t]
    \centering
    \includegraphics[width=0.98\linewidth]{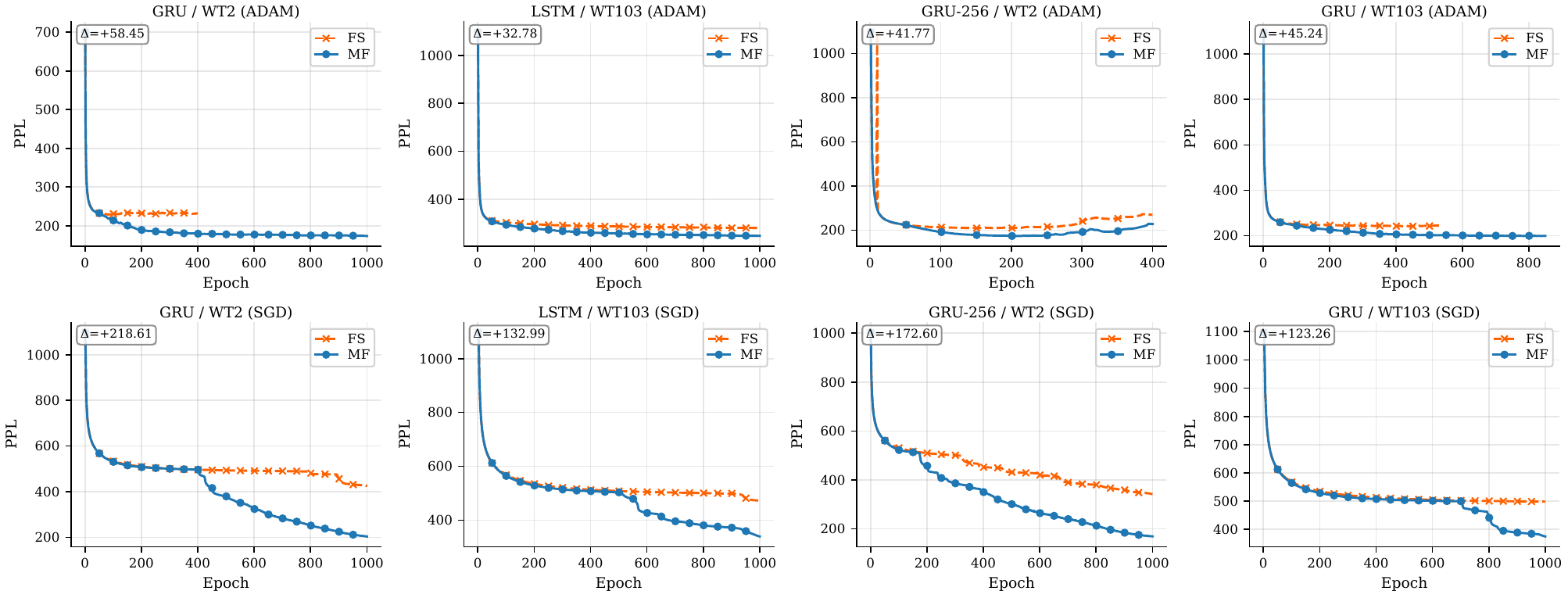}
    \caption{\textbf{Additional sequence-model trajectories.} Validation perplexity curves for the recurrent language-model settings in Table~\ref{tab:additional_sequence}.}
    \label{fig:additional_sequence}
\end{figure}

\paragraph{Tabular MLPs.}
The Adult result transfers to larger and image-derived MLP benchmarks, as summarized in Table~\ref{tab:tabular_image_mlp} and Figure~\ref{fig:tabular_image_mlp}. Covertype and Fashion-MNIST \citep{blackard1999covertype,uci1998repository,xiao2017fashionmnist} both favor MF clearly under the paired protocol, while Wine Quality remains a shorter-budget tabular sanity check \citep{cortez2009wine} with smaller positive movement. These are conventional feed-forward classifiers trained by backpropagation with modern nonlinear and initialization choices \citep{rumelhart1986learning,glorot2010understanding,nair2010relu,he2015delving}. The Covertype paired gains are much larger than in the earlier exploratory table, but the boundary audit still applies because the dense baseline was not rerun in this convergence batch.

\paragraph{Image-MLP classifiers.}
CIFAR-10 MLP \citep{krizhevsky2009learning} is the first clear mixed MLP case in the converged set. MF helps under Adam but trails FS under SGD, unlike Fashion-MNIST where both optimizer cells favor MF. We therefore treat flattened-image MLPs as task dependent rather than uniformly positive.

\begin{table}[t]
\centering
\caption{\textbf{Tabular and image-MLP convergence.} Hidden linear layers are wrapped with FS or MF. Higher test accuracy is better. Covertype and Fashion-MNIST show large paired gains, while CIFAR-10 MLP is optimizer dependent. Wine Quality is retained as a shorter-budget exploratory sanity check.}
\label{tab:tabular_image_mlp}
\begin{tabular}{llccc}
\toprule
Setting & Optimizer & FS & MF & $\Delta$ \\
\midrule
Covertype MLP & Adam & $74.41$ & $\mathbf{85.79}$ & $+11.37\%$ \\
Covertype MLP & SGD & $74.65$ & $\mathbf{86.04}$ & $+11.39\%$ \\
Fashion-MNIST MLP & Adam & $83.49$ & $\mathbf{86.15}$ & $+2.66\%$ \\
Fashion-MNIST MLP & SGD & $83.97$ & $\mathbf{90.53}$ & $+6.56\%$ \\
CIFAR-10 MLP & Adam & $45.68$ & $\mathbf{48.09}$ & $+2.41\%$ \\
CIFAR-10 MLP & SGD & $\mathbf{51.69}$ & $48.66$ & $-3.03\%$ \\
Wine Quality MLP & Adam & $71.74$ & $\mathbf{72.15}$ & $+0.41\%$ \\
Wine Quality MLP & SGD & $77.31$ & $\mathbf{77.58}$ & $+0.27\%$ \\
\bottomrule
\end{tabular}
\end{table}

\begin{figure}[t]
    \centering
    \includegraphics[width=0.98\linewidth]{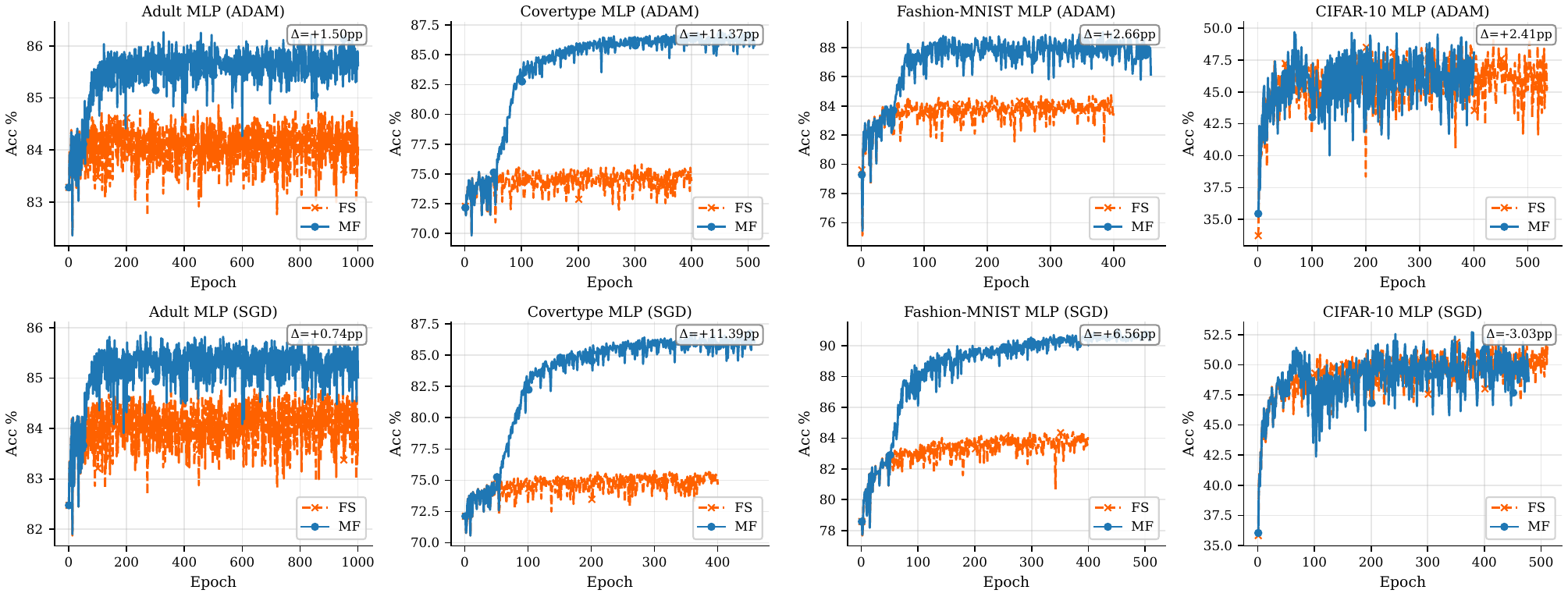}
    \caption{\textbf{Tabular and image-MLP trajectories.} Test accuracy curves for the settings in Table~\ref{tab:tabular_image_mlp}.}
    \label{fig:tabular_image_mlp}
\end{figure}

\paragraph{Convolutional fc-head settings.}
Table~\ref{tab:additional_cnn} and Figure~\ref{fig:cnn_fchead} report three convolutional settings where the wrapped layer is a classifier head rather than a recurrent projection or MLP hidden layer. Convolutional networks and residual backbones already impose strong spatial and skip-connection priors \citep{lecun1998gradient,he2016resnet}, so this regime is a boundary rather than a uniformly favorable case. ResNet-50/CIFAR-100 favors MF under SGD but not Adam, ResNet-18/CIFAR-10 favors FS under both optimizers, and the Simple CNN head is optimizer dependent. These mixed signs show that the benefit of learning $S$ depends on whether a constrained classifier head is an appropriate modeling prior.

\begin{table}[t]
\centering
\caption{\textbf{Convolutional classifier-head convergence.} Only the final fully connected classifier head is wrapped. Higher test accuracy is better. The signs are mixed, showing that the Stiefel-family prior is less reliable in this boundary regime than in recurrent projections and MLP hidden layers.}
\label{tab:additional_cnn}
\begin{tabular}{llccc}
\toprule
Setting & Optimizer & FS & MF & $\Delta$ \\
\midrule
ResNet-50 / CIFAR-100 & Adam & $\mathbf{54.43}$ & $53.44$ & $-0.99\%$ \\
ResNet-50 / CIFAR-100 & SGD & $48.79$ & $\mathbf{51.40}$ & $+2.61\%$ \\
ResNet-18 / CIFAR-10 & Adam & $\mathbf{86.31}$ & $85.79$ & $-0.52\%$ \\
ResNet-18 / CIFAR-10 & SGD & $\mathbf{83.79}$ & $83.27$ & $-0.52\%$ \\
Simple CNN / CIFAR-10 & Adam & $86.06$ & $\mathbf{86.44}$ & $+0.38\%$ \\
Simple CNN / CIFAR-10 & SGD & $\mathbf{86.54}$ & $85.79$ & $-0.75\%$ \\
\bottomrule
\end{tabular}
\end{table}

\begin{figure}[t]
    \centering
    \includegraphics[width=0.98\linewidth]{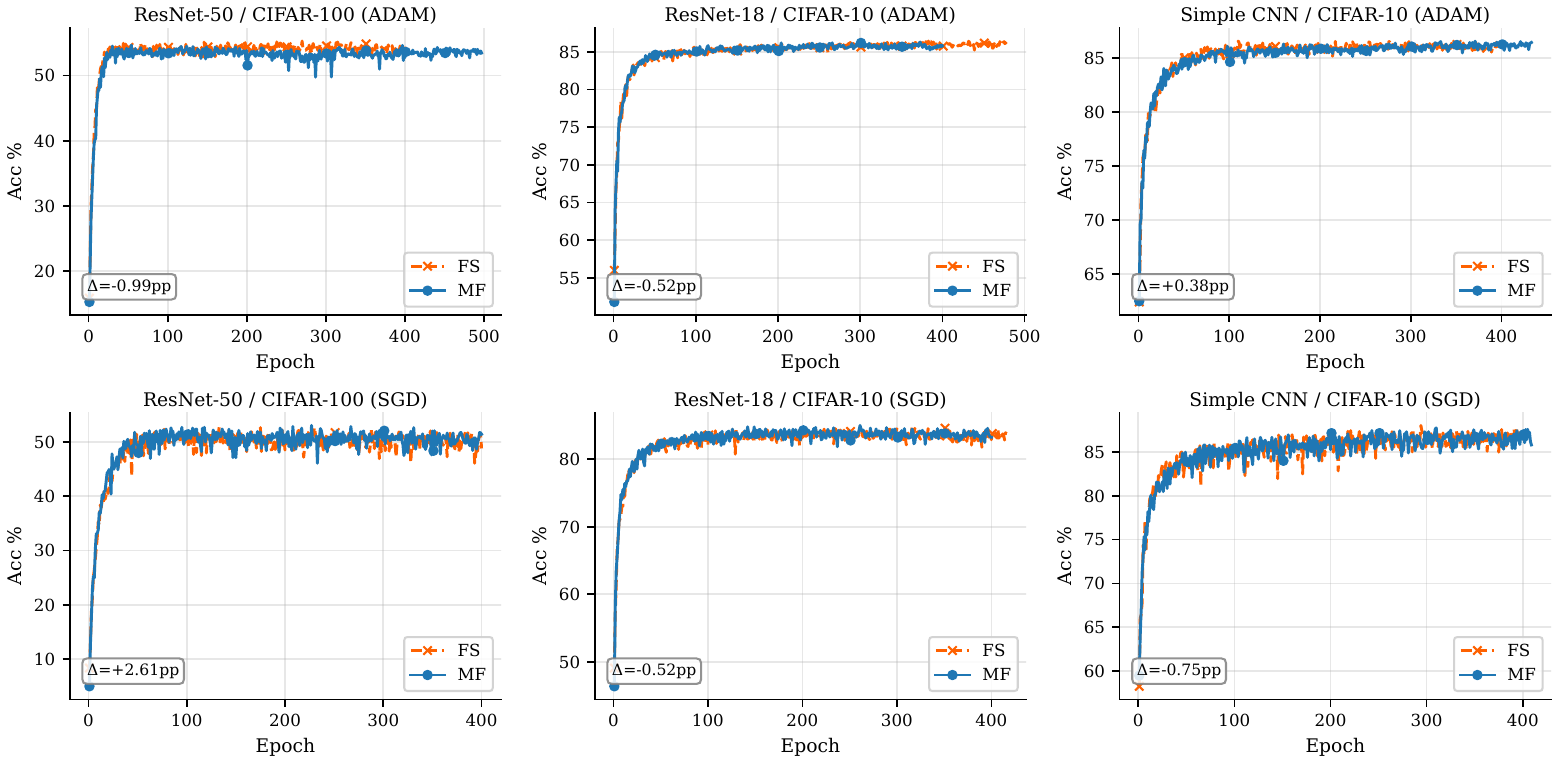}
    \caption{\textbf{Convolutional classifier-head trajectories.} Test accuracy curves for the settings in Table~\ref{tab:additional_cnn}.}
    \label{fig:cnn_fchead}
\end{figure}

\subsection{Spectral-bounded regime stability}
\label{sec:stability}

This subsection uses Adult as a stability stress test for the spectral prior. We train the same four-hidden-layer MLP with Dense, fixed-Stiefel (FS), or \ours{} hidden maps, keeping the optimizer, seed, batch size, and architecture fixed. The purpose is not to add another headline accuracy cell, but to ask what happens to the training trajectory when the layer spectrum is bounded: does the model drift, do gradients spike, and does learning the SPD spectrum introduce any new instability? Table~\ref{tab:stability} reports the epoch~500 snapshot.

\begin{table}[!ht]
    \centering
    \small
    \caption{\textbf{Spectral-bounded regime stability at epoch 500.} Adult MLP, single seed. Accuracy and gradient-norm peak are per-epoch values at epoch 500; the ep~491--500 window reports a short-horizon smoothed peak. Bounded-spectrum layers keep the Dense trajectory from drifting while sharply reducing gradient spikes; MF gives the lowest peak in this run.}
    \label{tab:stability}
    \begin{tabular}{lcccc}
        \toprule
        Method & Test acc (ep~500) & Gn-peak (ep~500) & Gn-peak (491--500 mean) & Loss-std (ep~500) \\
        \midrule
        Dense & 0.8209 & 18.66 & 11.47 & 0.0457 \\
        FS    & 0.8346 & 7.20  & 8.82  & 0.0509 \\
        MF    & \textbf{0.8351} & \textbf{5.97} & \textbf{6.89} & 0.0506 \\
        \bottomrule
    \end{tabular}
\end{table}


The story is a stability one. Dense retains unconstrained spectral freedom, but in this long run that freedom comes with late accuracy drift and larger gradient excursions. FS removes those excursions by pinning the represented spectrum to one. \ours{} keeps the same orthonormal basis prior while allowing a clipped spectrum to move, and in this run that extra degree of freedom does not destabilize training; it preserves the FS accuracy profile and yields the smallest gradient peak. We therefore read the Adult stability cell consistently with Section~\ref{sec:claim_boundary}: it is not evidence for a robust MF accuracy advantage over FS on Adult, but it does show that learning a bounded SPD spectrum can improve the gradient-spike profile without giving up the stabilizing effect of the Stiefel prior.

\subsection{Summary}
\label{sec:summary}

Figure~\ref{fig:summary} summarizes the converged paired evidence. Most paired cells favor MF, and the positive cells are concentrated in recurrent language modeling and tabular/image MLP settings. The negative cells are concentrated in boundary regimes where the wrapped layer is either a flattened-image classifier under a difficult optimizer pairing or a convolutional classifier head. This pattern supports the focused spectrum-relaxation claim without asserting broad dominance over every alternative layer family.

\begin{figure}[t]
    \centering
    \includegraphics[width=0.98\linewidth]{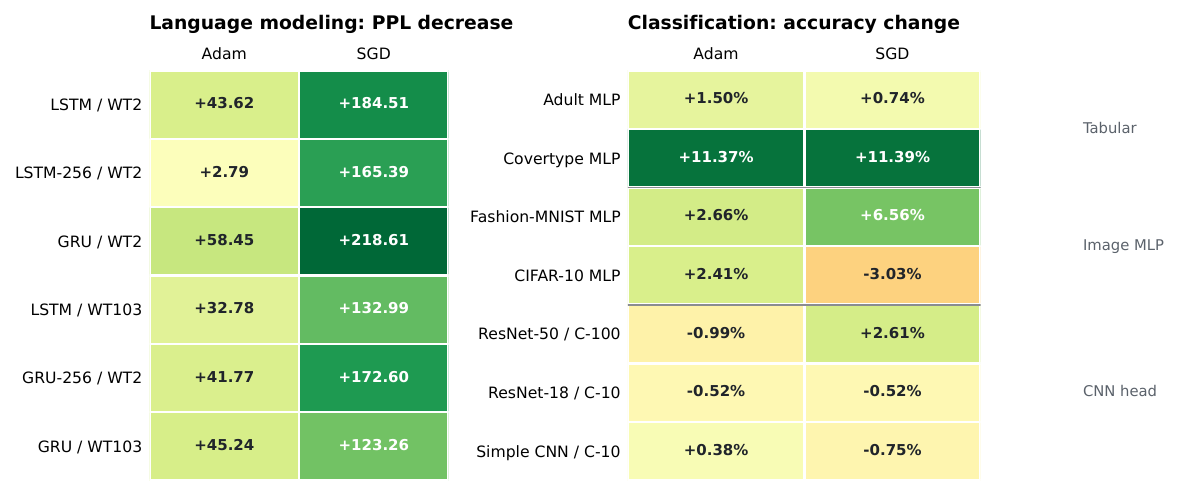}
    \caption{\textbf{Summary heatmap of converged paired cells.} Each cell reports the signed paired gain: perplexity reduction for language modeling and accuracy change in \% for classification.}
    \label{fig:summary}
\end{figure}

\section{Discussion}
\label{sec:discussion}

\paragraph{What the results establish.}
The empirical statement is deliberately paired. \ours{} improves the corresponding \fixed{} layer in the reported task families when the architecture, optimizer family, data split, and Stiefel update are matched. This supports the modeling claim that the fixed unit spectrum can be a bottleneck inside a Stiefel-family layer, and that learning a bounded SPD spectrum can relieve that bottleneck without abandoning the orthonormal basis.

\paragraph{What the mechanism should mean.}
The central principle is the SPD parametrization, not a claim that pressure is the unique useful direction. Proposition~\ref{prop:spectral_identity} makes $S_t$ the squared singular spectrum of the forward weight, Proposition~\ref{prop:update_stability} keeps every symmetric update on the SPD cone, and Proposition~\ref{prop:geo_equiv} explains why equal-norm random directions can match pressure directions near isotropic $S_t$. The pressure-persistence plot shows that the default pressure signal is structured, especially in deeper Adult MLP layers, while the random-pressure ablation shows that exact pressure alignment is not required for the paired gains reported here.

\paragraph{Boundary of the claim.}
The boundary audit is important to the interpretation. On Covertype and in LSTM output-layer alternatives, unconstrained or diagonal-spectrum layers can outperform the Stiefel family by a large margin. This does not contradict the paired result; it says that the Stiefel prior itself must be appropriate before a Stiefel relaxation is the right comparison. The paper therefore does not claim that \ours{} dominates dense layers, diagonal-spectrum parametrizations, or intrinsic spectral optimizers in general.

\paragraph{Limitations.}
The SPD update requires matrix square roots, exponentials, and eigendecompositions on an $r\times r$ factor, so very large wrapped dimensions will require structured, diagonal-plus-low-rank, or blockwise variants. The current graph experiments are not part of the claim because the tested message-passing forward path did not consistently use $W=QS^{1/2}$; extending the full parametrization to message-passing layers is future work. Larger Transformer and language-model experiments, stronger reruns of dense and diagonal-spectrum baselines under the same convergence protocol, and wall-clock overhead measurements are also needed before making a broader architectural claim.

\section{Conclusion}
\label{sec:conclusion}

\ours{} extends fixed-spectrum Stiefel layers with a learnable bounded SPD factor, preserving orthonormal directions while replacing the pinned unit spectrum with $W=QS^{1/2}$ and the exact identity $W^\top W=S$. The converged paired protocol shows its clearest value in recurrent language-model projections and MLP hidden layers, especially when SGD exposes the cost of a fixed unit spectrum. The Mini-Transformer FFN result remains an earlier exploratory comparison, and the convolutional classifier-head cells expose boundary cases where the Stiefel prior is less reliable. The conclusion is focused: when Stiefel structure is useful, the spectrum should not necessarily be fixed. Future work should scale the SPD update, rerun stronger spectral alternatives under the same convergence protocol, and study update directions beyond the pressure heuristic.

\bibliographystyle{plainnat}
\bibliography{refs}

\clearpage
\onecolumn
\appendix
\section{Theory and Proofs}
\label{app:mfv4_theory}
\label{app:spd_theory}

This appendix proves the statements in Section~\ref{sec:theory}. All wrapped weights are written as
\begin{equation}
    W(Q,S)=QS^{1/2},
    \qquad Q\in\St(p,r),\quad S\in\SPD(r),
    \label{eq:mfv4_weight_map}
\end{equation}
and the clipped SPD set is
\begin{equation}
    \mathcal S_{\alpha,\beta}=\{S\in\SPD(r):\alpha I_r\preceq S\preceq \beta I_r\},
    \label{eq:mfv4_clipped_spd_set}
\end{equation}
where $0<\alpha\leq\beta<\infty$. The product manifold is
\begin{equation}
    \mathcal M=\St(p,r)\times\mathcal S_{\alpha,\beta},
    \label{eq:mfv4_product_manifold}
\end{equation}
equipped with the canonical Stiefel metric and the affine-invariant SPD metric $g_S(U,V)=\tr(S^{-1}US^{-1}V)$ \citep{absil2008matrix,boumal2023smooth,pennec2006riemannian}. The convergence analysis applies to the regularized objective
\begin{equation}
    \mathcal J(Q,S)=F(QS^{1/2})+\frac{\lambda_S}{2}\|\log S\|_F^2.
    \label{eq:mfv4_joint_objective}
\end{equation}
The proof order follows the main text. We restate assumptions and intermediate lemmas when they are needed for a proof, but the main theoretical contract is stated in Section~\ref{sec:theory}.

\theoremstyle{plain}
\renewcommand{\thelemma}{3.\arabic{lemma}}
\renewcommand{\thetheorem}{3.\arabic{theorem}}
\renewcommand{\thecorollary}{3.\arabic{corollary}}
\providecommand{\St}{\mathrm{St}}
\providecommand{\SPD}{\mathrm{SPD}}
\providecommand{\Sym}{\mathrm{Sym}}
\providecommand{\tr}{\mathrm{tr}}
\providecommand{\Exp}{\mathrm{Exp}}
\providecommand{\Log}{\mathrm{Log}}
\providecommand{\dist}{\mathrm{dist}}
\providecommand{\diag}{\mathrm{diag}}
\providecommand{\R}{\mathbb{R}}
\providecommand{\E}{\mathbb{E}}
\providecommand{\Prb}{\mathbb{P}}
\providecommand{\Unif}{\mathrm{Unif}}
\providecommand{\rank}{\mathrm{rank}}
\providecommand{\op}{\mathrm{op}}

\subsection{Pressure as Rejected Geometry Gradient}
\label{app:pressure_geometry}

\subsubsection{Proof of Proposition~\ref{prop:pressure_decomposition}}
\label{app:proof_pressure_decomposition}

\begin{proposition}[Restatement of Proposition~\ref{prop:pressure_decomposition}]
Let $\ell(W)$ be differentiable, let $W=QS^{1/2}$ with $Q\in\St(p,r)$ and $S\in\SPD(r)$, and let $G_W=\nabla_W\ell(W)$. Define $\bar G=G_WS^{1/2}$ and
\begin{equation}
    \Pi_Q(\bar G)=\bar G-Q\sym(Q^\top\bar G),
    \qquad
    P=\sym(Q^\top\bar G).
    \label{eq:app_pressure_projection}
\end{equation}
Then $\Pi_Q(\bar G)\in T_Q\St(p,r)$, $\bar G=\Pi_Q(\bar G)+QP$, and at $S=I_r$,
\begin{equation}
    D_S\ell(QS^{1/2})[U]
    =
    \frac12\langle \sym(Q^\top G_W),U\rangle_F
    \qquad
    \forall U\in\Sym(r).
    \label{eq:app_pressure_gram_signal}
\end{equation}
\end{proposition}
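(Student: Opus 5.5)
We prove the three claims in order: tangency, decomposition, and the first-order Gram signal. The first two are algebraic identities. The third follows from the chain rule once we know the Fréchet derivative of the matrix square root at the identity.

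\emph{Tangency and decomposition.} Set $P=\sym(Q^\top\bar G)$ and $\Delta=\Pi_Q(\bar G)=\bar G-QP$. Using $Q^\top Q=I_r$, we get $Q^\top\Delta=Q^\top\bar G-P$. Since $P$ is symmetric,
\[
Q^\top\Delta+\Delta^\top Q=(Q^\top\bar G+\bar G^\top Q)-2P=0,
\]
by the definition of $\sym$. Hence $\Delta\in T_Q\St(p,r)$ by the tangent-space characterization in Section~\ref{sec:prelims}. The identity $\bar G=\Pi_Q(\bar G)+QP$ is immediate from the definition of $\Pi_Q$. Both facts hold for every $S\in\SPD(r)$; the value of $S$ enters only through $\bar G=G_WS^{1/2}$.

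\emph{Derivative of $S^{1/2}$ at the identity.} The map $S\mapsto S^{1/2}$ is smooth on $\SPD(r)$. Let $X=D(S^{1/2})|_{S=I_r}[U]$ for $U\in\Sym(r)$. Differentiating $(S^{1/2})^2=S$ gives the Sylvester equation $S^{1/2}X+XS^{1/2}=U$. At $S=I_r$ this reduces to $2X=U$, so $X=U/2$. The solution is unique because $S^{1/2}\succ0$ means the Sylvester operator $X\mapsto S^{1/2}X+XS^{1/2}$ has positive eigenvalues and is therefore invertible. Alternatively, one can see the same thing from the curve $(I_r+\epsilon U)^{1/2}=I_r+\tfrac{\epsilon}{2}U+O(\epsilon^2)$, obtained from the power series of $\sqrt{1+x}$ in a diagonalizing basis of $U$. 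This is the only step with any analytic content, and it is the step I would write out most carefully.

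\emph{Chain rule.} At $S=I_r$ we have $D_S W[U]=Q\,U/2$, so
\[
D_S\ell(QS^{1/2})[U]=\langle G_W,\tfrac12 QU\rangle_F=\tfrac12\tr(G_W^\top QU)=\tfrac12\langle Q^\top G_W,U\rangle_F.
\]
Because $U$ is symmetric, it is orthogonal in the Frobenius inner product to the skew part of $Q^\top G_W$. Therefore $\langle Q^\top G_W,U\rangle_F=\langle\sym(Q^\top G_W),U\rangle_F$, which is the claimed formula. Finally, at $S=I_r$ we have $\bar G=G_W$, so $\sym(Q^\top G_W)=P$. This shows that the rejected normal component $QP$ carries exactly the first-order Gram signal, which is the interpretation the proposition is meant to record.
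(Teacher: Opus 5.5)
Your proposal is correct and follows essentially the same route as the paper: the same algebraic tangency and decomposition check, the derivative $D(S^{1/2})|_{I_r}[U]=U/2$ obtained by differentiating $X^2=S$, and then the chain rule followed by Frobenius orthogonality of symmetric and skew-symmetric matrices. Your remark that the Sylvester operator $X\mapsto S^{1/2}X+XS^{1/2}$ is invertible, so the derivative is uniquely determined, adds a bit of rigor that the paper's one-line justification leaves implicit.
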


\begin{proof}
First,
\begin{align}
    Q^\top\Pi_Q(\bar G)+\Pi_Q(\bar G)^\top Q
    &=
    Q^\top\bar G-\sym(Q^\top\bar G)
    +
    \bar G^\top Q-\sym(Q^\top\bar G) \notag\\
    &=
    Q^\top\bar G+\bar G^\top Q
    -
    2\sym(Q^\top\bar G)
    =
    0.
    \label{eq:app_pressure_tangent_check}
\end{align}
Thus $\Pi_Q(\bar G)\in T_Q\St(p,r)$. The decomposition follows immediately from the definition:
\begin{equation}
    \Pi_Q(\bar G)+QP
    =
    \bar G-Q\sym(Q^\top\bar G)+Q\sym(Q^\top\bar G)
    =
    \bar G.
    \label{eq:app_pressure_decomposition}
\end{equation}
For the Gram signal, define $\phi(S)=\ell(QS^{1/2})$ with $Q$ fixed. The Fr\'echet derivative of the principal square-root map at $I_r$ is $D(S^{1/2})|_{I_r}[U]=U/2$, because differentiating $X^2=S$ at $X=I_r$ gives $Y+Y=U$ \citep{higham2008functions}. Therefore
\begin{align}
    D\phi(I_r)[U]
    &=
    \left\langle G_W, Q\,D(S^{1/2})|_{I_r}[U]\right\rangle_F \notag\\
    &=
    \frac12\langle Q^\top G_W,U\rangle_F \notag\\
    &=
    \frac12\langle \sym(Q^\top G_W),U\rangle_F,
    \label{eq:app_pressure_directional_derivative}
\end{align}
where the last equality uses $U\in\Sym(r)$ and the Frobenius orthogonality between symmetric and skew-symmetric matrices. This proves \eqref{eq:app_pressure_gram_signal}.
\end{proof}

\subsection{Basic SPD Geometry}
\label{app:mfv4_retained_facts}

We first record the SPD facts used by the update. Proposition~\ref{prop:update_stability} is the positive-definiteness guarantee for the SPD step, Proposition~\ref{prop:alignment_clipping} proves the alignment and clipping invariants, and Proposition~\ref{prop:geo_equiv} is the local movement comparison used to interpret the pressure ablation.

\renewcommand{\theproposition}{\arabic{proposition}}
\renewcommand{\theHproposition}{retained.\arabic{proposition}}
\setcounter{proposition}{2}

\subsubsection{Proof of Proposition~\ref{prop:update_stability}}
\label{app:proof_update_stability}

\begin{proposition}[Restatement of Proposition~\ref{prop:update_stability}]
Let $S_t\in\SPD(r)$, let $A_t\in\Sym(r)$, let $\gamma_t>0$, and let $a_t\in[0,1]$. Define
\begin{equation}
    S_{t+1}=S_t^{1/2}\exp\left(-\gamma_ta_tA_t\right)S_t^{1/2}.
    \label{eq:mfv4_spd_update}
\end{equation}
Then $S_{t+1}\in\SPD(r)$.
\end{proposition}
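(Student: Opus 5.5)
The plan is to reduce the claim to two standard facts: the matrix exponential of a symmetric matrix is SPD, and congruence by an invertible matrix preserves positive definiteness. Set $X_t=-\gamma_ta_tA_t$. Since $A_t\in\Sym(r)$ and $\gamma_ta_t$ is a real scalar, $X_t\in\Sym(r)$.

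First I would show $E_t:=\exp(X_t)\in\SPD(r)$. By the spectral theorem, write $X_t=V\diag(\mu_1,\ldots,\mu_r)V^\top$ with $V$ orthogonal and $\mu_i\in\R$. The power series for $\exp$ commutes with orthogonal conjugation, so
\[
E_t=V\diag(e^{\mu_1},\ldots,e^{\mu_r})V^\top .
\]
This matrix is symmetric, and its eigenvalues $e^{\mu_i}$ are strictly positive. Hence $E_t\in\SPD(r)$.

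Next I would use that $S_t\in\SPD(r)$ has a principal square root $S_t^{1/2}=U\diag(\lambda_i^{1/2})U^\top$, which is itself symmetric and invertible. Then $S_{t+1}=S_t^{1/2}E_tS_t^{1/2}$ is symmetric, because $(S_t^{1/2}E_tS_t^{1/2})^\top=S_t^{1/2}E_t^\top S_t^{1/2}$ and $E_t$ is symmetric. For positivity, take any $x\neq0$ and set $y=S_t^{1/2}x$; then $y\neq0$ by invertibility, and
\[
x^\top S_{t+1}x=y^\top E_ty>0 .
\]
So $S_{t+1}\in\SPD(r)$.

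No step is a real obstacle. The only point needing care is that $S_t^{1/2}$ is symmetric and invertible, which is what makes the expression a congruence rather than a general product. Note that the argument uses neither $a_t\in[0,1]$ nor $\gamma_t>0$ beyond the fact that they are real. As an optional remark, the same conclusion follows from \eqref{eq:ai_exp}, since the update equals $\Exp_{S_t}(S_t^{1/2}X_tS_t^{1/2})$ and the exponential map of the SPD manifold stays in $\SPD(r)$.
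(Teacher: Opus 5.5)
Your proof is correct and follows the paper's argument essentially step for step: diagonalize $-\gamma_ta_tA_t$ to get $\exp(-\gamma_ta_tA_t)\succ0$, then use invertibility of $S_t^{1/2}$ to transfer positivity of the quadratic form via $y=S_t^{1/2}x$. Your explicit check that $S_{t+1}$ is symmetric is a small but welcome addition, since the paper applies the quadratic-form criterion without verifying symmetry.
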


\begin{proof}
Set
\begin{equation}
    B_t=-\gamma_ta_tA_t.
    \label{eq:mfv4_update_A}
\end{equation}
Since $A_t\in\Sym(r)$, equation \eqref{eq:mfv4_update_A} gives $B_t^\top=B_t$. By the spectral theorem for real symmetric matrices \citep{bhatia1997matrix}, there exist $U\in\mathrm O(r)$ and real numbers $\mu_1,\ldots,\mu_r$ such that $B_t=U\diag(\mu_1,\ldots,\mu_r)U^\top$. Hence
\begin{equation}
    \exp(B_t)=U\diag(e^{\mu_1},\ldots,e^{\mu_r})U^\top\succ0.
    \label{eq:mfv4_exp_spd}
\end{equation}
For every nonzero $x\in\R^r$, define $y=S_t^{1/2}x$. Since $S_t^{1/2}$ is nonsingular, $y\neq0$. Therefore
\begin{align}
    x^\top S_{t+1}x
    &=x^\top S_t^{1/2}\exp(B_t)S_t^{1/2}x && \text{by \eqref{eq:mfv4_spd_update}} \notag \\
    &=y^\top\exp(B_t)y && \text{by the definition of $y$} \notag \\
    &>0 && \text{by \eqref{eq:mfv4_exp_spd}}.
    \label{eq:mfv4_update_pd_quadratic}
\end{align}
The quadratic-form criterion for positive definiteness \citep{bhatia1997matrix} applied to \eqref{eq:mfv4_update_pd_quadratic} yields $S_{t+1}\in\SPD(r)$.
\end{proof}

\subsubsection{Proof of Proposition~\ref{prop:alignment_clipping}}
\label{app:proof_alignment_clipping}

\begin{proposition}[Restatement of Proposition~\ref{prop:alignment_clipping}]
Let $Q,Q_-\in\St(p,r)$ and let $Q^\top Q_-=U\Sigma V^\top$. Then $O_\star=UV^\top$ solves the orthogonal Procrustes problem
\begin{equation}
    \min_{O\in\mathrm O(r)}\|QO-Q_-\|_F^2 .
    \label{eq:app_procrustes_problem}
\end{equation}
For every $M\in\Sym(r)$, $O_\star MO_\star^\top$ is symmetric and has the same eigenvalues and Frobenius norm as $M$. If $S=U_S\diag(\lambda_i)U_S^\top\in\SPD(r)$, then $\mathcal C_{\alpha,\beta}(S)=U_S\diag(\clip(\lambda_i,\alpha,\beta))U_S^\top$ belongs to $\mathcal S_{\alpha,\beta}$.
\end{proposition}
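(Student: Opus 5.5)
The proof splits into three independent parts, handled in the order of the statement: the Procrustes optimality of $O_\star$, the invariance of the aligned state, and membership of the clipped matrix in $\mathcal S_{\alpha,\beta}$.

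\emph{Procrustes part.} For $O\in\mathrm O(r)$, the constraints $Q^\top Q=I_r$ and $Q_-^\top Q_-=I_r$ give
\[
\|QO-Q_-\|_F^2=\tr(O^\top Q^\top QO)-2\tr(O^\top Q^\top Q_-)+\tr(Q_-^\top Q_-)=2r-2\tr(O^\top Q^\top Q_-).
\]
So minimizing the residual is equivalent to maximizing $\tr(O^\top U\Sigma V^\top)=\tr(Z\Sigma)$, where $Z=U^\top O V\in\mathrm O(r)$; here $U,V$ are square orthogonal $r\times r$ factors, since $Q^\top Q_-$ is $r\times r$. Because $Z$ is orthogonal, its diagonal entries satisfy $|Z_{ii}|\le1$. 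With $\Sigma\succeq0$ diagonal this gives $\tr(Z\Sigma)=\sum_i Z_{ii}\sigma_i\le\sum_i\sigma_i$. The bound is attained at $Z=I_r$, that is, at $O=UV^\top=O_\star$. The only point needing care is that the compact SVD is taken with square $U,V$. If $Q^\top Q_-$ is rank-deficient, $O_\star$ need not be unique, but it still lies in the $\arg\min$, which is all the statement asserts.

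\emph{Alignment invariants.} Take any orthogonal $O$, in particular $O_\star$. Symmetry follows from $(OMO^\top)^\top=OM^\top O^\top=OMO^\top$. Since $O^\top=O^{-1}$, the map $M\mapsto OMO^\top$ is a similarity transformation, so the characteristic polynomial and hence the eigenvalues (with multiplicities) are preserved. For the norm, $\|OMO^\top\|_F^2=\tr(OM^\top O^\top OMO^\top)=\tr(M^\top M)$ by cyclicity of the trace. Alternatively, the norm claim follows from the eigenvalue claim, because for symmetric matrices $\|M\|_F^2=\sum_i\lambda_i^2$.

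\emph{Clipping.} Set $\mu_i=\clip(\lambda_i,\alpha,\beta)=\min(\max(\lambda_i,\alpha),\beta)$. Since $\alpha\le\beta$, each $\mu_i$ lies in $[\alpha,\beta]$. The matrix $C=U_S\diag(\mu_i)U_S^\top$ is symmetric with eigenvalues exactly the $\mu_i$, because $U_S$ is orthogonal. Then $C-\alpha I_r=U_S\diag(\mu_i-\alpha)U_S^\top\succeq0$ and $\beta I_r-C=U_S\diag(\beta-\mu_i)U_S^\top\succeq0$. Finally, $\alpha>0$ gives $C\succ0$, so $C\in\SPD(r)$ and therefore $C\in\mathcal S_{\alpha,\beta}$.

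Every step is elementary. The only real obstacle is in the Procrustes argument: justifying the trace bound $\tr(Z\Sigma)\le\tr\Sigma$, and keeping the compact-SVD conventions consistent so that $UV^\top$ is a genuine element of $\mathrm O(r)$. The bound can also be obtained from von Neumann's trace inequality, but the diagonal-entry bound above suffices.
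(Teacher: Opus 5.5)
Your proposal is correct and follows the paper's proof essentially step for step. Both use the same trace expansion reducing the Procrustes problem to maximizing $\tr(O^\top Q^\top Q_-)$, the same orthogonal-similarity argument for symmetry, eigenvalues and Frobenius norm, and the same eigenvalue-wise clipping into $[\alpha,\beta]$. The only difference is that you prove the needed bound $\tr(Z\Sigma)\le\tr\Sigma$ directly from $|Z_{ii}|\le 1$ rather than citing von Neumann's trace inequality, and you make explicit the harmless requirement that $U,V$ be square so that $UV^\top\in\mathrm O(r)$.
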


\begin{proof}
Since $Q^\top Q=Q_-^\top Q_-=I_r$,
\begin{align}
    \|QO-Q_-\|_F^2
    &=
    \tr(O^\top Q^\top QO)-2\tr(O^\top Q^\top Q_-)+\tr(Q_-^\top Q_-) \notag\\
    &=
    2r-2\tr(O^\top Q^\top Q_-).
    \label{eq:app_procrustes_trace}
\end{align}
Thus minimizing \eqref{eq:app_procrustes_problem} is equivalent to maximizing $\tr(O^\top U\Sigma V^\top)$. Von Neumann's trace inequality gives
\begin{equation}
    \tr(O^\top U\Sigma V^\top)
    =
    \tr((U^\top OV)^\top\Sigma)
    \leq
    \tr(\Sigma),
    \label{eq:app_procrustes_von_neumann}
\end{equation}
with equality at $O=UV^\top$. Hence $O_\star$ is a minimizer.

If $M=M^\top$ and $O_\star^\top O_\star=I_r$, then $(O_\star MO_\star^\top)^\top=O_\star MO_\star^\top$, so symmetry is preserved. Orthogonal similarity preserves eigenvalues, and
\begin{equation}
    \|O_\star MO_\star^\top\|_F^2
    =
    \tr(O_\star MO_\star^\top O_\star MO_\star^\top)
    =
    \tr(M^2)
    =
    \|M\|_F^2.
    \label{eq:app_alignment_norm}
\end{equation}
Finally, each clipped eigenvalue $\clip(\lambda_i,\alpha,\beta)$ lies in $[\alpha,\beta]$. Therefore
\begin{equation}
    \alpha I_r
    \preceq
    \mathcal C_{\alpha,\beta}(S)
    \preceq
    \beta I_r,
    \label{eq:app_clip_bounds}
\end{equation}
and $\mathcal C_{\alpha,\beta}(S)\in\mathcal S_{\alpha,\beta}$.
\end{proof}

\begin{lemma}[Affine-invariant SPD distance]
\label{lem:mfv4_ai_distance}
Let $S,T\in\SPD(r)$. Under $g_S(U,V)=\tr(S^{-1}US^{-1}V)$, the affine-invariant distance satisfies
\begin{equation}
    d_{\mathrm{AI}}(S,T)=\left\|\log\left(S^{-1/2}TS^{-1/2}\right)\right\|_F.
    \label{eq:mfv4_ai_distance}
\end{equation}
\end{lemma}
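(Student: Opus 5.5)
The approach is to use affine invariance to reduce to the case $S=I_r$ and then compute the distance from the identity explicitly.

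\emph{Step 1 (invariance).} For any invertible $C$, the congruence $\phi_C(X)=CXC^\top$ maps $\SPD(r)$ to itself. Its differential is $U\mapsto CUC^\top$, and
\[
g_{CSC^\top}(CUC^\top,CVC^\top)=\tr(C^{-\top}S^{-1}UC^\top C^{-\top}S^{-1}VC^\top)=g_S(U,V).
\]
Hence $\phi_C$ is an isometry and $d_{\mathrm{AI}}(S,T)=d_{\mathrm{AI}}(CSC^\top,CTC^\top)$. Taking $C=S^{-1/2}$ gives
\[
d_{\mathrm{AI}}(S,T)=d_{\mathrm{AI}}(I_r,Z),\qquad Z=S^{-1/2}TS^{-1/2}\in\SPD(r).
\]

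\emph{Step 2 (reduction to the diagonal).} Diagonalize $Z=V\diag(e^{\mu_i})V^\top$ with $V\in\mathrm O(r)$. The map $\phi_{V^\top}$ is an isometry fixing $I_r$, so it suffices to treat $Z=\diag(e^{\mu_i})$, for which $\|\log Z\|_F^2=\sum_i\mu_i^2$.

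\emph{Step 3 (upper bound).} The curve $\gamma(t)=\exp(tX)$ with $X=\diag(\mu_i)$ joins $I_r$ to $Z$. Since $\dot\gamma=\gamma X$ and the factors commute, its speed satisfies
\[
g_\gamma(\dot\gamma,\dot\gamma)=\tr(\gamma^{-1}\dot\gamma\gamma^{-1}\dot\gamma)=\tr(X^2).
\]
Its length is therefore $\|X\|_F=\|\log Z\|_F$, which gives $d_{\mathrm{AI}}(I_r,Z)\le\|\log Z\|_F$.

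\emph{Step 4 (lower bound, the main obstacle).} We must show that every smooth curve $c:[0,1]\to\SPD(r)$ from $I_r$ to $Z$ has length at least $\|\log Z\|_F$. The plan is to set $\ell(t)=\log c(t)$ and compare the metric speed with $\|\dot\ell\|_F$. This rests on the exponential metric-increasing inequality (Bhatia, \emph{Positive Definite Matrices}, Thm.~6.1.4), which states
\[
\|\log A-\log B\|_F\le d_{\mathrm{AI}}(A,B),
\]
or in infinitesimal form $\|D\log(P)[\dot P]\|_F\le\|P^{-1/2}\dot PP^{-1/2}\|_F$.

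To justify the infinitesimal form, write $P=W\diag(p_i)W^\top$. In that eigenbasis, $D\log(P)$ acts entrywise by the divided differences $(\log p_i-\log p_j)/(p_i-p_j)$, while $P^{-1/2}\dot PP^{-1/2}$ scales entries by $1/\sqrt{p_ip_j}$. The required entrywise comparison is the logarithmic–geometric mean inequality,
\[
\frac{p_i-p_j}{\log p_i-\log p_j}\ge\sqrt{p_ip_j}.
\]
Integrating then gives
\[
\mathrm{length}(c)\ge\int_0^1\|\dot\ell\|_F\,dt\ge\|\ell(1)-\ell(0)\|_F=\|\log Z\|_F.
\]
Combined with Step 3, this yields equality, and Step 1 transports the result back to $\|\log(S^{-1/2}TS^{-1/2})\|_F$. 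Alternatively, we may cite the standard result \citep{bhatia2007positive,pennec2006riemannian} directly.
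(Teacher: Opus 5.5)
Your proof is correct, and it takes a different route from the paper's. The paper writes down the closed-form affine-invariant maps $\Exp_S$ and $\Log_S$ and then uses $d_{\mathrm{AI}}(S,T)^2=g_S(\Log_S T,\Log_S T)$. This reduces to the one-line trace computation $\tr(\log B\log B)=\|\log B\|_F^2$ with $B=S^{-1/2}TS^{-1/2}$.

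That computation is essentially your Step 3, the length of the geodesic $t\mapsto\exp(t\log Z)$, transported back to $S$. The real content, that this geodesic is length-minimizing, sits in the paper's step labelled ``definition of geodesic distance''. That step is not a definition: it rests on $(\SPD(r),g)$ being a complete, simply connected manifold of nonpositive curvature, so that $\Exp_S$ is a global diffeomorphism (Cartan--Hadamard).

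Your Step 4 supplies exactly that lower bound directly. You use the Daleckii--Krein divided-difference formula and the logarithmic--geometric mean inequality to prove the infinitesimal exponential metric increasing property. Integrating along an arbitrary curve then gives $\mathrm{length}(c)\ge\|\log Z\|_F$, and congruence invariance (Step 1) transports the result back. Your argument is longer but self-contained and elementary, with no curvature or completeness facts needed. The paper's argument is a short computation that is complete only modulo the cited Hadamard-manifold fact.

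Two minor remarks. Step 2 is unnecessary: $t\mapsto\exp(t\log Z)$ already commutes with its derivative for nondiagonal $Z$, and Step 4 never uses the diagonal form. Also, Step 4 should be stated for piecewise smooth curves, since the distance is an infimum over those; the same integration handles them verbatim.
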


\begin{proof}
The affine-invariant exponential map and logarithm map on $\SPD(r)$ are \citep{pennec2006riemannian,moakher2005differential,boumal2023smooth}
\begin{align}
    \Exp_S(X)
    &=S^{1/2}\exp\left(S^{-1/2}XS^{-1/2}\right)S^{1/2},
    \label{eq:mfv4_ai_exp} \\
    \Log_S(T)
    &=S^{1/2}\log\left(S^{-1/2}TS^{-1/2}\right)S^{1/2}.
    \label{eq:mfv4_ai_log}
\end{align}
The Riemannian distance equals the norm of the logarithm vector, so
\begin{align}
    d_{\mathrm{AI}}(S,T)^2
    &=g_S(\Log_S(T),\Log_S(T)) && \text{definition of geodesic distance} \notag \\
    &=\tr\left(S^{-1}\Log_S(T)S^{-1}\Log_S(T)\right) && \text{definition of $g_S$} \notag \\
    &=\tr\left(\log(B)\log(B)\right) && \text{using \eqref{eq:mfv4_ai_log} with $B=S^{-1/2}TS^{-1/2}$} \notag \\
    &=\left\|\log\left(S^{-1/2}TS^{-1/2}\right)\right\|_F^2 && \text{definition of $\|\cdot\|_F$}.
    \label{eq:mfv4_ai_distance_proof}
\end{align}
Taking square roots in \eqref{eq:mfv4_ai_distance_proof} gives \eqref{eq:mfv4_ai_distance}.
\end{proof}

\subsubsection{Proof of Proposition~\ref{prop:geo_equiv}}
\label{app:proof_geo_equiv}

\begin{proposition}[Restatement of Proposition~\ref{prop:geo_equiv}]
Let $S_t=\lambda I_r$ with $\lambda>0$, let $M_t,M_t'\in\Sym(r)$ satisfy $\|M_t\|_F=\|M_t'\|_F$, and define
\begin{equation}
    S_{t+1}(M)
    =
    S_t^{1/2}
    \exp\!\left(-\rho_{\mathrm{geo}}a_tS_t^{-1/2}MS_t^{-1/2}\right)
    S_t^{1/2}.
    \label{eq:mfv4_pressure_spd_update}
\end{equation}
Then
\begin{equation}
    d_{\mathrm{AI}}\left(S_t,S_{t+1}(M_t)\right)=d_{\mathrm{AI}}\left(S_t,S_{t+1}(M_t')\right).
    \label{eq:mfv4_geo_equiv}
\end{equation}
Moreover,
\begin{equation}
    d_{\mathrm{AI}}\left(S_t,S_{t+1}(M_t)\right)=\rho_{\mathrm{geo}}a_t\lambda^{-1}\|M_t\|_F.
    \label{eq:mfv4_geo_equiv_explicit}
\end{equation}
\end{proposition}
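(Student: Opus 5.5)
The argument is a direct computation built on Lemma~\ref{lem:mfv4_ai_distance}, which gives $d_{\mathrm{AI}}(S,T)=\|\log(S^{-1/2}TS^{-1/2})\|_F$. Throughout, set $c=\rho_{\mathrm{geo}}a_t\ge0$ and $X=S_t^{-1/2}MS_t^{-1/2}$. We will show that $\log(S_t^{-1/2}S_{t+1}(M)S_t^{-1/2})=-cX$ holds for any SPD $S_t$, not only isotropic ones. The isotropy assumption $S_t=\lambda I_r$ is used only at the end, to convert $\|X\|_F$ into a multiple of $\|M\|_F$.

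\emph{Step 1 (cancellation).} Since $S_t^{-1/2}S_t^{1/2}=I_r$, we get
$S_t^{-1/2}S_{t+1}(M)S_t^{-1/2}=\exp(-cX)$.

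\emph{Step 2 (logarithm).} The matrix $X$ is symmetric because $M$ is, and congruence by the symmetric matrix $S_t^{-1/2}$ preserves symmetry. Diagonalize $X=V\diag(\mu_i)V^\top$ with $V$ orthogonal. Then $\exp(-cX)=V\diag(e^{-c\mu_i})V^\top$ is SPD. The principal logarithm of an SPD matrix acts on its eigenvalues, so $\log(\exp(-cX))=V\diag(-c\mu_i)V^\top=-cX$. This is the only step needing care, because $\log\exp=\mathrm{id}$ can fail for general matrices. Here it holds because we are on symmetric matrices, where $\exp$ is a bijection onto $\SPD(r)$ whose inverse is the principal logarithm; the real eigenvalues $-c\mu_i$ remove any branch ambiguity.

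\emph{Step 3 (distance and isotropy).} By Lemma~\ref{lem:mfv4_ai_distance}, $d_{\mathrm{AI}}(S_t,S_{t+1}(M))=c\|X\|_F$. At $S_t=\lambda I_r$ we have $S_t^{-1/2}=\lambda^{-1/2}I_r$, so $X=\lambda^{-1}M$. This gives $d_{\mathrm{AI}}=\rho_{\mathrm{geo}}a_t\lambda^{-1}\|M\|_F$, which is \eqref{eq:mfv4_geo_equiv_explicit}.

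The right-hand side depends on $M$ only through $\|M\|_F$. Applying the formula to $M_t$ and to $M_t'$ with $\|M_t\|_F=\|M_t'\|_F$ therefore yields \eqref{eq:mfv4_geo_equiv}.
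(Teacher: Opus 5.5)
Your proof is correct and follows essentially the same route as the paper: cancel the outer square-root factors, use that the principal logarithm inverts $\exp$ on symmetric matrices, apply Lemma~\ref{lem:mfv4_ai_distance}, and finish with homogeneity of the Frobenius norm. The only difference is that you invoke isotropy at the end rather than at the start, which harmlessly exposes the general identity $d_{\mathrm{AI}}(S_t,S_{t+1}(M))=\rho_{\mathrm{geo}}a_t\|S_t^{-1/2}MS_t^{-1/2}\|_F$; like the paper, you use the implicit assumption $\rho_{\mathrm{geo}}a_t\ge 0$.
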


\begin{proof}
Since $S_t=\lambda I_r$, the inverse square root is $S_t^{-1/2}=\lambda^{-1/2}I_r$. Substitution into \eqref{eq:mfv4_pressure_spd_update} gives
\begin{equation}
    S_t^{-1/2}S_{t+1}(M_t)S_t^{-1/2}=\exp\left(-\rho_{\mathrm{geo}}a_t\lambda^{-1}M_t\right).
    \label{eq:mfv4_geo_log_argument}
\end{equation}
The matrix $-\rho_{\mathrm{geo}}a_t\lambda^{-1}M_t$ is symmetric. For symmetric matrices the principal logarithm satisfies $\log(\exp(B))=B$ \citep{higham2008functions}. Applying Lemma~\ref{lem:mfv4_ai_distance} and then \eqref{eq:mfv4_geo_log_argument} gives
\begin{align}
    d_{\mathrm{AI}}\left(S_t,S_{t+1}(M_t)\right)
    &=\left\|\log\left(S_t^{-1/2}S_{t+1}(M_t)S_t^{-1/2}\right)\right\|_F && \text{by Lemma~\ref{lem:mfv4_ai_distance}} \notag \\
    &=\left\|-\rho_{\mathrm{geo}}a_t\lambda^{-1}M_t\right\|_F && \text{by \eqref{eq:mfv4_geo_log_argument}} \notag \\
    &=\rho_{\mathrm{geo}}a_t\lambda^{-1}\|M_t\|_F && \text{by absolute homogeneity of $\|\cdot\|_F$}.
    \label{eq:mfv4_geo_equiv_proof}
\end{align}
The same derivation with $M_t'$ gives $d_{\mathrm{AI}}(S_t,S_{t+1}(M_t'))=\rho_{\mathrm{geo}}a_t\lambda^{-1}\|M_t'\|_F$. The hypothesis $\|M_t\|_F=\|M_t'\|_F$ and equation \eqref{eq:mfv4_geo_equiv_proof} yield \eqref{eq:mfv4_geo_equiv}.
\end{proof}

\subsection{Convergence}
\label{app:mfv4_convergence}

The convergence proof follows the standard stochastic Riemannian gradient chain: geodesic smoothness gives a second-order descent inequality; retraction regularity transfers the inequality to the implemented update; unbiasedness removes the first-order stochastic error; summation gives a nonconvex stationary-point rate \citep{absil2008matrix,boumal2023smooth,bonnabel2013stochastic}.

\begin{assumption}[Geodesic smoothness]
\label{ass:mfv4_smoothness}
The objective $\mathcal J$ in \eqref{eq:mfv4_joint_objective} is bounded below by $\mathcal J_\star> -\infty$ on $\mathcal M$ and is geodesically $L_{\mathcal M}$-smooth: for every $x=(Q,S)\in\mathcal M$, every $\xi\in T_x\mathcal M$, and every geodesic $\gamma$ with $\gamma(0)=x$ and $\dot\gamma(0)=\xi$,
\begin{equation}
    \mathcal J(\gamma(1))\leq \mathcal J(x)+\langle \operatorname{grad}\mathcal J(x),\xi\rangle_x+\frac{L_{\mathcal M}}{2}\|\xi\|_x^2.
    \label{eq:mfv4_geodesic_smoothness}
\end{equation}
The retraction $R$ used by ManifoldFlow satisfies the second-order retraction error bound
\begin{equation}
    \mathcal J(R_x(\xi))\leq \mathcal J(x)+\langle \operatorname{grad}\mathcal J(x),\xi\rangle_x+\frac{L_R}{2}\|\xi\|_x^2
    \label{eq:mfv4_retraction_smoothness}
\end{equation}
for all iterates and all tangent vectors used by the algorithm, where $L_R\geq L_{\mathcal M}$ depends only on $L_{\mathcal M}$ and the second-order retraction constants on the compact set \eqref{eq:mfv4_clipped_spd_set}.
\end{assumption}

\begin{assumption}[Bounded SPD spectrum]
\label{ass:mfv4_bounded_spectrum}
For every iterate $t$, spectral clipping and gate damping satisfy
\begin{equation}
    \alpha I_r\preceq S_t\preceq \beta I_r.
    \label{eq:mfv4_spectrum_bound}
\end{equation}
\end{assumption}

\begin{lemma}[Norm equivalence on the clipped SPD cone]
\label{lem:mfv4_norm_equivalence}
If $S\in\mathcal S_{\alpha,\beta}$ and $U\in T_S\SPD(r)=\Sym(r)$, then
\begin{equation}
    \beta^{-2}\|U\|_F^2\leq \|U\|_{S}^2\leq \alpha^{-2}\|U\|_F^2.
    \label{eq:mfv4_spd_norm_equivalence}
\end{equation}
\end{lemma}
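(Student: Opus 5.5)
The plan is to reduce the claim to a Rayleigh-quotient bound for a congruence of $U$ by $S^{-1/2}$. By the definition of the affine-invariant metric, $\|U\|_S^2=\tr(S^{-1}US^{-1}U)$. Writing $S^{-1}=S^{-1/2}S^{-1/2}$ and using cyclicity of the trace gives
\begin{equation*}
    \|U\|_S^2=\tr\!\left(S^{-1/2}US^{-1/2}\,S^{-1/2}US^{-1/2}\right)=\|S^{-1/2}US^{-1/2}\|_F^2 .
\end{equation*}
Here $S^{-1/2}US^{-1/2}$ is symmetric because $U\in\Sym(r)$. So the task is to bound the Frobenius norm of $X=S^{-1/2}US^{-1/2}$ in terms of $\|U\|_F$.

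For the upper bound I will use submultiplicativity in the mixed form $\|AB\|_F\le\|A\|_{\op}\|B\|_F$, together with its right-multiplication analogue. This gives $\|X\|_F\le\|S^{-1/2}\|_{\op}^2\|U\|_F$. Assumption~\ref{ass:mfv4_bounded_spectrum}, equivalently $S\in\mathcal S_{\alpha,\beta}$, gives $\|S^{-1/2}\|_{\op}=\lambda_{\min}(S)^{-1/2}\le\alpha^{-1/2}$, since the eigenvalues of $S^{-1/2}$ are $\lambda_i(S)^{-1/2}$. Squaring yields $\|U\|_S^2\le\alpha^{-2}\|U\|_F^2$.

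For the lower bound I will invert the congruence: $U=S^{1/2}XS^{1/2}$. The same operator-norm inequality gives $\|U\|_F\le\|S^{1/2}\|_{\op}^2\|X\|_F\le\beta\|X\|_F$, so $\|X\|_F^2\ge\beta^{-2}\|U\|_F^2$.

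As a cleaner alternative, I can diagonalize $S=V\diag(s_i)V^\top$ and set $\tilde U=V^\top UV$, which has the same Frobenius norm. Then $\|U\|_S^2=\sum_{i,j}\tilde U_{ij}^2/(s_is_j)$, and since $s_is_j\in[\alpha^2,\beta^2]$ both bounds follow at once.

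There is no real obstacle. The only points needing care are the identity $\tr(S^{-1}US^{-1}U)=\|S^{-1/2}US^{-1/2}\|_F^2$, which uses the symmetry of $U$ and $S^{-1/2}$, and the correct translation of the eigenvalue bounds $\alpha\le s_i\le\beta$ into the squared constants $\alpha^{-2}$ and $\beta^{-2}$.
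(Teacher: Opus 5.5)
Your proof is correct. Your ``cleaner alternative'' is exactly the paper's argument: diagonalize $S=V\Lambda V^\top$, set $\widetilde U=V^\top UV$, write $\|U\|_S^2=\sum_{i,j}\lambda_i^{-1}\lambda_j^{-1}\widetilde U_{ij}^2$, and use $\beta^{-2}\le\lambda_i^{-1}\lambda_j^{-1}\le\alpha^{-2}$ together with $\|\widetilde U\|_F=\|U\|_F$.

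Your primary route is a coordinate-free variant of the same idea. It rests on the whitening identity $\|U\|_S=\|S^{-1/2}US^{-1/2}\|_F$, which is an isometry from $(\Sym(r),\|\cdot\|_S)$ to $(\Sym(r),\|\cdot\|_F)$. You then apply $\|AB\|_F\le\|A\|_{\op}\|B\|_F$ to the congruence $S^{-1/2}(\cdot)S^{-1/2}$ for the upper bound, and to the inverse congruence $S^{1/2}(\cdot)S^{1/2}$ for the lower bound. Both steps are valid. The hypothesis $S\in\mathcal S_{\alpha,\beta}$ alone suffices, so you do not need to invoke the iterate-level bounded-spectrum assumption.

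The paper's entrywise formula gets both inequalities in one line. It also makes clear that the bounds are sharp in terms of $\lambda_{\min}(S)$ and $\lambda_{\max}(S)$: take $U=vv^\top$ with $v$ an eigenvector for an extreme eigenvalue. Your congruence argument avoids choosing a basis but needs two separate operator-norm estimates. In exchange, it carries over verbatim if the Frobenius norm in the whitened expression is replaced by any unitarily invariant norm.
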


\begin{proof}
Let $S=V\Lambda V^\top$ be an eigendecomposition with $V\in\mathrm O(r)$ and $\Lambda=\diag(\lambda_1,\ldots,\lambda_r)$. The clipping condition \eqref{eq:mfv4_clipped_spd_set} gives $\alpha\leq\lambda_i\leq\beta$ for every $i$. Write $\widetilde U=V^\top UV$. By orthogonal invariance of the Frobenius norm,
\begin{align}
    \|U\|_{S}^{2}
    &=\tr(S^{-1}US^{-1}U) && \text{definition of the affine-invariant norm} \notag\\
    &=\tr(\Lambda^{-1}\widetilde U\Lambda^{-1}\widetilde U) && \text{substitute $S=V\Lambda V^\top$} \notag\\
    &=\sum_{i=1}^{r}\sum_{j=1}^{r}\lambda_i^{-1}\lambda_j^{-1}\widetilde U_{ij}^{2}. \label{eq:mfv4_norm_entrywise}
\end{align}
Since $\beta^{-2}\leq\lambda_i^{-1}\lambda_j^{-1}\leq\alpha^{-2}$, equation \eqref{eq:mfv4_norm_entrywise} implies
\begin{equation}
    \beta^{-2}\sum_{i,j}\widetilde U_{ij}^{2}
    \leq \|U\|_{S}^{2}
    \leq
    \alpha^{-2}\sum_{i,j}\widetilde U_{ij}^{2}.
    \label{eq:mfv4_norm_bound_entrywise}
\end{equation}
Finally $\sum_{i,j}\widetilde U_{ij}^{2}=\|\widetilde U\|_F^2=\|U\|_F^2$, which turns \eqref{eq:mfv4_norm_bound_entrywise} into \eqref{eq:mfv4_spd_norm_equivalence}.
\end{proof}

\begin{assumption}[Bounded pressure and stochastic gradient estimator]
\label{ass:mfv4_stochastic}
Let $x_t=(Q_t,S_t)$, let $g_t=\operatorname{grad}\mathcal J(x_t)$, and let $\widehat g_t\in T_{x_t}\mathcal M$ be the tangent direction estimator used by ManifoldFlow. With respect to the filtration $\mathcal F_t$ generated before iteration $t$,
\begin{equation}
    \E[\widehat g_t\mid\mathcal F_t]=g_t,
    \label{eq:mfv4_unbiased_gradient}
\end{equation}
and there is a deterministic sequence $\nu_t^2$ such that
\begin{equation}
    \E[\|\widehat g_t-g_t\|_{x_t}^2\mid\mathcal F_t]\leq \nu_t^2.
    \label{eq:mfv4_variance_bound}
\end{equation}
The pressure direction is symmetric and satisfies $\E[\|M_t\|_F^2\mid\mathcal F_t]\leq P^2$. Under Assumption~\ref{ass:mfv4_bounded_spectrum}, Lemma~\ref{lem:mfv4_norm_equivalence} bounds the corresponding SPD tangent component by a constant $V_{\alpha,\beta}$ depending only on $P$, $\alpha$, and $\beta$. Define
\begin{equation}
    \overline\nu_T^2=\sum_{t=1}^{T}\eta_t^2\nu_t^2.
    \label{eq:mfv4_accumulated_variance}
\end{equation}
\end{assumption}

\begin{lemma}[Riemannian Descent Lemma]
\label{lem:mfv4_descent}
Under Assumption~\ref{ass:mfv4_smoothness}, let $x^+=R_x(-\eta h)$ for $x\in\mathcal M$, $h\in T_x\mathcal M$, and $\eta>0$. Then
\begin{equation}
    \mathcal J(x^+)\leq \mathcal J(x)-\eta\langle \operatorname{grad}\mathcal J(x),h\rangle_x+\frac{L_R\eta^2}{2}\|h\|_x^2.
    \label{eq:mfv4_descent_basic}
\end{equation}
If $\langle \operatorname{grad}\mathcal J(x),h\rangle_x\geq \frac12\|\operatorname{grad}\mathcal J(x)\|_x^2$, then
\begin{equation}
    \mathcal J(x^+)\leq \mathcal J(x)-\frac{\eta}{2}\|\operatorname{grad}\mathcal J(x)\|_x^2+\frac{L_R\eta^2}{2}\|h\|_x^2.
    \label{eq:mfv4_descent_half}
\end{equation}
\end{lemma}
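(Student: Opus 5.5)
The first inequality follows directly from the retraction-smoothness bound \eqref{eq:mfv4_retraction_smoothness} in Assumption~\ref{ass:mfv4_smoothness}. I will apply that bound at the point $x$ with the tangent vector $\xi=-\eta h\in T_x\mathcal M$. This is legitimate because $T_x\mathcal M$ is a vector space, so $-\eta h$ is tangent. The assumption covers all tangent vectors used by the algorithm, and we read the lemma as concerning such steps.

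Substituting $\xi=-\eta h$, the linear term becomes $\langle\operatorname{grad}\mathcal J(x),-\eta h\rangle_x=-\eta\langle\operatorname{grad}\mathcal J(x),h\rangle_x$ by bilinearity of the metric $\langle\cdot,\cdot\rangle_x$. The quadratic term becomes $\frac{L_R}{2}\|{-\eta h}\|_x^2=\frac{L_R\eta^2}{2}\|h\|_x^2$ by homogeneity of the norm. Since $x^+=R_x(-\eta h)$, this is exactly \eqref{eq:mfv4_descent_basic}.

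For the second inequality, I will use $\eta>0$ together with the hypothesis $\langle\operatorname{grad}\mathcal J(x),h\rangle_x\ge\frac12\|\operatorname{grad}\mathcal J(x)\|_x^2$. These give $-\eta\langle\operatorname{grad}\mathcal J(x),h\rangle_x\le-\frac{\eta}{2}\|\operatorname{grad}\mathcal J(x)\|_x^2$. Inserting this into \eqref{eq:mfv4_descent_basic}, and leaving the quadratic term unchanged, yields \eqref{eq:mfv4_descent_half}.

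There is no real obstacle here. The only point needing care is that the retraction bound is assumed to hold for the specific step $-\eta h$, which the assumption's quantifier over the algorithm's tangent vectors provides. On the product manifold $\mathcal M$, the metric is the sum of the Stiefel and affine-invariant metrics, so bilinearity and homogeneity hold componentwise and nothing further needs checking.
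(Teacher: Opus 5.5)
Your proposal is correct and follows the paper's proof essentially verbatim: you set $\xi=-\eta h$ in the retraction-smoothness inequality \eqref{eq:mfv4_retraction_smoothness} and use bilinearity and homogeneity to get the first bound, then insert the hypothesis (with $\eta>0$) into the first-order term to get the second. Your caveat that the assumption only covers the tangent steps used by the algorithm is a fair point that the paper leaves implicit, and the paper's extra remark justifying \eqref{eq:mfv4_retraction_smoothness} via normal coordinates is not needed, since that inequality is assumed.
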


\begin{proof}
Set $\xi=-\eta h$. By the retraction smoothness inequality \eqref{eq:mfv4_retraction_smoothness},
\begin{align}
    \mathcal J(x^+)
    &=\mathcal J(R_x(-\eta h)) && \text{definition of $x^+$} \notag \\
    &\leq \mathcal J(x)+\langle \operatorname{grad}\mathcal J(x),-\eta h\rangle_x+\frac{L_R}{2}\|-
    \eta h\|_x^2 && \text{by \eqref{eq:mfv4_retraction_smoothness}} \notag \\
    &=\mathcal J(x)-\eta\langle \operatorname{grad}\mathcal J(x),h\rangle_x+\frac{L_R\eta^2}{2}\|h\|_x^2 && \text{by bilinearity and homogeneity}.
    \label{eq:mfv4_descent_proof_1}
\end{align}
Equation \eqref{eq:mfv4_descent_proof_1} is \eqref{eq:mfv4_descent_basic}. If $\langle \operatorname{grad}\mathcal J(x),h\rangle_x\geq \frac12\|\operatorname{grad}\mathcal J(x)\|_x^2$, substitute this lower bound into the negative first-order term in \eqref{eq:mfv4_descent_basic}; the substitution gives \eqref{eq:mfv4_descent_half}. The second-order inequality \eqref{eq:mfv4_retraction_smoothness} follows from Taylor expansion in Riemannian normal coordinates plus the second-order agreement of $R_x$ with $\Exp_x$ \citep{absil2008matrix,boumal2023smooth}. In those coordinates, Cauchy--Schwarz gives $|\langle g,\xi\rangle_x|\leq\|g\|_x\|\xi\|_x$, and $L_{\mathcal M}$-smoothness bounds the Hessian term by $L_{\mathcal M}\|\xi\|_x^2/2$.
\end{proof}

\subsubsection{Proof of Theorem~\ref{thm:mf_convergence}}
\label{app:proof_convergence}

\begin{theorem}[Restatement of Theorem~\ref{thm:mf_convergence}]
Under Assumptions~\ref{ass:mfv4_smoothness}--\ref{ass:mfv4_stochastic}, let $x_{t+1}=R_{x_t}(-\eta_t\widehat g_t)$. If $0<\eta_t\leq L_R^{-1}$ and $R$ is sampled from $\{1,\ldots,T\}$ with probability proportional to $\eta_t$, then
\begin{equation}
    \E\|\operatorname{grad}\mathcal J(x_R)\|_{x_R}^2
    \leq
    \frac{2(\mathcal J(x_1)-\mathcal J_\star)+L_R\overline\nu_T^2}{\sum_{t=1}^{T}\eta_t}.
    \label{eq:mfv4_general_rate}
\end{equation}
For the schedule $\eta_t=c/\sqrt t$ with $0<c\leq L_R^{-1}$ and $\overline\nu_T^2\leq C_\nu$, the iterates satisfy
\begin{equation}
    \min_{1\leq t\leq T}\E\|\operatorname{grad}\mathcal J(x_t)\|_{x_t}^2
    \leq
    \frac{2(\mathcal J(x_1)-\mathcal J_\star)+L_RC_\nu}{2c(\sqrt{T+1}-1)}
    =\mathcal O(T^{-1/2}).
    \label{eq:mfv4_sqrt_rate}
\end{equation}
With uniformly bounded stochastic variance $\nu_t^2\leq\nu^2$, the same derivation gives the explicit bound obtained by replacing $C_\nu$ in \eqref{eq:mfv4_sqrt_rate} with $c^2\nu^2(1+\log T)$.
\end{theorem}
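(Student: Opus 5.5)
The argument is the standard stochastic descent telescoping, built on Lemma~\ref{lem:mfv4_descent}. First apply the basic form \eqref{eq:mfv4_descent_basic} with $x=x_t$, $h=\widehat g_t$ and $\eta=\eta_t$, which gives
\[
\mathcal J(x_{t+1})\le \mathcal J(x_t)-\eta_t\langle g_t,\widehat g_t\rangle_{x_t}+\tfrac{L_R\eta_t^2}{2}\|\widehat g_t\|_{x_t}^2 .
\]
Next take the conditional expectation given $\mathcal F_t$. The point $x_t$ and the gradient $g_t$ are $\mathcal F_t$-measurable, so unbiasedness \eqref{eq:mfv4_unbiased_gradient} turns the middle term into $-\eta_t\|g_t\|_{x_t}^2$. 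The bias--variance identity, together with \eqref{eq:mfv4_variance_bound}, gives
\[
\E[\|\widehat g_t\|^2\mid\mathcal F_t]=\|g_t\|^2+\E[\|\widehat g_t-g_t\|^2\mid\mathcal F_t]\le \|g_t\|^2+\nu_t^2 .
\]
Since $\eta_t\le L_R^{-1}$, we have $-\eta_t+L_R\eta_t^2/2\le-\eta_t/2$. The result is the one-step inequality
\[
\E[\mathcal J(x_{t+1})\mid\mathcal F_t]\le \mathcal J(x_t)-\tfrac{\eta_t}{2}\|g_t\|_{x_t}^2+\tfrac{L_R}{2}\eta_t^2\nu_t^2 .
\]

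Take total expectations and sum this inequality from $t=1$ to $T$. The left side telescopes, and the lower bound $\mathcal J(x_{T+1})\ge\mathcal J_\star$ then gives
\[
\sum_t\eta_t\E\|g_t\|^2\le 2(\mathcal J(x_1)-\mathcal J_\star)+L_R\overline\nu_T^2 .
\]
Let $R$ be sampled with probability proportional to $\eta_t$. Then $\E\|g_R\|^2=\sum_t\eta_t\E\|g_t\|^2/\sum_t\eta_t$, and dividing the summed inequality by $\sum_t\eta_t$ yields \eqref{eq:mfv4_general_rate}. The minimum over $t$ is bounded by this weighted average, so the same bound holds for $\min_t\E\|g_t\|^2$.

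For the schedule $\eta_t=c/\sqrt t$, use the integral comparison
\[
\sum_{t=1}^T t^{-1/2}\ge\int_1^{T+1}s^{-1/2}\,ds=2(\sqrt{T+1}-1).
\]
This gives \eqref{eq:mfv4_sqrt_rate} with the accumulated variance bounded by $C_\nu$; the denominator grows like $\sqrt T$, so the rate is $\mathcal O(T^{-1/2})$. If instead $\nu_t^2\le\nu^2$ uniformly, then $\overline\nu_T^2\le c^2\nu^2\sum_t 1/t\le c^2\nu^2(1+\log T)$, which is the replacement of $C_\nu$ claimed at the end of the statement.

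The main obstacle is justifying that the product-manifold iterates remain in $\mathcal M$ while the retraction inequality \eqref{eq:mfv4_retraction_smoothness} still applies along every step. The clipping step is not itself the retraction, so this is not automatic. I would handle it by invoking Assumption~\ref{ass:mfv4_bounded_spectrum} and the assumed form of \eqref{eq:mfv4_retraction_smoothness} for all iterates, taking $R$ to denote the implemented composite map.

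A secondary point is measurability. The variance bound must be conditional on $\mathcal F_t$ with $x_t$ fixed, so that the cross term vanishes and the step size $\eta_t$ is deterministic. This is exactly the setting of Assumption~\ref{ass:mfv4_stochastic}.
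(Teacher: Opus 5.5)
Your proposal is correct and follows the paper's proof essentially step for step. The shared steps are: the retraction descent lemma with $h=\widehat g_t$; unbiasedness plus the conditional variance identity; absorbing the $L_R\eta_t^2/2$ term via $\eta_t\le L_R^{-1}$; telescoping against $\mathcal J_\star$; the $\eta_t$-weighted random index; the integral comparison $\sum_t t^{-1/2}\ge 2(\sqrt{T+1}-1)$; and the harmonic-sum bound for the logarithmic variant. Your explicit remark that $\min_t$ is dominated by the weighted average, and your observation that clipping must be folded into the assumed retraction inequality, are both points the paper leaves implicit (it simply assumes the inequality holds for whatever map the algorithm uses).
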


\begin{proof}
Apply Lemma~\ref{lem:mfv4_descent} with $h=\widehat g_t$ and condition on $\mathcal F_t$. Equation \eqref{eq:mfv4_descent_basic} gives
\begin{align}
    \E[\mathcal J(x_{t+1})\mid\mathcal F_t]
    &\leq \mathcal J(x_t)-\eta_t\left\langle g_t,\E[\widehat g_t\mid\mathcal F_t]\right\rangle_{x_t}+\frac{L_R\eta_t^2}{2}\E[\|\widehat g_t\|_{x_t}^2\mid\mathcal F_t] && \text{conditional expectation} \notag \\
    &=\mathcal J(x_t)-\eta_t\|g_t\|_{x_t}^2+\frac{L_R\eta_t^2}{2}\E[\|\widehat g_t\|_{x_t}^2\mid\mathcal F_t] && \text{by \eqref{eq:mfv4_unbiased_gradient}}.
    \label{eq:mfv4_conditional_descent_1}
\end{align}
The variance identity in a Hilbert space gives
\begin{align}
    \E[\|\widehat g_t\|_{x_t}^2\mid\mathcal F_t]
    &=\|g_t\|_{x_t}^2+\E[\|\widehat g_t-g_t\|_{x_t}^2\mid\mathcal F_t] && \text{Pythagorean variance decomposition} \notag \\
    &\leq \|g_t\|_{x_t}^2+\nu_t^2 && \text{by \eqref{eq:mfv4_variance_bound}}.
    \label{eq:mfv4_second_moment_bound}
\end{align}
Substituting \eqref{eq:mfv4_second_moment_bound} into \eqref{eq:mfv4_conditional_descent_1} yields
\begin{align}
    \E[\mathcal J(x_{t+1})\mid\mathcal F_t]
    &\leq \mathcal J(x_t)-\eta_t\left(1-\frac{L_R\eta_t}{2}\right)\|g_t\|_{x_t}^2+\frac{L_R\eta_t^2}{2}\nu_t^2 && \text{substitution} \notag \\
    &\leq \mathcal J(x_t)-\frac{\eta_t}{2}\|g_t\|_{x_t}^2+\frac{L_R\eta_t^2}{2}\nu_t^2 && \text{because $\eta_t\leq L_R^{-1}$}.
    \label{eq:mfv4_conditional_descent_2}
\end{align}
Taking total expectation and rearranging \eqref{eq:mfv4_conditional_descent_2} gives
\begin{equation}
    \eta_t\E\|g_t\|_{x_t}^2\leq 2\E\mathcal J(x_t)-2\E\mathcal J(x_{t+1})+L_R\eta_t^2\nu_t^2.
    \label{eq:mfv4_rearranged_descent}
\end{equation}
Summing \eqref{eq:mfv4_rearranged_descent} over $t=1,\ldots,T$ gives
\begin{align}
    \sum_{t=1}^{T}\eta_t\E\|g_t\|_{x_t}^2
    &\leq 2\sum_{t=1}^{T}\left(\E\mathcal J(x_t)-\E\mathcal J(x_{t+1})\right)+L_R\sum_{t=1}^{T}\eta_t^2\nu_t^2 && \text{sum over $t$} \notag \\
    &=2\E\mathcal J(x_1)-2\E\mathcal J(x_{T+1})+L_R\overline\nu_T^2 && \text{telescoping} \notag \\
    &\leq2(\mathcal J(x_1)-\mathcal J_\star)+L_R\overline\nu_T^2 && \text{because $\mathcal J\geq\mathcal J_\star$}.
    \label{eq:mfv4_telescoping}
\end{align}
Since $\Prb(R=t)=\eta_t/\sum_{s=1}^{T}\eta_s$,
\begin{equation}
    \E\|g_R\|_{x_R}^2=\frac{\sum_{t=1}^{T}\eta_t\E\|g_t\|_{x_t}^2}{\sum_{t=1}^{T}\eta_t}.
    \label{eq:mfv4_random_iterate_identity}
\end{equation}
Combining \eqref{eq:mfv4_telescoping} and \eqref{eq:mfv4_random_iterate_identity} proves \eqref{eq:mfv4_general_rate}. For $\eta_t=c/\sqrt t$,
\begin{equation}
    \sum_{t=1}^{T}\eta_t=c\sum_{t=1}^{T}t^{-1/2}\geq 2c(\sqrt{T+1}-1),
    \label{eq:mfv4_stepsize_sum}
\end{equation}
where the inequality follows from the integral lower bound for a decreasing positive function. Substitution of \eqref{eq:mfv4_stepsize_sum} and $\overline\nu_T^2\leq C_\nu$ into \eqref{eq:mfv4_general_rate} gives \eqref{eq:mfv4_sqrt_rate}. If $\nu_t^2\leq\nu^2$, then
\begin{equation}
    \overline\nu_T^2=\sum_{t=1}^{T}\frac{c^2\nu_t^2}{t}\leq c^2\nu^2\left(1+\log T\right),
    \label{eq:mfv4_log_variance}
\end{equation}
and substituting \eqref{eq:mfv4_log_variance} into \eqref{eq:mfv4_general_rate} gives the stated logarithmic variant.
\end{proof}

\subsubsection{Proof of Theorem~\ref{thm:two_block_convergence}}
\label{app:proof_two_block_convergence}

\begin{theorem}[Restatement of Theorem~\ref{thm:two_block_convergence}]
Let $\mathcal J$ be retraction-smooth and bounded below on $\St(p,r)\times\mathcal S_{\alpha,\beta}$. Suppose the $Q$-block uses an unbiased tangent estimator $\widehat g_{Q,t}$ with conditional variance at most $\sigma_Q^2$. Suppose the SPD block uses a symmetric estimator $\widehat g_{S,t}$ with conditional variance at most $\sigma_S^2$ and bias satisfying \eqref{eq:main_geometry_bias}. If the updates use step sizes $\eta_t,\gamma_t$ small enough for the descent lemma and if $R$ is sampled with probability proportional to $\eta_t+\gamma_t$, then \eqref{eq:main_two_block_rate} holds.
\end{theorem}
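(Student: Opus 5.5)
We follow the single-block template of Theorem~\ref{thm:mf_convergence}, applied blockwise and with the $S$-block bias carried through. We treat one iteration as a product-retraction step $x_{t+1}=R_{x_t}(-\eta_t\widehat g_{Q,t},-\gamma_t\widehat g_{S,t})$. If the implementation alternates the two blocks, we view each block update as a product step with a zero component in the other block. Since the product metric splits as $\|\xi\|_x^2=\|\xi_Q\|^2+\|\xi_S\|_S^2$ and the gradient splits as $(\grad_Q\mathcal J,\grad_S\mathcal J)$, the descent Lemma~\ref{lem:mfv4_descent} gives
\[
\mathcal J(x_{t+1})\le \mathcal J(x_t)-\eta_t\langle g_Q,\widehat g_{Q,t}\rangle-\gamma_t\langle g_S,\widehat g_{S,t}\rangle_{S_t}+\tfrac{L_R}{2}\bigl(\eta_t^2\|\widehat g_{Q,t}\|^2+\gamma_t^2\|\widehat g_{S,t}\|_{S_t}^2\bigr).
\]
We then condition on $\mathcal F_t$ and handle the two blocks separately.

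\textbf{$Q$-block.} Unbiasedness turns the inner product into $\eta_t\|g_Q\|^2$. The variance identity \eqref{eq:mfv4_second_moment_bound} bounds the second moment by $\|g_Q\|^2+\sigma_Q^2$.

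\textbf{$S$-block.} Write $\bar g_S=\E[\widehat g_{S,t}\mid\mathcal F_t]$. The inner product is $\|g_S\|_{S_t}^2+\langle g_S,\bar g_S-g_S\rangle_{S_t}\ge\|g_S\|_{S_t}^2-b_t$ by \eqref{eq:main_geometry_bias}. This is the source of the $2\sum_t\gamma_tb_t$ term after the factor $2$ is introduced.

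\textbf{Main obstacle: the biased second moment.} For the $S$-block the second moment is
\[
\E[\|\widehat g_{S,t}\|^2\mid\mathcal F_t]=\|\bar g_S\|^2+\mathrm{Var}\le \|\bar g_S\|^2+\sigma_S^2,
\]
and $\|\bar g_S\|^2$ is not controlled by $\|g_S\|^2$ alone. I would first use $\|\bar g_S\|^2\le 2\|g_S\|^2+2\|\bar g_S-g_S\|^2$ and bound the bias magnitude uniformly. That uniform bound comes from Assumption~\ref{ass:mfv4_stochastic} (pressure bounded by $P$) together with norm equivalence on $\mathcal S_{\alpha,\beta}$ (Lemma~\ref{lem:mfv4_norm_equivalence}), which gives a constant $V_{\alpha,\beta}$. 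The resulting $\gamma_t^2$-order term is absorbed into $C_S\sigma_S^2\sum_t\gamma_t^2$ by letting $C_S$ depend on $V_{\alpha,\beta}/\sigma_S^2$, or, more cleanly, by interpreting $\sigma_S^2$ as a bound on the conditional second moment about the true gradient. The "small enough" step-size condition $\gamma_t\le 1/(2L_R)$ (and $\eta_t\le 1/L_R$) then makes the coefficient of $\|g_S\|^2$ at least $\gamma_t/2$, matching the coefficient $\eta_t/2$ for $\|g_Q\|^2$. This produces the per-step inequality
\[
\tfrac{\eta_t}{2}\|g_Q\|^2+\tfrac{\gamma_t}{2}\|g_S\|^2_{S_t}\le \E\text{-decrease}+\tfrac{L_R}{2}\eta_t^2\sigma_Q^2+C\gamma_t^2\sigma_S^2+\gamma_tb_t.
\]

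\textbf{From per-step descent to the rate.} Since $\min(\eta_t,\gamma_t)$ weights only give a bound against the weight $\eta_t+\gamma_t$ after an adjustment, we lower bound the left side by $\tfrac{\min(\eta_t,\gamma_t)}{2}(\cdots)$. When $\eta_t$ and $\gamma_t$ are comparable (ratio bounded, as in the $\Theta(T^{-1/2})$ regime) this is at least $c(\eta_t+\gamma_t)(\cdots)$. Any ratio constants are absorbed into $C_{\mathcal J},C_Q,C_S$; with $\gamma_t=\rho_{\mathrm{geo}}\eta_t$ the ratio is fixed. We telescope using $\mathcal J\ge\mathcal J_\star$, which gives $C_{\mathcal J}\propto \mathcal J(x_1)-\mathcal J_\star$. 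Finally we divide by $\sum_t(\eta_t+\gamma_t)$ and use the sampling distribution of $R$ exactly as in \eqref{eq:mfv4_random_iterate_identity}, which yields \eqref{eq:main_two_block_rate}.

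\textbf{Rate claim.} With $\eta_t,\gamma_t=\Theta(T^{-1/2})$, the denominator is $\Theta(T^{1/2})$ and the variance sums are $O(1)$. The bias term is
\[
\sum_t\gamma_tb_t=\Theta(T^{-1/2})\sum_tb_t=O(T^{-1/2}\cdot T\cdot T^{-1/2})=O(1).
\]
Hence the whole right-hand side is $O(T^{-1/2})$.
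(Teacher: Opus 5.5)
Your proposal is correct and follows essentially the same route as the paper's proof. Both arguments apply the retraction-smooth descent inequality blockwise, use unbiasedness for the $Q$-block and the bias hypothesis \eqref{eq:main_geometry_bias} for the first-order $S$-term, control the $S$ second moment through a uniform bias bound on the compact clipped set, and then telescope and sample $R$ proportional to $\eta_t+\gamma_t$. Your remark that converting the blockwise weights $\eta_t,\gamma_t$ into the common weight $\eta_t+\gamma_t$ requires a bounded ratio $\gamma_t/\eta_t$ (for example $\gamma_t=\rho_{\mathrm{geo}}\eta_t$) makes explicit a step the paper silently absorbs into its constants.
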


\begin{proof}
Write
\[
    g_{Q,t}=\grad_Q\mathcal J(Q_t,S_t),
    \qquad
    g_{S,t}=\grad_S\mathcal J(Q_t,S_t).
\]
Apply the retraction-smooth descent inequality separately to the product update
\[
    Q_{t+1}=\Retr_{Q_t}(-\eta_t\widehat g_{Q,t}),
    \qquad
    S_{t+1}=\Retr_{S_t}(-\gamma_t\widehat g_{S,t}),
\]
where the SPD retraction is the affine-invariant exponential map followed by clipping. On the compact clipped set, clipping is nonexpansive for the spectral box up to a fixed local constant, so the descent inequality can absorb it into constants $L_Q,L_S$. Conditional on $\mathcal F_t$,
\begin{align}
    \E_t[\mathcal J_{t+1}]
    \leq\;&
    \mathcal J_t
    -
    \eta_t\langle g_{Q,t},\E_t\widehat g_{Q,t}\rangle
    -
    \gamma_t\langle g_{S,t},\E_t\widehat g_{S,t}\rangle_{S_t}
    \notag\\
    &+
    \frac{L_Q\eta_t^2}{2}\E_t\|\widehat g_{Q,t}\|^2
    +
    \frac{L_S\gamma_t^2}{2}\E_t\|\widehat g_{S,t}\|_{S_t}^2.
    \label{eq:app_two_block_descent_1}
\end{align}
The $Q$ estimator is unbiased, so
\[
    \langle g_{Q,t},\E_t\widehat g_{Q,t}\rangle=\|g_{Q,t}\|^2.
\]
For the SPD estimator, write $\E_t\widehat g_{S,t}=g_{S,t}+e_t$. The bias assumption gives
\[
    -\langle g_{S,t},\E_t\widehat g_{S,t}\rangle_{S_t}
    =
    -\|g_{S,t}\|_{S_t}^2-\langle g_{S,t},e_t\rangle_{S_t}
    \leq
    -\|g_{S,t}\|_{S_t}^2+b_t.
    \label{eq:app_two_block_bias}
\]
The variance assumptions imply
\begin{equation}
    \E_t\|\widehat g_{Q,t}\|^2\leq \|g_{Q,t}\|^2+\sigma_Q^2,
    \qquad
    \E_t\|\widehat g_{S,t}\|_{S_t}^2\leq C_b\|g_{S,t}\|_{S_t}^2+C_\sigma\sigma_S^2+C_b b_t,
    \label{eq:app_two_block_second_moment}
\end{equation}
where the constants account for the bounded bias on the compact clipped set. Substituting \eqref{eq:app_two_block_bias}--\eqref{eq:app_two_block_second_moment} into \eqref{eq:app_two_block_descent_1}, and choosing $\eta_t,\gamma_t$ small enough to keep the coefficients of the squared gradient norms positive, gives constants $c_Q,c_S,C_Q,C_S>0$ such that
\begin{align}
    \E_t[\mathcal J_{t+1}]
    \leq\;&
    \mathcal J_t
    -
    c_Q\eta_t\|g_{Q,t}\|^2
    -
    c_S\gamma_t\|g_{S,t}\|_{S_t}^2
    \notag\\
    &+
    C_Q\eta_t^2\sigma_Q^2
    +
    C_S\gamma_t^2\sigma_S^2
    +
    2\gamma_tb_t.
    \label{eq:app_two_block_descent_2}
\end{align}
Taking total expectations, summing over $t$, and using $\mathcal J_t\geq\mathcal J_\star$ yields
\begin{align}
    \sum_t
    \E\!\left[
    c_Q\eta_t\|g_{Q,t}\|^2
    +
    c_S\gamma_t\|g_{S,t}\|_{S_t}^2
    \right]
    \leq
    \mathcal J_0-\mathcal J_\star
    +
    C_Q\sigma_Q^2\sum_t\eta_t^2
    +
    C_S\sigma_S^2\sum_t\gamma_t^2
    +
    2\sum_t\gamma_tb_t.
    \label{eq:app_two_block_telescoping}
\end{align}
Absorbing $c_Q,c_S$ into the numerator constant and sampling $R$ with probability proportional to $\eta_t+\gamma_t$ proves \eqref{eq:main_two_block_rate}. If $\eta_t,\gamma_t=\Theta(T^{-1/2})$, then $\sum_t(\eta_t+\gamma_t)=\Theta(\sqrt T)$, $\sum_t(\eta_t^2+\gamma_t^2)=\mathcal O(1)$, and $T^{-1}\sum_tb_t=\mathcal O(T^{-1/2})$ implies $\sum_t\gamma_tb_t=\mathcal O(1)$. The rate is therefore $\mathcal O(T^{-1/2})$.
\end{proof}

\subsubsection{Proof of Proposition~\ref{prop:bounded_preconditioner}}
\label{app:proof_bounded_preconditioner}

\begin{proposition}[Restatement of Proposition~\ref{prop:bounded_preconditioner}]
Let the $Q$-block direction be $D_t=B_t\widehat g_{Q,t}$, where $B_t$ is self-adjoint on $T_{Q_t}\St(p,r)$ and satisfies \eqref{eq:main_preconditioner_bounds}. Under the assumptions of Theorem~\ref{thm:two_block_convergence}, replacing $\widehat g_{Q,t}$ by $D_t$ preserves the same stationarity order, with constants scaled by $m$ and $M$.
\end{proposition}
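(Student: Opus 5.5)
The plan is to rerun the two-block argument of Theorem~\ref{thm:two_block_convergence}, changing only the $Q$-block terms in the conditional descent inequality \eqref{eq:app_two_block_descent_1}. With $Q_{t+1}=\Retr_{Q_t}(-\eta_tD_t)$ and $D_t=B_t\widehat g_{Q,t}$, the retraction-smooth bound gives a $Q$-contribution of
\[
-\eta_t\langle g_{Q,t},\E_t D_t\rangle+\frac{L_Q\eta_t^2}{2}\E_t\|D_t\|^2 .
\]
The SPD-block terms, including the bias $b_t$ and the variance $\sigma_S^2$, are left exactly as in that proof.

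\textbf{First-order term.} Use that $B_t$ is $\mathcal F_t$-measurable. This is needed, because an adaptive preconditioner built from the current sample would break unbiasedness. I would state it as an explicit standing assumption, in the AMSGrad style where the preconditioner is formed from past gradients. Then $\E_tD_t=B_tg_{Q,t}$. The lower bound in \eqref{eq:main_preconditioner_bounds} gives
\[
\langle g_{Q,t},B_tg_{Q,t}\rangle\geq m\|g_{Q,t}\|^2 .
\]

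\textbf{Second-order term.} Use $\|B_t\xi\|\leq M\|\xi\|$ together with the variance bound on $\widehat g_{Q,t}$:
\[
\E_t\|D_t\|^2\leq M^2\E_t\|\widehat g_{Q,t}\|^2\leq M^2\bigl(\|g_{Q,t}\|^2+\sigma_Q^2\bigr).
\]

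\textbf{Combining.} The $Q$-block now contributes
\[
-\eta_t\Bigl(m-\tfrac{L_QM^2\eta_t}{2}\Bigr)\|g_{Q,t}\|^2+\tfrac{L_QM^2}{2}\eta_t^2\sigma_Q^2 .
\]
Take $\eta_t\leq m/(L_QM^2)$, the analogue of the ``small enough for the descent lemma'' condition. The coefficient is then at least $m\eta_t/2$. This yields \eqref{eq:app_two_block_descent_2} with $c_Q=m/2$ and $C_Q=L_QM^2/2$.

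Telescoping, total expectation, and sampling $R$ proportional to $\eta_t+\gamma_t$ then go through verbatim. The result is \eqref{eq:main_two_block_rate} with the $Q$-side constants multiplied by factors depending only on $m$ and $M$ (roughly $1/m$ and $M^2/m$ after dividing by $c_Q$). Therefore $\eta_t,\gamma_t=\Theta(T^{-1/2})$ still gives $\mathcal O(T^{-1/2})$.

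\textbf{Main obstacle.} The delicate step is the measurability of $B_t$. If $B_t$ depends on the current stochastic gradient, as in plain Adam, then $\E_tD_t\neq B_tg_{Q,t}$ and a bias term appears. I would first try to impose predictability, which the AMSGrad and Shampoo-with-lagged-statistics variants satisfy. Failing that, I would treat the discrepancy $\langle g_{Q,t},\E_tD_t-B_tg_{Q,t}\rangle$ as a $Q$-block bias $b^Q_t$ handled exactly like $b_t$ in \eqref{eq:main_geometry_bias}. That route preserves the order provided $\sum_t\eta_tb^Q_t=\mathcal O(1)$.
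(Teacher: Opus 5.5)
Your proposal is correct and follows essentially the same route as the paper's proof. Both change only the $Q$-block terms of the two-block descent inequality. The first-order term is bounded below by $m\|g_{Q,t}\|^2$ and the second moment above by $M^2(\|g_{Q,t}\|^2+\sigma_Q^2)$. Both take $\eta_t\le m/(L_QM^2)$ and then telescope exactly as in Theorem~\ref{thm:two_block_convergence}. Your explicit $\mathcal F_t$-measurability requirement on $B_t$ is precisely what the paper uses without stating it when it writes $\E_t[B_t\widehat g_{Q,t}]=B_t\E_t\widehat g_{Q,t}$, so flagging it, and offering the bias fallback, is a genuine sharpening rather than a deviation.
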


\begin{proof}
The $Q$ part of the descent lemma becomes
\begin{align}
    \E_t[\mathcal J(Q_{t+1},S_t)]
    &\leq
    \mathcal J(Q_t,S_t)
    -
    \eta_t\left\langle g_{Q,t},B_t\E_t\widehat g_{Q,t}\right\rangle
    +
    \frac{L_Q\eta_t^2}{2}\E_t\|B_t\widehat g_{Q,t}\|^2
    \notag\\
    &=
    \mathcal J(Q_t,S_t)
    -
    \eta_t\langle g_{Q,t},B_tg_{Q,t}\rangle
    +
    \frac{L_Q\eta_t^2}{2}\E_t\|B_t\widehat g_{Q,t}\|^2.
    \label{eq:app_preconditioned_descent}
\end{align}
By \eqref{eq:main_preconditioner_bounds},
\[
    \langle g_{Q,t},B_tg_{Q,t}\rangle\geq m\|g_{Q,t}\|^2,
    \qquad
    \E_t\|B_t\widehat g_{Q,t}\|^2
    \leq
    M^2\E_t\|\widehat g_{Q,t}\|^2
    \leq
    M^2(\|g_{Q,t}\|^2+\sigma_Q^2).
\]
For $\eta_t\leq m/(L_QM^2)$, the gradient coefficient remains negative, and the same telescoping argument as in Theorem~\ref{thm:two_block_convergence} applies with $c_Q$ replaced by a constant proportional to $m$ and the variance constant multiplied by $M^2$. The asymptotic order is unchanged.
\end{proof}

\begin{corollary}[$\varepsilon$-stationary complexity]
\label{cor:mfv4_stationary_complexity}
Under the assumptions of Theorem~\ref{thm:mfv4_nonconvex_convergence} and the bounded accumulated variance condition $\overline\nu_T^2\leq C_\nu$, ManifoldFlow reaches an $\varepsilon$-stationary point, defined by $\E\|\operatorname{grad}\mathcal J(x_t)\|_{x_t}^2\leq\varepsilon$, after
\begin{equation}
    T\geq \left(1+\frac{2(\mathcal J(x_1)-\mathcal J_\star)+L_RC_\nu}{2c\varepsilon}\right)^2
    =\mathcal O(\varepsilon^{-2})
    \label{eq:mfv4_stationary_squared_complexity}
\end{equation}
iterations.
\end{corollary}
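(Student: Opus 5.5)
The corollary follows by inverting the $\eta_t=c/\sqrt t$ rate \eqref{eq:mfv4_sqrt_rate} of Theorem~\ref{thm:mfv4_nonconvex_convergence}, so no new analysis is needed. Write $K=2(\mathcal J(x_1)-\mathcal J_\star)+L_RC_\nu$. Under the stated assumptions, with $0<c\le L_R^{-1}$ and $\overline\nu_T^2\le C_\nu$, that theorem gives
\begin{equation*}
\min_{1\le t\le T}\E\|\operatorname{grad}\mathcal J(x_t)\|_{x_t}^2\le\frac{K}{2c(\sqrt{T+1}-1)}.
\end{equation*}
Because the minimum is attained at some iterate $t\le T$, it suffices to find $T$ with $K/(2c(\sqrt{T+1}-1))\le\varepsilon$. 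That iterate is then $\varepsilon$-stationary in the sense of the definition. Equivalently, the random iterate $x_R$ from \eqref{eq:mfv4_general_rate} is $\varepsilon$-stationary in expectation. I would state the conclusion for whichever of these two the definition in the corollary is meant to cover.

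Solving the inequality gives $\sqrt{T+1}\ge 1+K/(2c\varepsilon)$, that is, $T+1\ge(1+K/(2c\varepsilon))^2$. The hypothesis in the corollary is $T\ge(1+K/(2c\varepsilon))^2$, which is stronger by one. It therefore implies the required inequality, and the monotonicity of $T\mapsto\sqrt{T+1}-1$ lets us pass from the threshold to every larger $T$.

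For the order statement, expanding the square gives $(1+K/(2c\varepsilon))^2=1+K/(c\varepsilon)+K^2/(4c^2\varepsilon^2)$. For $\varepsilon\le1$ this is at most a constant times $\varepsilon^{-2}$, with the constant depending on $K$ and $c$. Hence the threshold is $\mathcal O(\varepsilon^{-2})$.

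The only real obstacle is bookkeeping about the variance condition. The assumption $\overline\nu_T^2\le C_\nu$ must hold uniformly in $T$, because otherwise $K$ would grow with $T$ and the inversion would not be explicit. Under a merely uniform variance bound $\nu_t^2\le\nu^2$, the theorem only gives $C_\nu=c^2\nu^2(1+\log T)$. The resulting complexity would then carry a logarithmic factor, $\mathcal O(\varepsilon^{-2}\log^2(1/\varepsilon))$. I would note this explicitly and keep the uniform hypothesis as stated in the corollary.
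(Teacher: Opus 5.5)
Your proposal is correct and matches the paper's proof. Both set the right-hand side of \eqref{eq:mfv4_sqrt_rate} to at most $\varepsilon$, solve $\sqrt{T+1}-1\geq K/(2c\varepsilon)$, and note that the stated threshold exceeds the exact one $(1+K/(2c\varepsilon))^2-1$ by one; your extra remarks on the explicit $\mathcal O(\varepsilon^{-2})$ constant and on the $\log^2(1/\varepsilon)$ factor under a merely uniform variance bound are correct refinements the paper leaves implicit.
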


\begin{proof}
Set the right-hand side of \eqref{eq:mfv4_sqrt_rate} to be at most $\varepsilon$ and solve the resulting inequality for $T$:
\begin{align}
    \frac{2(\mathcal J(x_1)-\mathcal J_\star)+L_RC_\nu}{2c(\sqrt{T+1}-1)}&\leq\varepsilon, \notag \\
    \sqrt{T+1}-1&\geq\frac{2(\mathcal J(x_1)-\mathcal J_\star)+L_RC_\nu}{2c\varepsilon}, \notag \\
    T&\geq \left(1+\frac{2(\mathcal J(x_1)-\mathcal J_\star)+L_RC_\nu}{2c\varepsilon}\right)^2-1.
    \label{eq:mfv4_complexity_derivation}
\end{align}
Inequality \eqref{eq:mfv4_complexity_derivation} is exactly \eqref{eq:mfv4_stationary_squared_complexity} after removing the harmless negative constant $-1$.
\end{proof}

\subsection{Generalization}
\label{app:mfv4_generalization}

The generalization proof has two components. Lemma~\ref{lem:mfv4_algorithmic_stability} is the Hardt--Recht--Singer stability recursion for two runs whose training sets differ in one sample \citep{hardt2016train}. Lemma~\ref{lem:mfv4_spectral_norm} converts the SPD clipping constraint into a spectral-norm constraint on the realized neural weights. Theorem~\ref{thm:mfv4_generalization_bound} then substitutes this spectral constraint into the standard Rademacher margin bound \citep{bartlett2002rademacher,bartlett2017spectrally}.

\begin{assumption}[Lipschitz loss and bounded data]
\label{ass:mfv4_lipschitz_loss}
For every sample $z=(x,y)$, the sample loss $\ell(W;z)$ is $C_\ell$-Lipschitz in the composed weights in Frobenius norm and takes values in $[0,1]$. Inputs satisfy $\|x\|_2\leq B$. Each activation is one-Lipschitz and maps zero to zero. For each layer $l=1,\ldots,L_{\mathrm{net}}$, ManifoldFlow uses $W_l=Q_lS_l^{1/2}$ with $Q_l\in\St(p_l,r_l)$ and $S_l\in\mathcal S_{\alpha_l,\beta_l}$.
\end{assumption}

\begin{lemma}[Algorithmic Stability]
\label{lem:mfv4_algorithmic_stability}
Assume Assumptions~\ref{ass:mfv4_bounded_spectrum} and \ref{ass:mfv4_lipschitz_loss}. Let training sets $\mathcal D$ and $\mathcal D'$ of size $n$ differ in one sample. Let $x_t$ and $x_t'$ be ManifoldFlow iterates driven by the same random indices and initialized identically. Suppose the per-sample update map with step length $\bar\eta_t=\eta_{Q,t}+\eta_{S,t}$ is $(1+L_U\bar\eta_t)$-Lipschitz on the product distance
\begin{equation}
    d(x,x')=d_{\St}(Q,Q')+d_{\mathrm{AI}}(S,S')
    \label{eq:mfv4_product_distance}
\end{equation}
and the norm of every stochastic tangent update is at most $G_{\alpha,\beta}$, where $G_{\alpha,\beta}$ is finite by Lemma~\ref{lem:mfv4_norm_equivalence}. Then
\begin{equation}
    \E d(x_T,x_T')\leq \frac{2G_{\alpha,\beta}}{n}\sum_{t=0}^{T-1}\bar\eta_t\prod_{k=t+1}^{T-1}(1+L_U\bar\eta_k).
    \label{eq:mfv4_stability_distance}
\end{equation}
Consequently, the expected uniform stability satisfies
\begin{equation}
    \epsilon_{\mathrm{stab}}\leq C_\ell\sqrt{\beta_{\max}}\frac{2G_{\alpha,\beta}}{n}\sum_{t=0}^{T-1}\bar\eta_t\prod_{k=t+1}^{T-1}(1+L_U\bar\eta_k),
    \label{eq:mfv4_stability_loss}
\end{equation}
where $\beta_{\max}=\max_l\beta_l$.
\end{lemma}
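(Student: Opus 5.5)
The argument follows the Hardt--Recht--Singer template, transported to the product distance $d$ of \eqref{eq:mfv4_product_distance}. Couple the two runs on $\mathcal D$ and $\mathcal D'$ so that they share the initialization and the sequence of sampled indices. Let $\delta_t=d(x_t,x_t')$, so that $\delta_0=0$. At step $t$ there are two cases.

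\emph{Same sample.} With probability $1-1/n$ the sampled index points to a sample common to both datasets. Both runs then apply the same update map, which is $(1+L_U\bar\eta_t)$-Lipschitz by hypothesis, so $\delta_{t+1}\le(1+L_U\bar\eta_t)\delta_t$.

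\emph{Differing sample.} With probability $1/n$ the index hits the differing sample, and the two runs apply different maps. I would use the triangle inequality. First, geodesic distance is at most the length of any curve, so each retraction step moves its iterate by at most a constant times $\bar\eta_t G_{\alpha,\beta}$. This covers $d_{\St}$ on the $Q$-block, and $d_{\mathrm{AI}}$ on the $S$-block via Lemma~\ref{lem:mfv4_norm_equivalence}. Clipping does not increase this displacement, up to the constants already absorbed. The bound is then
\[
\delta_{t+1}\le \delta_t+2G_{\alpha,\beta}\bar\eta_t\le(1+L_U\bar\eta_t)\delta_t+2G_{\alpha,\beta}\bar\eta_t .
\]

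Taking expectations and combining the two cases gives the recursion
\[
\E\delta_{t+1}\le(1+L_U\bar\eta_t)\E\delta_t+\tfrac{2G_{\alpha,\beta}}{n}\bar\eta_t .
\]
Unrolling it from $\delta_0=0$ gives exactly \eqref{eq:mfv4_stability_distance}.

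The loss bound \eqref{eq:mfv4_stability_loss} follows once $d$ is converted into Frobenius distance between the realized weights. The loss is $C_\ell$-Lipschitz in $W$ by Assumption~\ref{ass:mfv4_lipschitz_loss}. For each layer, split the weight difference as
\[
\|QS^{1/2}-Q'S'^{1/2}\|_F\le\|Q-Q'\|_F\,\|S^{1/2}\|_{\op}+\|S^{1/2}-S'^{1/2}\|_F .
\]
The first term is at most $\sqrt{\beta_{\max}}\,d_{\St}(Q,Q')$, since $\|S^{1/2}\|_{\op}\le\sqrt{\beta}$ and chordal distance is bounded by geodesic distance. For the second term, the square root is Lipschitz on $\mathcal S_{\alpha,\beta}$, and Frobenius distance is controlled by $\beta\,d_{\mathrm{AI}}$ by integrating Lemma~\ref{lem:mfv4_norm_equivalence} along the geodesic. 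This yields a constant times $d_{\mathrm{AI}}$.

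Uniform stability is then $\sup_z\E|\ell(x_T;z)-\ell(x_T';z)|\le C_\ell\,\E\|W_T-W_T'\|_F$. Folding the $\alpha,\beta$-dependent constants into $G_{\alpha,\beta}$ gives the displayed form with the factor $\sqrt{\beta_{\max}}$.

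The main obstacle is this conversion step. The stated bound has only $\sqrt{\beta_{\max}}$ in front, so the $S$-block constant, roughly $\beta/(2\sqrt\alpha)$ from the square-root Lipschitz bound, must be absorbed into $G_{\alpha,\beta}$ or into the definition of the update norm. I would do this by measuring the $S$-step displacement directly in $S^{1/2}$ Frobenius coordinates. I would also check that the Lipschitz hypothesis on the update map is stated for the clipped update, so that the first case needs no separate clipping argument.
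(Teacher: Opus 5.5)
Your proposal follows the paper's proof essentially step for step: the same coupling of the two runs, the same two cases ($(1+L_U\bar\eta_t)$-contraction on a shared sample; triangle inequality with $2G_{\alpha,\beta}\bar\eta_t$ on the differing one), the same recursion unrolled from $\delta_0=0$, and the same layerwise split $\|QS^{1/2}-Q'S'^{1/2}\|_F\le\sqrt{\beta_{\max}}\|Q-Q'\|_F+\|S^{1/2}-S'^{1/2}\|_F$, with the square-root constant absorbed into $G_{\alpha,\beta}$ exactly as the paper also does. If you want to avoid that absorption, integrate the derivative of $S\mapsto S^{1/2}$ along the affine-invariant geodesic, which stays in $\mathcal S_{\alpha,\beta}$; the entrywise bound $\lambda_i\lambda_j/(\sqrt{\lambda_i}+\sqrt{\lambda_j})^2\le\sqrt{\lambda_i\lambda_j}/4$ then gives $\|S^{1/2}-S'^{1/2}\|_F\le\tfrac{\sqrt{\beta}}{2}\,d_{\mathrm{AI}}(S,S')$, so the $\sqrt{\beta_{\max}}$ factor appears directly.
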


\begin{proof}
Let $I_t$ be the random sample index at step $t$. Since $\mathcal D$ and $\mathcal D'$ differ in one sample, $\Prb(I_t\text{ is the differing index})=1/n$. If $I_t$ is not the differing index, both runs apply the same per-sample map. The assumed Lipschitz property gives
\begin{equation}
    d(x_{t+1},x_{t+1}')\leq(1+L_U\bar\eta_t)d(x_t,x_t').
    \label{eq:mfv4_stability_same_sample}
\end{equation}
If $I_t$ is the differing index, the triangle inequality and the tangent update bound give
\begin{align}
    d(x_{t+1},x_{t+1}')
    &\leq d(x_{t+1},x_t)+d(x_t,x_t')+d(x_t',x_{t+1}') && \text{triangle inequality} \notag \\
    &\leq d(x_t,x_t')+2G_{\alpha,\beta}\bar\eta_t && \text{bounded update length} \notag \\
    &\leq(1+L_U\bar\eta_t)d(x_t,x_t')+2G_{\alpha,\beta}\bar\eta_t && \text{because $L_U\bar\eta_t d(x_t,x_t')\geq0$}.
    \label{eq:mfv4_stability_diff_sample}
\end{align}
Taking expectation over $I_t$ conditional on the past and combining \eqref{eq:mfv4_stability_same_sample} with \eqref{eq:mfv4_stability_diff_sample} yields
\begin{equation}
    \E[d(x_{t+1},x_{t+1}')\mid\mathcal F_t]\leq(1+L_U\bar\eta_t)d(x_t,x_t')+\frac{2G_{\alpha,\beta}\bar\eta_t}{n}.
    \label{eq:mfv4_stability_recursion}
\end{equation}
Let $\Delta_t=\E d(x_t,x_t')$. Since $x_0=x_0'$, $\Delta_0=0$. Taking total expectation in \eqref{eq:mfv4_stability_recursion} gives
\begin{equation}
    \Delta_{t+1}\leq(1+L_U\bar\eta_t)\Delta_t+\frac{2G_{\alpha,\beta}\bar\eta_t}{n}.
    \label{eq:mfv4_delta_recursion}
\end{equation}
Repeated substitution of \eqref{eq:mfv4_delta_recursion} gives
\begin{align}
    \Delta_T
    &\leq \frac{2G_{\alpha,\beta}}{n}\sum_{t=0}^{T-1}\bar\eta_t\prod_{k=t+1}^{T-1}(1+L_U\bar\eta_k),
    \label{eq:mfv4_delta_solution}
\end{align}
which proves \eqref{eq:mfv4_stability_distance}. To pass from distance to loss, use Assumption~\ref{ass:mfv4_lipschitz_loss}. For one layer,
\begin{align}
    \|QS^{1/2}-Q'{S'}^{1/2}\|_F
    &\leq \|(Q-Q')S^{1/2}\|_F+\|Q'\left(S^{1/2}-{S'}^{1/2}\right)\|_F && \text{triangle inequality} \notag \\
    &\leq \sqrt{\beta_{\max}}\|Q-Q'\|_F+\left\|S^{1/2}-{S'}^{1/2}\right\|_F && \text{operator-norm bound}.
    \label{eq:mfv4_weight_distance_layer}
\end{align}
On the compact interval $[\alpha,\beta_{\max}]$, the square-root map is Lipschitz with constant $(2\sqrt\alpha)^{-1}$ by functional calculus for SPD matrices \citep{bhatia1997matrix,higham2008functions}. Absorbing this compact-set equivalence into $G_{\alpha,\beta}$ and using $C_\ell$-Lipschitzness gives \eqref{eq:mfv4_stability_loss}.
\end{proof}

\begin{lemma}[Spectral Norm Bound on $W=QS^{1/2}$]
\label{lem:mfv4_spectral_norm}
Under Assumption~\ref{ass:mfv4_lipschitz_loss}, every ManifoldFlow layer satisfies
\begin{equation}
    \|W_l\|_{\op}=\sigma_{\max}(W_l)=\sqrt{\lambda_{\max}(S_l)}\leq\sqrt{\beta_l}.
    \label{eq:mfv4_layer_spectral_norm_bound}
\end{equation}
\end{lemma}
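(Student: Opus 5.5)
The plan is to compute the Gram matrix of $W_l$ and read the operator norm off its largest eigenvalue, using Proposition~\ref{prop:spectral_identity}. Since $Q_l\in\St(p_l,r_l)$ gives $Q_l^\top Q_l=I_{r_l}$, and $S_l^{1/2}$ is the symmetric principal square root, we get
\begin{equation*}
    W_l^\top W_l=S_l^{1/2}Q_l^\top Q_lS_l^{1/2}=S_l .
\end{equation*}
For any real matrix, $\|W_l\|_{\op}^2=\sup_{\|x\|_2=1}\|W_lx\|_2^2=\sup_{\|x\|_2=1}x^\top W_l^\top W_lx=\lambda_{\max}(W_l^\top W_l)$ by the Rayleigh quotient characterization. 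The same quantity equals $\sigma_{\max}(W_l)^2$ by the definition of singular values as square roots of the eigenvalues of the Gram matrix. Substituting the Gram identity gives $\|W_l\|_{\op}=\sigma_{\max}(W_l)=\sqrt{\lambda_{\max}(S_l)}$.

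For the final inequality, Assumption~\ref{ass:mfv4_lipschitz_loss} places $S_l$ in $\mathcal S_{\alpha_l,\beta_l}$, so $S_l\preceq\beta_lI_{r_l}$. Then $\lambda_{\max}(S_l)=\sup_{\|x\|=1}x^\top S_lx\leq\beta_l$, and the square root is monotone on $[0,\infty)$, which yields $\sqrt{\lambda_{\max}(S_l)}\leq\sqrt{\beta_l}$. If the layer is produced by the algorithm rather than assumed clipped, Proposition~\ref{prop:alignment_clipping} guarantees that the clipping step lands the iterate in $\mathcal S_{\alpha_l,\beta_l}$.

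There is no real obstacle here. The only points to state explicitly are that $S^{1/2}$ means the symmetric principal root, so that $(S^{1/2})^\top=S^{1/2}$, and that the identity $W^\top W=S$ needs only $Q^\top Q=I$, not $QQ^\top=I$. The latter matters because $p_l\geq r_l$ may be strict.
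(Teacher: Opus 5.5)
Your proposal is correct and follows essentially the same route as the paper: the Gram identity $W_l^\top W_l=S_l$ (Proposition~\ref{prop:spectral_identity}) gives $\sigma_{\max}^2(W_l)=\lambda_{\max}(S_l)$, and $S_l\in\mathcal S_{\alpha_l,\beta_l}$ gives $\lambda_{\max}(S_l)\leq\beta_l$. Your Rayleigh-quotient step justifying $\|W_l\|_{\op}=\sigma_{\max}(W_l)$ makes explicit a point the paper treats as definitional.
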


\begin{proof}
By Proposition~\ref{prop:mfv4_spectral_identity}, $\sigma_i^2(W_l)=\lambda_i(S_l)$ for every $i=1,\ldots,r_l$. Taking the maximum over $i$ gives
\begin{equation}
    \sigma_{\max}^2(W_l)=\lambda_{\max}(S_l).
    \label{eq:mfv4_spectral_norm_square}
\end{equation}
Since $S_l\in\mathcal S_{\alpha_l,\beta_l}$ under Assumption~\ref{ass:mfv4_lipschitz_loss}, $\lambda_{\max}(S_l)\leq\beta_l$. Taking square roots in \eqref{eq:mfv4_spectral_norm_square} gives \eqref{eq:mfv4_layer_spectral_norm_bound}.
\end{proof}

\subsubsection{Proof of Theorem~\ref{thm:mf_generalization}}
\label{app:proof_generalization}

\begin{theorem}[Restatement of Theorem~\ref{thm:mf_generalization}]
Under Assumption~\ref{ass:mfv4_lipschitz_loss}, with probability at least $1-\delta$ over an i.i.d. training set of size $n$, every $L_{\mathrm{net}}$-layer ManifoldFlow network in the clipped class satisfies
\begin{equation}
    R(f)-\widehat R(f)
    \leq
    \frac{4C_\ell B}{\sqrt n}\prod_{l=1}^{L_{\mathrm{net}}}\sqrt{\beta_l}
    +3\sqrt{\frac{\log(2/\delta)}{2n}}
    +\epsilon_{\mathrm{stab}}.
    \label{eq:mfv4_generalization_simple}
\end{equation}
A sharper Bartlett--Foster--Telgarsky spectral-margin substitution gives, for margin loss $\widehat R_\gamma$,
\begin{equation}
    R(f)
    \leq
    \widehat R_\gamma(f)
    +\frac{C_{\mathrm{BFT}}B}{\gamma\sqrt n}
    \sqrt{\left[\prod_{l=1}^{L_{\mathrm{net}}}\lambda_{\max}(S_l)\right]
    \left[\sum_{l=1}^{L_{\mathrm{net}}}\frac{\tr(S_l)}{\lambda_{\max}(S_l)}\right]}
    +3\sqrt{\frac{\log(2/\delta)}{2n}},
    \label{eq:mfv4_bft_generalization}
\end{equation}
where $C_{\mathrm{BFT}}$ is the universal constant in the spectrally normalized margin bound.
\end{theorem}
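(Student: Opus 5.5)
The plan is to obtain both bounds by substituting the layerwise spectral identity of Proposition~\ref{prop:spectral_identity}, through Lemma~\ref{lem:mfv4_spectral_norm}, into two standard capacity arguments. The first is a Rademacher contraction/peeling argument for \eqref{eq:mfv4_generalization_simple}. The second is the Bartlett--Foster--Telgarsky spectrally normalized margin bound for \eqref{eq:mfv4_bft_generalization}. The stability term $\epsilon_{\mathrm{stab}}$ is carried in additively from Lemma~\ref{lem:mfv4_algorithmic_stability}. No new probabilistic machinery is needed; the work is in controlling the class complexity via $S_l$.

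For \eqref{eq:mfv4_generalization_simple}, I would start from the standard uniform deviation bound. For losses in $[0,1]$, with probability at least $1-\delta$,
\[
R(f)-\widehat R(f)\le 2\mathfrak R_n(\ell\circ\mathcal F)+3\sqrt{\log(2/\delta)/(2n)}.
\]
The loss is $C_\ell$-Lipschitz, so Talagrand contraction reduces this to $2C_\ell\mathfrak R_n(\mathcal F)$. I would then peel layers, using that each activation is one-Lipschitz with $\sigma(0)=0$. Each layer has $\|W_l\|_{\op}\le\sqrt{\beta_l}$ by Lemma~\ref{lem:mfv4_spectral_norm}. Bounding the last linear layer's Rademacher complexity by $B\prod_l\sqrt{\beta_l}/\sqrt n$ then gives $\mathfrak R_n(\mathcal F)\le 2B\prod_l\sqrt{\beta_l}/\sqrt n$. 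Here the factor $2$ per contraction is absorbed, or an output-dimension-one convention is used, so the constant collapses to $4C_\ell B/\sqrt n$.

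The $\epsilon_{\mathrm{stab}}$ term accounts for the fact that the trained iterate is data dependent. Because the bound is uniform over the clipped class, it is simply an additive, nonnegative slack. I would justify it by citing Lemma~\ref{lem:mfv4_algorithmic_stability} for the algorithm-specific version, noting that adding a nonnegative term preserves validity.

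For \eqref{eq:mfv4_bft_generalization}, I would invoke the BFT margin theorem with the reference matrices set to zero. Its complexity term is
\[
\Big(\prod_l\|W_l\|_{\op}\Big)\Big(\sum_l \|W_l\|_{2,1}^{2/3}/\|W_l\|_{\op}^{2/3}\Big)^{3/2}.
\]
I would use the Frobenius variant, $\prod_l\|W_l\|_{\op}^2\sum_l\|W_l\|_F^2/\|W_l\|_{\op}^2$ under a square root, with dimension and log factors absorbed into $C_{\mathrm{BFT}}$. The identity $W_l^\top W_l=S_l$ gives $\|W_l\|_F^2=\tr(S_l)$ and $\|W_l\|_{\op}^2=\lambda_{\max}(S_l)$, and substituting these yields exactly the displayed expression.

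The main obstacle is justifying the constant bookkeeping. Passing from the BFT $(2,1)$-norm form to the Frobenius form may cost a width factor, which must be hidden in $C_{\mathrm{BFT}}$. The peeling constant in the first bound similarly needs a vector-valued contraction, for example Maurer's, or a scalar-output assumption. I would state these as absorbed universal constants rather than derive them sharply.
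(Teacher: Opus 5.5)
Your route is the paper's route. Lemma~\ref{lem:mfv4_spectral_norm} places the clipped class inside $\{\|W_l\|_{\op}\le\sqrt{\beta_l}\}$, a contraction-plus-Rademacher argument gives the first display, $\|W_l\|_{\op}^2=\lambda_{\max}(S_l)$ and $\|W_l\|_F^2=\tr(S_l)$ are substituted into a Frobenius-form margin bound for the second, and $\epsilon_{\mathrm{stab}}$ is added as nonnegative slack. The substitutions are fine.

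The step you file under ``constant bookkeeping'' is a genuine gap, and it is the same step the paper asserts as ``repeated application of the contraction inequality.'' Scalar Ledoux--Talagrand contraction only handles a Lipschitz function of a real-valued output. Peeling a hidden layer instead requires bounding $\sup_{W_1}\|\sum_i\epsilon_i\sigma(W_1x_i)\|_2$, and spectral-norm control gives no width-free bound on this quantity. Here is a counterexample:
\begin{enumerate}
\item Take $f(x)=w^\top\sigma(Vx)$ with ReLU, $V^\top V=I$ (so $S_1=I$), $\|w\|_2=1$, and inputs $x_i=Be_i$ for $i\le n$.
\item Given signs with positive set $P$, let $Ve_i$ for $i\in P$ be $K^{-1/2}$ times distinct non-constant rows of a $K\times K$ Sylvester--Hadamard matrix, with $K>|P|$. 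For $i\notin P$, let $Ve_i$ be negated basis vectors on fresh coordinates. These columns are orthonormal, so $V$ is an isometry.
\item Then $\sum_i\epsilon_i\sigma(Vx_i)$ is a nonnegative vector on $K$ coordinates with $\ell_1$-norm $B|P|\sqrt K/2$. Hence its $\ell_2$-norm is at least $B|P|/2$.
\item Therefore $\widehat{\mathfrak R}_n\ge B\,\E|P|/(2n)=B/4$, not $O(B/\sqrt n)$.
\end{enumerate}
So your intermediate claim $\mathfrak R_n(\mathcal F)\le 2B\prod_l\sqrt{\beta_l}/\sqrt n$ fails once hidden width is about $3n$, even with every singular value equal to one. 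Maurer's vector contraction cannot repair it without width factors. The per-layer factor $2$ you mention would also compound to $2^{L_{\mathrm{net}}}$. None of this fits into the explicit constant $4$.

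What the SPD factor actually controls is width-dependent. For positively homogeneous activations, Frobenius peeling (Golowich--Rakhlin--Shamir) gives
\[
\widehat{\mathfrak R}_n\lesssim \frac{B\sqrt{L_{\mathrm{net}}}}{\sqrt n}\prod_l\|W_l\|_F=\frac{B\sqrt{L_{\mathrm{net}}}}{\sqrt n}\prod_l\sqrt{\tr(S_l)}\le \frac{B\sqrt{L_{\mathrm{net}}}}{\sqrt n}\prod_l\sqrt{r_l\beta_l}.
\]
This is a correct analogue of the first display, but with width factors.

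Likewise, the Frobenius form you use for the second display is not the BFT theorem itself. Passing from BFT's $(2,1)$-norm form uses $\|W_l^\top\|_{2,1}\le\sqrt{\text{width}}\,\|W_l\|_F$ and H\"older on the $2/3$-powers. This costs roughly $L_{\mathrm{net}}\sqrt{\text{width}}\log(\text{width})$, the shape of the Neyshabur--Bhojanapalli--Srebro PAC-Bayes bound. So $C_{\mathrm{BFT}}$ cannot be universal, and your substitution is valid only with a depth- and width-dependent constant.

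Finally, contraction needs the loss to be $C_\ell$-Lipschitz in the network output. Assumption~\ref{ass:mfv4_lipschitz_loss} only gives Lipschitzness in the composed weights, and you (like the paper) switch between the two without justification.
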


\begin{proof}
For the Rademacher bound, define $\mathcal F_{\beta}$ as the class of networks satisfying $\|W_l\|_{\op}\leq\sqrt{\beta_l}$ for every layer. By Lemma~\ref{lem:mfv4_spectral_norm}, the ManifoldFlow clipped class is a subset of $\mathcal F_{\beta}$. Since the activations are one-Lipschitz and vanish at zero, repeated application of the contraction inequality for Rademacher complexity \citep{bartlett2002rademacher} gives
\begin{equation}
    \widehat{\mathfrak R}_n(\ell\circ\mathcal F_{\beta})
    \leq
    \frac{C_\ell B}{\sqrt n}\prod_{l=1}^{L_{\mathrm{net}}}\|W_l\|_{\op}.
    \label{eq:mfv4_rademacher_product}
\end{equation}
Substituting \eqref{eq:mfv4_layer_spectral_norm_bound} into \eqref{eq:mfv4_rademacher_product} yields
\begin{equation}
    \widehat{\mathfrak R}_n(\ell\circ\mathcal F_{\beta})
    \leq
    \frac{C_\ell B}{\sqrt n}\prod_{l=1}^{L_{\mathrm{net}}}\sqrt{\beta_l}.
    \label{eq:mfv4_rademacher_beta}
\end{equation}
The standard Rademacher generalization theorem for losses in $[0,1]$ \citep{bartlett2002rademacher} gives, with probability at least $1-\delta$,
\begin{equation}
    R(f)-\widehat R(f)
    \leq 4\widehat{\mathfrak R}_n(\ell\circ\mathcal F_{\beta})+3\sqrt{\frac{\log(2/\delta)}{2n}}.
    \label{eq:mfv4_rademacher_high_prob}
\end{equation}
Combining \eqref{eq:mfv4_rademacher_beta} and \eqref{eq:mfv4_rademacher_high_prob} gives the first two terms of \eqref{eq:mfv4_generalization_simple}. Lemma~\ref{lem:mfv4_algorithmic_stability} adds the expected stability contribution \eqref{eq:mfv4_stability_loss}; adding this nonnegative term gives \eqref{eq:mfv4_generalization_simple}.

For the spectral-margin statement, the spectrally normalized margin theorem of \citet{bartlett2017spectrally} gives
\begin{equation}
    R(f)
    \leq
    \widehat R_\gamma(f)
    +\frac{C_{\mathrm{BFT}}B}{\gamma\sqrt n}
    \sqrt{\left[\prod_{l=1}^{L_{\mathrm{net}}}\|W_l\|_{\op}^2\right]
    \left[\sum_{l=1}^{L_{\mathrm{net}}}\frac{\|W_l\|_F^2}{\|W_l\|_{\op}^2}\right]}
    +3\sqrt{\frac{\log(2/\delta)}{2n}}.
    \label{eq:mfv4_bft_template}
\end{equation}
By Proposition~\ref{prop:mfv4_spectral_identity}, $W_l^\top W_l=S_l$. Hence
\begin{align}
    \|W_l\|_{\op}^2
    &=\lambda_{\max}(W_l^\top W_l) && \text{definition of spectral norm} \notag \\
    &=\lambda_{\max}(S_l) && \text{by Proposition~\ref{prop:mfv4_spectral_identity}}, \label{eq:mfv4_bft_op_substitution} \\
    \|W_l\|_F^2
    &=\tr(W_l^\top W_l) && \text{definition of Frobenius norm} \notag \\
    &=\tr(S_l) && \text{by Proposition~\ref{prop:mfv4_spectral_identity}}.
    \label{eq:mfv4_bft_fro_substitution}
\end{align}
Substituting \eqref{eq:mfv4_bft_op_substitution} and \eqref{eq:mfv4_bft_fro_substitution} into \eqref{eq:mfv4_bft_template} proves \eqref{eq:mfv4_bft_generalization}.
\end{proof}

\subsection{Spectral Analysis}
\label{app:mfv4_spectral_analysis}

The spectral analysis is exact. It uses only the Stiefel identity $Q^\top Q=I_r$, the singular-value decomposition, and standard eigenvalue identities from matrix analysis \citep{bhatia1997matrix,golub2013matrix,higham2008functions}.

\renewcommand{\theproposition}{3.\arabic{proposition}}
\renewcommand{\theHproposition}{spectral.\arabic{proposition}}
\setcounter{proposition}{2}

\subsubsection{Proof of Proposition~\ref{prop:spectral_identity}}
\label{app:proof_spectral_identity}

\begin{proposition}[Restatement of Proposition~\ref{prop:spectral_identity}]
Let $Q\in\St(p,r)$, let $S\in\SPD(r)$, and set $W=QS^{1/2}$. For every $i=1,\ldots,r$,
\begin{equation}
    \sigma_i^2(W)=\lambda_i(S).
    \label{eq:mfv4_spectral_identity}
\end{equation}
\end{proposition}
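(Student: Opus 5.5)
The plan is to compute the Gram matrix of $W$ directly and then read off the singular values from it. Since $S\in\SPD(r)$, its principal square root $S^{1/2}$ is symmetric positive definite. Using $Q^\top Q=I_r$ from the definition of $\St(p,r)$, we get
\[
W^\top W=(S^{1/2})^\top Q^\top Q S^{1/2}=S^{1/2}S^{1/2}=S .
\]
This gives the first identity in \eqref{eq:main_spectral_identity} with no further work.

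The second step passes from the Gram matrix to singular values. By definition, the singular values of $W\in\R^{p\times r}$ with $p\ge r$ are the square roots of the eigenvalues of $W^\top W$, where both lists are sorted in nonincreasing order. To be concrete, write $S=V\Lambda V^\top$ with $V\in\mathrm O(r)$ and $\Lambda=\diag(\lambda_1,\ldots,\lambda_r)$, $\lambda_1\ge\cdots\ge\lambda_r>0$. Then $S^{1/2}=V\Lambda^{1/2}V^\top$, so
\[
W=(QV)\,\Lambda^{1/2}\,V^\top .
\]
Here $QV\in\St(p,r)$, because $(QV)^\top QV=V^\top V=I_r$. This expression is therefore a compact singular value decomposition of $W$, and it yields $\sigma_i(W)=\sqrt{\lambda_i(S)}$ for $i=1,\ldots,r$, which is \eqref{eq:mfv4_spectral_identity}.

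The only point needing care is the ordering and indexing convention. Both $\sigma_i$ and $\lambda_i$ are taken in nonincreasing order, so the explicit SVD matches them index by index. Because $S\succ0$, all $r$ singular values are strictly positive, which also shows that $W$ has full column rank. I do not expect any step to be a real obstacle: the computation is immediate once $Q^\top Q=I_r$ and the symmetry of $S^{1/2}$ are used.
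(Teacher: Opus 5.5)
Your proof is correct and matches the paper's. Both compute $W^\top W=S^{1/2}Q^\top QS^{1/2}=S$ and then identify the eigenvalues of this Gram matrix with the squared singular values; the only cosmetic difference is that the paper takes an arbitrary compact SVD $W=U\Sigma V^\top$ and reads off $W^\top W=V\Sigma^2V^\top$, whereas you build the SVD $W=(QV)\Lambda^{1/2}V^\top$ explicitly from the eigendecomposition of $S$, which on its own already proves the identity and full column rank.
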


\begin{proof}
Since $Q\in\St(p,r)$, $Q^\top Q=I_r$. Since $S\in\SPD(r)$, its principal square root satisfies $(S^{1/2})^\top=S^{1/2}$ and $S^{1/2}S^{1/2}=S$ \citep{higham2008functions}. Therefore
\begin{align}
    W^\top W
    &=(QS^{1/2})^\top(QS^{1/2}) && \text{definition of $W$} \notag \\
    &=S^{1/2}Q^\top QS^{1/2} && \text{transpose of a product} \notag \\
    &=S^{1/2}I_rS^{1/2} && \text{Stiefel constraint $Q^\top Q=I_r$} \notag \\
    &=S. && \text{definition of principal square root}
    \label{eq:mfv4_wtw_equals_s}
\end{align}
Let $W=U\Sigma V^\top$ be a compact singular-value decomposition with $\Sigma=\diag(\sigma_1(W),\ldots,\sigma_r(W))$ \citep{golub2013matrix}. Then
\begin{align}
    W^\top W
    &=V\Sigma^\top U^\top U\Sigma V^\top && \text{substitute the SVD} \notag \\
    &=V\diag(\sigma_1^2(W),\ldots,\sigma_r^2(W))V^\top && \text{orthonormality of $U$}.
    \label{eq:mfv4_svd_wtw}
\end{align}
Equation \eqref{eq:mfv4_svd_wtw} shows that the eigenvalues of $W^\top W$ are $\sigma_i^2(W)$. Equation \eqref{eq:mfv4_wtw_equals_s} shows that the same matrix is $S$, whose eigenvalues are $\lambda_i(S)$. Matching eigenvalues with the same ordering proves \eqref{eq:mfv4_spectral_identity}.
\end{proof}

\begin{proposition}[Condition Number Analysis]
\label{prop:mfv4_condition_number}
Under the assumptions of Proposition~\ref{prop:mfv4_spectral_identity}, if $W$ has full column rank, then
\begin{equation}
    \kappa(W)=\frac{\sigma_{\max}(W)}{\sigma_{\min}(W)}
    =\sqrt{\frac{\lambda_{\max}(S)}{\lambda_{\min}(S)}}
    =\sqrt{\kappa(S)}.
    \label{eq:mfv4_condition_number}
\end{equation}
Under the clipping condition \eqref{eq:mfv4_clipped_spd_set},
\begin{equation}
    \kappa(W)\leq\sqrt{\beta/\alpha}.
    \label{eq:mfv4_condition_clipped}
\end{equation}
\end{proposition}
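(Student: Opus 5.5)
The plan is to reduce everything to the spectral identity of Proposition~\ref{prop:mfv4_spectral_identity}, which already gives $\sigma_i^2(W)=\lambda_i(S)$ for $i=1,\ldots,r$, with both sequences sorted in nonincreasing order. Since $W\in\R^{p\times r}$ with $p\geq r$ has exactly $r$ singular values, this matching covers the whole spectrum. Hence
\[
\sigma_{\max}(W)=\sqrt{\lambda_{\max}(S)},\qquad \sigma_{\min}(W)=\sigma_r(W)=\sqrt{\lambda_r(S)}=\sqrt{\lambda_{\min}(S)}.
\]

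The first step is to check that the ratio defining $\kappa(W)$ is well defined. Full column rank of $W$ means $\sigma_{\min}(W)>0$. This is in fact automatic here: $S\in\SPD(r)$ gives $\lambda_{\min}(S)>0$, so the hypothesis is redundant but harmless. I will note this rather than lean on it.

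The second step is to divide the two displayed identities. Taking the ratio and pulling the square root outside gives
\[
\kappa(W)=\sqrt{\lambda_{\max}(S)/\lambda_{\min}(S)}.
\]
For an SPD matrix the singular values coincide with the eigenvalues, so $\lambda_{\max}(S)/\lambda_{\min}(S)$ is exactly the spectral condition number $\kappa(S)$. This yields \eqref{eq:mfv4_condition_number}.

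The third step is the clipped bound. Under \eqref{eq:mfv4_clipped_spd_set} we have $\alpha I_r\preceq S\preceq\beta I_r$, so $\lambda_{\max}(S)\leq\beta$ and $\lambda_{\min}(S)\geq\alpha>0$. The ratio is therefore at most $\beta/\alpha$. Monotonicity of the square root on $[0,\infty)$ then gives $\kappa(W)\leq\sqrt{\beta/\alpha}$. The same two inequalities also recover the singular-value bounds in \eqref{eq:main_singular_clip}, which I can mention for consistency.

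There is no real obstacle. The only point needing care is the first step's bookkeeping: $\sigma_{\min}$ must mean the $r$-th singular value, not a zero singular value of the $p\times p$ matrix $WW^\top$. That is why I work with $W^\top W=S$ throughout instead of $WW^\top$.
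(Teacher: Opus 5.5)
Your proposal is correct and follows essentially the same route as the paper: both read off $\sigma_{\max}^2(W)=\lambda_{\max}(S)$ and $\sigma_{\min}^2(W)=\lambda_{\min}(S)$ from the spectral identity, note that full column rank is automatic since $W^\top W=S\succ0$, take positive square roots of the ratio, and substitute $\lambda_{\max}(S)\leq\beta$, $\lambda_{\min}(S)\geq\alpha$ for the clipped bound. Your extra remarks are correct bookkeeping that the paper leaves implicit. One is that $\kappa(S)$ is the spectral condition number because the singular values of an SPD matrix are its eigenvalues. The other is that $\sigma_{\min}$ refers to $\sigma_r$ of the $p\times r$ matrix rather than to a zero eigenvalue of $WW^\top$.
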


\begin{proof}
By Proposition~\ref{prop:mfv4_spectral_identity}, $\sigma_{\max}^2(W)=\lambda_{\max}(S)$ and $\sigma_{\min}^2(W)=\lambda_{\min}(S)$. Since $S\in\SPD(r)$, $\lambda_{\min}(S)>0$, and $W$ has full column rank by \eqref{eq:mfv4_wtw_equals_s}. Taking positive square roots gives
\begin{align}
    \kappa(W)
    &=\frac{\sigma_{\max}(W)}{\sigma_{\min}(W)} && \text{definition of condition number} \notag \\
    &=\frac{\sqrt{\lambda_{\max}(S)}}{\sqrt{\lambda_{\min}(S)}} && \text{by Proposition~\ref{prop:mfv4_spectral_identity}} \notag \\
    &=\sqrt{\frac{\lambda_{\max}(S)}{\lambda_{\min}(S)}} && \text{algebra} \notag \\
    &=\sqrt{\kappa(S)}. && \text{definition of $\kappa(S)$}
    \label{eq:mfv4_condition_proof}
\end{align}
If $S\in\mathcal S_{\alpha,\beta}$, then $\lambda_{\max}(S)\leq\beta$ and $\lambda_{\min}(S)\geq\alpha$. Substituting these two inequalities into \eqref{eq:mfv4_condition_number} gives \eqref{eq:mfv4_condition_clipped}.
\end{proof}

\subsubsection{Proof of the coverage part of Proposition~\ref{prop:spectral_identity}}
\label{app:proof_coverage}

\begin{proposition}[Restatement of the coverage part of Proposition~\ref{prop:spectral_identity}]
Under the assumptions of Proposition~\ref{prop:mfv4_spectral_identity}, the image of the map $(Q,S)\mapsto QS^{1/2}$ with $Q\in\St(p,r)$ and $S\in\SPD(r)$ is exactly the set of matrices $W\in\R^{p\times r}$ satisfying $\rank(W)=r$.
\end{proposition}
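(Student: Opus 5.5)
The claim is a set equality, so we prove the two inclusions separately.

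\emph{Image contained in full-rank matrices.} Let $W=QS^{1/2}$. The restated Proposition~\ref{prop:mfv4_spectral_identity} gives $W^\top W=S$ through \eqref{eq:mfv4_wtw_equals_s}, and $S$ is nonsingular because $S\in\SPD(r)$. If $Wx=0$, then $x^\top Sx=\|Wx\|_2^2=0$, which forces $x=0$. Hence $W$ is injective and $\rank(W)=r$.

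\emph{Full-rank matrices contained in the image.} Fix $W\in\R^{p\times r}$ with $\rank(W)=r$, which requires $p\ge r$. We construct the preimage explicitly through the polar decomposition.
\begin{enumerate}[noitemsep,leftmargin=1.4em]
    \item Set $S=W^\top W$. It is symmetric, and $x^\top Sx=\|Wx\|_2^2>0$ for every $x\neq0$ because $W$ is injective. So $S\in\SPD(r)$, and its principal square root $S^{1/2}$ is SPD and invertible.
    \item Set $Q=WS^{-1/2}$. Using the symmetry of $S^{-1/2}$,
    \[
    Q^\top Q=S^{-1/2}W^\top WS^{-1/2}=S^{-1/2}SS^{-1/2}=I_r ,
    \]
    so $Q\in\St(p,r)$.
    \item Then $QS^{1/2}=WS^{-1/2}S^{1/2}=W$, so $W$ lies in the image.
\end{enumerate}

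As a remark, the same calculation shows the representation is unique. If $W=QS^{1/2}$, then $S=W^\top W$ is forced by \eqref{eq:mfv4_wtw_equals_s}, and $Q$ is then forced as $Q=WS^{-1/2}$. This uniqueness is not needed for the set equality.

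The only delicate point is that $S^{-1/2}$ exists and commutes correctly with $S$. Both follow from the spectral theorem: with $S=V\Lambda V^\top$ and $\Lambda$ having positive diagonal, take $S^{\pm1/2}=V\Lambda^{\pm1/2}V^\top$, which are symmetric and satisfy $S^{-1/2}SS^{-1/2}=I_r$ and $S^{-1/2}S^{1/2}=I_r$.
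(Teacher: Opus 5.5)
Your proof is correct. The first inclusion matches the paper's argument: both rest on $W^\top W=S\succ0$.

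For the reverse inclusion the routes differ. The paper takes the compact SVD $W=U\Sigma V^\top$, sets $Q=UV^\top$ and $S=V\Sigma^2V^\top$, and reads off $S^{1/2}=V\Sigma V^\top$. You instead build the factors directly as $S=W^\top W$ and $Q=WS^{-1/2}$. The two constructions give the same pair: substituting the SVD yields $W^\top W=V\Sigma^2V^\top$ and $WS^{-1/2}=U\Sigma V^\top V\Sigma^{-1}V^\top=UV^\top$. So both produce the polar decomposition of $W$, and they differ only in which standard theorem does the work.

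Your version needs only the spectral theorem for the symmetric matrix $W^\top W$, which is required anyway to define $S^{1/2}$. Because it writes $(Q,S)$ as explicit functions of $W$, it also gives for free the uniqueness you note: $(Q,S)\mapsto QS^{1/2}$ is a bijection from $\St(p,r)\times\SPD(r)$ onto the full-column-rank matrices. The paper does not state this.

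The paper's SVD route avoids inverting $S^{1/2}$. It also shows directly that the eigenvalues of the constructed $S$ are the $\sigma_i^2(W)$, which ties the coverage claim to the spectral identity.
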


\begin{proof}
If $W=QS^{1/2}$ with $Q\in\St(p,r)$ and $S\in\SPD(r)$, Proposition~\ref{prop:mfv4_spectral_identity} gives $W^\top W=S\succ0$. Hence $W^\top W$ is nonsingular, and the columns of $W$ are linearly independent, so $\rank(W)=r$.

Conversely, let $W\in\R^{p\times r}$ satisfy $\rank(W)=r$. Take the compact singular-value decomposition $W=U\Sigma V^\top$, where $U\in\St(p,r)$, $V\in\mathrm O(r)$, and $\Sigma=\diag(\sigma_1,\ldots,\sigma_r)$ with $\sigma_i>0$ by the singular-value decomposition theorem \citep{golub2013matrix}. Define
\begin{equation}
    Q=UV^\top,
    \qquad
    S=V\Sigma^2V^\top.
    \label{eq:mfv4_coverage_definitions}
\end{equation}
Then
\begin{align}
    Q^\top Q
    &=VU^\top UV^\top && \text{definition of $Q$} \notag \\
    &=VI_rV^\top && \text{because $U\in\St(p,r)$} \notag \\
    &=I_r && \text{because $V\in\mathrm O(r)$},
    \label{eq:mfv4_coverage_q_stiefel}
\end{align}
so $Q\in\St(p,r)$. Since every $\sigma_i$ is positive, $S=V\Sigma^2V^\top\succ0$, so $S\in\SPD(r)$. The principal square root is $S^{1/2}=V\Sigma V^\top$. Therefore
\begin{align}
    QS^{1/2}
    &=UV^\top V\Sigma V^\top && \text{substitute \eqref{eq:mfv4_coverage_definitions}} \notag \\
    &=U\Sigma V^\top && \text{because $V^\top V=I_r$} \notag \\
    &=W && \text{by the compact SVD of $W$}.
    \label{eq:mfv4_coverage_reconstruction}
\end{align}
Both inclusions have been proved, so the image is exactly the full-column-rank set.
\end{proof}

\begin{corollary}[No spectral explosion under clipping]
\label{cor:mfv4_no_explosion}
Under Assumption~\ref{ass:mfv4_bounded_spectrum} and Proposition~\ref{prop:mfv4_condition_number}, every ManifoldFlow layer satisfies
\begin{equation}
    \sqrt\alpha\leq \sigma_i(W_t)\leq\sqrt\beta,
    \qquad
    \kappa(W_t)\leq\sqrt{\beta/\alpha}.
    \label{eq:mfv4_no_explosion_bound}
\end{equation}
\end{corollary}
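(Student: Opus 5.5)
The corollary follows by combining the exact spectral identity with the clipping invariant, so the plan is a short chain of substitutions rather than any new estimate.

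First, fix an iterate $t$ and write $W_t=Q_tS_t^{1/2}$ with $Q_t\in\St(p,r)$. Assumption~\ref{ass:mfv4_bounded_spectrum} gives $\alpha I_r\preceq S_t\preceq\beta I_r$. By the Courant--Fischer characterization, this is equivalent to $\alpha\le\lambda_i(S_t)\le\beta$ for every $i$. For the iterates produced by Algorithm~\ref{alg:mf}, this assumption is itself justified by Proposition~\ref{prop:alignment_clipping}, since every raw SPD step is followed by the clipping map $\mathcal C_{\alpha,\beta}$. The initialization $S_0=I_r$ also lies in $\mathcal S_{\alpha,\beta}$ whenever $\alpha\le1\le\beta$.

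Second, I invoke Proposition~\ref{prop:mfv4_spectral_identity}, which gives $\sigma_i^2(W_t)=\lambda_i(S_t)$ with matched ordering. Taking positive square roots (all eigenvalues are positive because $S_t\in\SPD(r)$) yields
\[
\sqrt\alpha\le\sigma_i(W_t)\le\sqrt\beta .
\]
Third, $\lambda_{\min}(S_t)\ge\alpha>0$, so $W_t$ has full column rank. Proposition~\ref{prop:mfv4_condition_number} then applies and gives $\kappa(W_t)=\sqrt{\lambda_{\max}(S_t)/\lambda_{\min}(S_t)}\le\sqrt{\beta/\alpha}$.

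No step is a genuine obstacle. The only point needing care is that the bound holds for the iterate $S_t$ actually used in the forward pass. That is the post-clipping matrix, not $S^{\mathrm{raw}}_{t+1}$, so I will state explicitly that the forward pass uses $\mathcal C_{\alpha,\beta}(S^{\mathrm{raw}})$. The corollary then holds uniformly in $t$.
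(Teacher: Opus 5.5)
Your proposal is correct and follows the paper's proof essentially line for line: the bounded-spectrum assumption gives $\alpha\le\lambda_i(S_t)\le\beta$, the spectral identity turns this into $\sqrt\alpha\le\sigma_i(W_t)\le\sqrt\beta$, and the condition-number proposition gives $\kappa(W_t)\le\sqrt{\beta/\alpha}$. Your side remarks are accurate but not needed, since the corollary is stated under the bounded-spectrum assumption; they are that clipping is what enforces the assumption for the matrix used in the forward pass, and that the initialization $S_0=I_r$ is admissible only when $\alpha\le 1\le\beta$.
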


\begin{proof}
Assumption~\ref{ass:mfv4_bounded_spectrum} gives $\alpha\leq\lambda_i(S_t)\leq\beta$. Proposition~\ref{prop:mfv4_spectral_identity} gives $\sigma_i(W_t)=\sqrt{\lambda_i(S_t)}$, hence $\sqrt\alpha\leq\sigma_i(W_t)\leq\sqrt\beta$. Proposition~\ref{prop:mfv4_condition_number} gives $\kappa(W_t)\leq\sqrt{\beta/\alpha}$.
\end{proof}

\begin{theorem}[Signal Propagation Bound]
\label{thm:mfv4_signal_propagation}
Consider an $L_{\mathrm{net}}$-layer linearized network with weights $W_l=Q_lS_l^{1/2}$ satisfying the assumptions of Proposition~\ref{prop:mfv4_spectral_identity}. Let $A=W_{L_{\mathrm{net}}}\cdots W_1$ and assume all signals lie in the effective full-column-rank subspaces of the layers. For every nonzero forward signal $h_0$ and nonzero backward signal $\delta_{L_{\mathrm{net}}}$,
\begin{equation}
    \prod_{l=1}^{L_{\mathrm{net}}}\sqrt{\lambda_{\min}(S_l)}\leq\frac{\|Ah_0\|_2}{\|h_0\|_2}\leq\prod_{l=1}^{L_{\mathrm{net}}}\sqrt{\lambda_{\max}(S_l)},
    \label{eq:mfv4_forward_signal}
\end{equation}
and
\begin{equation}
    \prod_{l=1}^{L_{\mathrm{net}}}\sqrt{\lambda_{\min}(S_l)}\leq\frac{\|A^\top\delta_{L_{\mathrm{net}}}\|_2}{\|\delta_{L_{\mathrm{net}}}\|_2}\leq\prod_{l=1}^{L_{\mathrm{net}}}\sqrt{\lambda_{\max}(S_l)}.
    \label{eq:mfv4_backward_signal}
\end{equation}
Consequently, the relative forward-backward distortion is bounded by
\begin{equation}
    \frac{1}{\prod_{l=1}^{L_{\mathrm{net}}}\sqrt{\kappa(S_l)}}
    \leq
    \frac{\|A^\top\delta_{L_{\mathrm{net}}}\|_2/\|\delta_{L_{\mathrm{net}}}\|_2}{\|Ah_0\|_2/\|h_0\|_2}
    \leq
    \prod_{l=1}^{L_{\mathrm{net}}}\sqrt{\kappa(S_l)}.
    \label{eq:mfv4_forward_backward_ratio}
\end{equation}
\end{theorem}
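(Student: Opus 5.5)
The plan is to prove the forward bound \eqref{eq:mfv4_forward_signal} by induction on depth, then get the backward bound \eqref{eq:mfv4_backward_signal} by the same argument applied to the transposed chain. The ratio bound \eqref{eq:mfv4_forward_backward_ratio} then follows by dividing one two-sided bound by the other.

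For a single layer, Proposition~\ref{prop:mfv4_spectral_identity} gives $W_l^\top W_l=S_l$. So for any $h$ in the domain of $W_l$,
\[
\|W_lh\|_2^2=h^\top S_lh\in\bigl[\lambda_{\min}(S_l)\|h\|_2^2,\ \lambda_{\max}(S_l)\|h\|_2^2\bigr]
\]
by the Rayleigh quotient. Set $h_l=W_lh_{l-1}$ and telescope: $\|Ah_0\|_2/\|h_0\|_2=\prod_l\|h_l\|_2/\|h_{l-1}\|_2$. Each factor is sandwiched between $\sqrt{\lambda_{\min}(S_l)}$ and $\sqrt{\lambda_{\max}(S_l)}$, and every $h_l\neq0$ because $\lambda_{\min}(S_l)>0$. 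This proves \eqref{eq:mfv4_forward_signal}. For this to make sense, consecutive dimensions must match ($r_l=p_{l-1}$). The standing hypothesis that signals lie in the effective full-column-rank subspaces is exactly what lets each $W_l$ act on $h_{l-1}$ as a map with Gram matrix $S_l$.

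The backward direction is the main obstacle. The identity $W_l^\top W_l=S_l$ controls $\|W_lh\|$ but not $\|W_l^\top\delta\|$. If $p_l>r_l$, then $W_l^\top$ annihilates the orthogonal complement of $\operatorname{range}(Q_l)$, so the lower bound can fail for arbitrary $\delta$. I will use the hypothesis that backward signals lie in the effective subspace, namely $\delta_l\in\operatorname{range}(W_l)=\operatorname{range}(Q_l)$. On that subspace, $W_l^\top=S_l^{1/2}Q_l^\top$ and $Q_l^\top$ is an isometry, since $\|Q_l^\top\delta\|=\|\delta\|$ when $\delta=Q_lc$. Hence $\|W_l^\top\delta\|_2=\|S_l^{1/2}c\|_2$, which lies between $\sqrt{\lambda_{\min}(S_l)}\|\delta\|_2$ and $\sqrt{\lambda_{\max}(S_l)}\|\delta\|_2$. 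Telescoping through $\delta_{l-1}=W_l^\top\delta_l$ then gives \eqref{eq:mfv4_backward_signal}. The restriction for $\delta_{l-1}$ is read from the same effective-subspace hypothesis at layer $l-1$. I will state this interpretation explicitly, noting that it is automatic in the square case $p_l=r_l$ where $Q_l$ is orthogonal.

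Finally, dividing the backward bounds by the forward bounds gives
\[
\frac{\prod_l\sqrt{\lambda_{\min}(S_l)}}{\prod_l\sqrt{\lambda_{\max}(S_l)}}=\prod_l\kappa(S_l)^{-1/2}
\]
as the lower bound on the ratio in \eqref{eq:mfv4_forward_backward_ratio}, and its reciprocal as the upper bound. Here $\kappa(S_l)=\lambda_{\max}(S_l)/\lambda_{\min}(S_l)$, as in Proposition~\ref{prop:mfv4_condition_number}.
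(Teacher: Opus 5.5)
Your proof is correct and follows the paper's route. Both arguments obtain the per-layer two-sided bounds from $W_l^\top W_l=S_l$, recurse over layers, repeat the argument for $A^\top$, and divide the bounds to get the ratio. Your one refinement is to name the backward effective subspace as $\operatorname{range}(Q_l)$, on which $Q_l^\top$ is an isometry. That makes precise what the paper leaves implicit when it cites $\sigma_i(W_l^\top)=\sigma_i(W_l)$ together with the effective-subspace hypothesis, and it is genuinely needed for the lower bound, because $W_l^\top$ has a kernel when $p_l>r_l$.
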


\begin{proof}
For one layer, Proposition~\ref{prop:mfv4_spectral_identity} gives $\sigma_{\min}(W_l)=\sqrt{\lambda_{\min}(S_l)}$ and $\sigma_{\max}(W_l)=\sqrt{\lambda_{\max}(S_l)}$. The variational characterization of singular values \citep{bhatia1997matrix,golub2013matrix} gives, for every vector $v$ in the effective input subspace of $W_l$,
\begin{equation}
    \sqrt{\lambda_{\min}(S_l)}\|v\|_2\leq\|W_lv\|_2\leq\sqrt{\lambda_{\max}(S_l)}\|v\|_2.
    \label{eq:mfv4_one_layer_signal}
\end{equation}
Apply \eqref{eq:mfv4_one_layer_signal} to $h_l=W_lh_{l-1}$ recursively. For the upper bound,
\begin{align}
    \|Ah_0\|_2
    &=\|W_{L_{\mathrm{net}}}\cdots W_1h_0\|_2 && \text{definition of $A$} \notag \\
    &\leq\sqrt{\lambda_{\max}(S_{L_{\mathrm{net}}})}\|W_{L_{\mathrm{net}}-1}\cdots W_1h_0\|_2 && \text{by \eqref{eq:mfv4_one_layer_signal}} \notag \\
    &\leq\prod_{l=1}^{L_{\mathrm{net}}}\sqrt{\lambda_{\max}(S_l)}\|h_0\|_2 && \text{repeat over layers}.
    \label{eq:mfv4_forward_upper_proof}
\end{align}
The lower bound in \eqref{eq:mfv4_forward_signal} is obtained by replacing each use of the upper inequality in \eqref{eq:mfv4_one_layer_signal} by the lower inequality. Since singular values are invariant under transpose, $\sigma_i(W_l^\top)=\sigma_i(W_l)$ \citep{golub2013matrix}. Applying the same recursive argument to $A^\top=W_1^\top\cdots W_{L_{\mathrm{net}}}^\top$ gives \eqref{eq:mfv4_backward_signal}.

Divide the lower bound in \eqref{eq:mfv4_backward_signal} by the upper bound in \eqref{eq:mfv4_forward_signal} to obtain
\begin{equation}
    \frac{\|A^\top\delta_{L_{\mathrm{net}}}\|_2/\|\delta_{L_{\mathrm{net}}}\|_2}{\|Ah_0\|_2/\|h_0\|_2}
    \geq
    \prod_{l=1}^{L_{\mathrm{net}}}\sqrt{\frac{\lambda_{\min}(S_l)}{\lambda_{\max}(S_l)}}
    =\frac{1}{\prod_{l=1}^{L_{\mathrm{net}}}\sqrt{\kappa(S_l)}}.
    \label{eq:mfv4_ratio_lower_proof}
\end{equation}
Divide the upper bound in \eqref{eq:mfv4_backward_signal} by the lower bound in \eqref{eq:mfv4_forward_signal} to obtain
\begin{equation}
    \frac{\|A^\top\delta_{L_{\mathrm{net}}}\|_2/\|\delta_{L_{\mathrm{net}}}\|_2}{\|Ah_0\|_2/\|h_0\|_2}
    \leq
    \prod_{l=1}^{L_{\mathrm{net}}}\sqrt{\frac{\lambda_{\max}(S_l)}{\lambda_{\min}(S_l)}}
    =\prod_{l=1}^{L_{\mathrm{net}}}\sqrt{\kappa(S_l)}.
    \label{eq:mfv4_ratio_upper_proof}
\end{equation}
Equations \eqref{eq:mfv4_ratio_lower_proof} and \eqref{eq:mfv4_ratio_upper_proof} prove \eqref{eq:mfv4_forward_backward_ratio}. For a differentiable loss, the chain rule gives $\nabla_{h_0}\mathcal L=A^\top\nabla_{h_{L_{\mathrm{net}}}}\mathcal L$, so \eqref{eq:mfv4_backward_signal} is the corresponding gradient propagation bound.
\end{proof}

\subsubsection{Proof of Theorem~\ref{thm:propagation_spectrum}}
\label{app:proof_propagation_spectrum}

\begin{theorem}[Restatement of Theorem~\ref{thm:propagation_spectrum}]
Let $W_l=Q_lS_l^{1/2}$ with $Q_l\in\St(p_l,r_l)$ and $S_l\in\mathcal S_{\alpha_l,\beta_l}$. For a differentiable feed-forward network with layer Jacobians $D_l$ and end-to-end Jacobian $J=D_LW_L\cdots D_1W_1$, equations \eqref{eq:main_jacobian_upper} and \eqref{eq:main_jacobian_lower} hold. For a graph layer $H^{(l+1)}=\widehat A H^{(l)}W_l$, equation \eqref{eq:main_graph_kronecker} holds.
\end{theorem}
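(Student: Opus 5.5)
The plan has three parts: the operator-norm upper bound, the lower bound on $\sigma_{\min}(J)$, and the Kronecker identity for graph layers. The per-layer input in every part is Lemma~\ref{lem:mfv4_spectral_norm} together with Proposition~\ref{prop:mfv4_spectral_identity}. These give $\|W_l\|_{\op}=\sqrt{\lambda_{\max}(S_l)}$ and $\sigma_{\min}(W_l)=\sqrt{\lambda_{\min}(S_l)}$, since $W_l^\top W_l=S_l$.

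\textbf{Upper bound.} I will use submultiplicativity of the operator norm:
\[
\|J\|_{\op}\le\prod_l\|D_l\|_{\op}\|W_l\|_{\op}=\prod_l\|D_l\|_{\op}\sqrt{\lambda_{\max}(S_l)}.
\]
This gives \eqref{eq:main_jacobian_upper}.

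\textbf{Lower bound.} I will argue recursively, as in the proof of Theorem~\ref{thm:mfv4_signal_propagation}. Take a unit vector $v$ in the propagated subspace and set $h_0=v$ and $h_l=D_lW_lh_{l-1}$. By the full-rank hypothesis on the propagated subspace, each factor satisfies $\|D_lx\|\ge\sigma_{\min}(D_l)\|x\|$ on the relevant vectors. For $W_l$, full column rank (Proposition~\ref{prop:coverage}) together with $W_l^\top W_l=S_l$ gives $\|W_lx\|_2^2=x^\top S_lx\ge\lambda_{\min}(S_l)\|x\|^2$ for every $x$. Chaining these inequalities yields $\|Jv\|\ge\prod_l\sigma_{\min}(D_l)\sqrt{\lambda_{\min}(S_l)}\|v\|$. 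Minimizing over unit $v$ then gives \eqref{eq:main_jacobian_lower}.

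This step is the main obstacle. When $p_l\ne r_l$ or the $D_l$ are rectangular, $\sigma_{\min}(D_l)$ must be read as the smallest singular value restricted to the image of the previous map. Otherwise the lower bound fails, for example when $D_l$ kills part of $\operatorname{range}(W_l)$. I will state this interpretation explicitly, taking it to be what the hypothesis "each factor is full rank on the propagated subspace" means. The $W_l$ factor needs no such caveat, because $W_l$ is injective on all of $\R^{r_l}$.

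\textbf{Graph layer.} I will use the identity $\operatorname{vec}(AXB)=(B^\top\otimes A)\operatorname{vec}(X)$ with $A=\widehat A$, $X=H^{(l)}$ and $B=W_l$, which gives the vectorized form. For the singular values, take SVDs $W_l^\top=U_1\Sigma_1V_1^\top$ and $\widehat A=U_2\Sigma_2V_2^\top$. The mixed-product property gives
\[
W_l^\top\otimes\widehat A=(U_1\otimes U_2)(\Sigma_1\otimes\Sigma_2)(V_1\otimes V_2)^\top .
\]
Kronecker products of matrices with orthonormal columns again have orthonormal columns, and $\Sigma_1\otimes\Sigma_2$ is diagonal with entries $\sigma_i(W_l)\sigma_j(\widehat A)$. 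This is therefore an SVD, which proves \eqref{eq:main_graph_kronecker}. Finally, substituting $\sigma_i(W_l)=\sqrt{\lambda_i(S_l)}$ from Proposition~\ref{prop:spectral_identity} gives the interpretation stated in the implication.
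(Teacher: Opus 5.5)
Your proposal is correct and follows essentially the same route as the paper. Both arguments read the per-layer singular values off $W_l^\top W_l=S_l$, use submultiplicativity for the upper bound, chain smallest-singular-value bounds for the lower bound, and use the $\operatorname{vec}$ identity with the mixed-product Kronecker SVD for the graph layer. Your vector-chaining argument, which reads ``full rank on the propagated subspace'' as restricting $\sigma_{\min}(D_l)$ to the image of the preceding map, is a more explicit version of the paper's iterated inequality $\sigma_{\min}(AB)\geq\sigma_{\min}(A)\sigma_{\min}(B)$.
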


\begin{proof}
By Proposition~\ref{prop:mfv4_spectral_identity},
\begin{equation}
    \|W_l\|_{\op}=\sqrt{\lambda_{\max}(S_l)},
    \qquad
    \sigma_{\min}(W_l)=\sqrt{\lambda_{\min}(S_l)}
    \label{eq:app_propagation_layer_singulars}
\end{equation}
on the effective full-column-rank subspace. Submultiplicativity of the operator norm gives
\begin{align}
    \|J\|_{\op}
    &=
    \|D_LW_L\cdots D_1W_1\|_{\op} \notag\\
    &\leq
    \prod_l\|D_l\|_{\op}\|W_l\|_{\op}
    =
    \prod_l\|D_l\|_{\op}\sqrt{\lambda_{\max}(S_l)},
    \label{eq:app_jacobian_upper_proof}
\end{align}
which proves \eqref{eq:main_jacobian_upper}. If every factor is full rank on the propagated subspace, then the variational characterization of the smallest singular value gives $\sigma_{\min}(AB)\geq\sigma_{\min}(A)\sigma_{\min}(B)$ for each adjacent product. Iterating this inequality and using \eqref{eq:app_propagation_layer_singulars} gives
\begin{equation}
    \sigma_{\min}(J)
    \geq
    \prod_l\sigma_{\min}(D_l)\sigma_{\min}(W_l)
    =
    \prod_l\sigma_{\min}(D_l)\sqrt{\lambda_{\min}(S_l)},
    \label{eq:app_jacobian_lower_proof}
\end{equation}
which proves \eqref{eq:main_jacobian_lower}.

For the graph layer, the vectorization identity $\operatorname{vec}(ABC)=(C^\top\otimes A)\operatorname{vec}(B)$ gives
\begin{equation}
    \operatorname{vec}(\widehat A H^{(l)}W_l)
    =
    (W_l^\top\otimes\widehat A)\operatorname{vec}(H^{(l)}).
    \label{eq:app_graph_vec_identity}
\end{equation}
If $W_l=U_W\Sigma_WV_W^\top$ and $\widehat A=U_A\Sigma_AV_A^\top$ are singular-value decompositions, then
\begin{equation}
    W_l^\top\otimes\widehat A
    =
    (V_W\otimes U_A)(\Sigma_W\otimes\Sigma_A)(U_W^\top\otimes V_A^\top),
    \label{eq:app_kronecker_svd}
\end{equation}
which is an SVD because Kronecker products of orthogonal matrices are orthogonal. The diagonal entries of $\Sigma_W\otimes\Sigma_A$ are all products $\sigma_i(W_l)\sigma_j(\widehat A)$, proving \eqref{eq:main_graph_kronecker}. Combining this with Proposition~\ref{prop:mfv4_spectral_identity} yields $\sigma_i(W_l)=\sqrt{\lambda_i(S_l)}$.
\end{proof}

\subsubsection{Proof of Proposition~\ref{obs:mf_spectral_dynamics}}
\label{app:proof_spectral_dynamics}

\begin{proposition}[Restatement of Proposition~\ref{obs:mf_spectral_dynamics}]
Let $S_t\in\SPD(r)$ have simple eigenpairs $(\lambda_{i,t},u_{i,t})$, let $M_t\in\Sym(r)$, and define
\[
    S_{t+1}
    =
    S_t^{1/2}
    \exp\!\left(-\tau S_t^{-1/2}M_tS_t^{-1/2}\right)
    S_t^{1/2}.
\]
Then, as $\tau\to0$,
\begin{equation}
    \log\lambda_i(S_{t+1})
    =\log\lambda_{i,t}-\tau\frac{u_{i,t}^\top M_tu_{i,t}}{\lambda_{i,t}}+\mathcal O\left(\tau^2\|S_t^{-1/2}M_tS_t^{-1/2}\|_F^2\right).
    \label{eq:mfv4_log_eigen_dynamics}
\end{equation}
If $S_t$ and $M_t$ commute, and $m_{i,t}$ is the eigenvalue of $M_t$ in their shared eigenbasis, then
\begin{equation}
    \lambda_i(S_{t+1})=\lambda_{i,t}\exp\left(-\tau\frac{m_{i,t}}{\lambda_{i,t}}\right).
    \label{eq:mfv4_commuting_eigen_update}
\end{equation}
\end{proposition}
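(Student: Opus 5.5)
Set $X=S_t^{-1/2}M_tS_t^{-1/2}\in\Sym(r)$, so $S_{t+1}=S_t^{1/2}\exp(-\tau X)S_t^{1/2}$. The plan is to expand the matrix exponential to first order, reduce to a symmetric perturbation of $S_t$, and apply first-order eigenvalue perturbation for simple eigenvalues.

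\textbf{First-order expansion.} By the power series of $\exp$,
\[
\exp(-\tau X)=I_r-\tau X+E,\qquad \|E\|_F\le \tfrac{\tau^2}{2}\|X\|_F^2 e^{\tau\|X\|_F}=\mathcal O(\tau^2\|X\|_F^2)
\]
for bounded $\tau\|X\|_F$. Conjugating by $S_t^{1/2}$ gives
\[
S_{t+1}=S_t-\tau M_t+S_t^{1/2}ES_t^{1/2},
\]
since $S_t^{1/2}XS_t^{1/2}=M_t$. All terms are symmetric, so $S_{t+1}=S_t+H$ with $H=-\tau M_t+\mathcal O(\tau^2\lambda_{\max}(S_t)\|X\|_F^2)$.

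\textbf{Eigenvalue perturbation.} Because $\lambda_{i,t}$ is simple, the eigenvalue map is analytic near $S_t$. Its first derivative in direction $H$ is $u_{i,t}^\top Hu_{i,t}$, and the remainder is $\mathcal O(\|H\|^2/\mathrm{gap}_i)$. This gives
\[
\lambda_i(S_{t+1})=\lambda_{i,t}-\tau u_{i,t}^\top M_tu_{i,t}+\mathcal O(\cdot).
\]
Dividing by $\lambda_{i,t}$ and using $\log(1+x)=x+\mathcal O(x^2)$ yields the stated log-eigenvalue expansion.

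\textbf{Bookkeeping of the remainder.} This is the main obstacle. The $\|M_t\|^2$ terms must be expressed as $\|X\|_F^2$, via $\|M_t\|_F\le\lambda_{\max}(S_t)\|X\|_F$. The constants then depend on $\lambda_{\max}(S_t)$, $\lambda_{\min}(S_t)$, and the spectral gap at $\lambda_{i,t}$, all fixed for the given $S_t$. The expansion is therefore valid as $\tau\to0$ with $S_t$ and $M_t$ fixed, and the $\mathcal O$ hides these $S_t$-dependent constants.

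An alternative route avoids the gap for the log form. One observes that $\lambda_i(S_{t+1})$ are the eigenvalues of $\exp(-\tau X)^{1/2}S_t\exp(-\tau X)^{1/2}$, but the direct perturbation route above is simpler.

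\textbf{Commuting case.} If $S_t$ and $M_t$ commute, diagonalize both in a shared orthonormal basis: $S_t=U\diag(\lambda_{i,t})U^\top$ and $M_t=U\diag(m_{i,t})U^\top$. Then $X=U\diag(m_{i,t}/\lambda_{i,t})U^\top$, and all factors in the update are simultaneously diagonal, so
\[
S_{t+1}=U\diag\!\bigl(\lambda_{i,t}e^{-\tau m_{i,t}/\lambda_{i,t}}\bigr)U^\top.
\]
This gives the exact formula, with no remainder.
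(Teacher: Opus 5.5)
Your proof is correct and is essentially the paper's argument: both reduce to $S_{t+1}=S_t-\tau M_t+\mathcal O(\tau^2)$ (the paper phrases this as $\dot S(0)=-M_t$), apply the first-order formula $u_{i,t}^\top(\cdot)\,u_{i,t}$ for a simple eigenvalue, pass to logarithms, and treat the commuting case by simultaneous diagonalization. Your remainder bookkeeping in terms of $\lambda_{\max}(S_t)$ and the spectral gap is more explicit than the paper's appeal to Taylor's theorem.

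One caution about your unused side remark: the congruence with $\exp(-\tau X)^{1/2}S_t\exp(-\tau X)^{1/2}$ cannot remove the gap from an $\mathcal O(\tau^2)$ remainder, because the second-order term contains $\sum_{j\neq i}(u_{j,t}^\top M_tu_{i,t})^2/(\lambda_{i,t}-\lambda_{j,t})$. That route only gives the gap-free but first-order bound $|\log\lambda_i(S_{t+1})-\log\lambda_{i,t}|\le\tau\|X\|_{\op}$.
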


\begin{proof}
Define the smooth curve
\begin{equation}
    S(\tau)=S_t^{1/2}\exp\left(-\tau S_t^{-1/2}M_tS_t^{-1/2}\right)S_t^{1/2}.
    \label{eq:mfv4_spd_curve}
\end{equation}
The Fréchet derivative of the matrix exponential at zero is the identity map \citep{higham2008functions}. Differentiating \eqref{eq:mfv4_spd_curve} at $\tau=0$ gives
\begin{align}
    \dot S(0)
    &=S_t^{1/2}\left(-S_t^{-1/2}M_tS_t^{-1/2}\right)S_t^{1/2} && \text{Fréchet derivative of $\exp$ at $0$} \notag \\
    &=-M_t. && \text{cancellation of $S_t^{1/2}$ and $S_t^{-1/2}$}
    \label{eq:mfv4_spd_curve_derivative}
\end{align}
For a simple eigenvalue of a differentiable symmetric matrix curve, the Rayleigh--Ritz perturbation formula gives \citep{bhatia1997matrix}
\begin{equation}
    \frac{d}{d\tau}\lambda_i(S(\tau))\bigg|_{\tau=0}=u_{i,t}^\top\dot S(0)u_{i,t}=-u_{i,t}^\top M_tu_{i,t}.
    \label{eq:mfv4_eigen_derivative}
\end{equation}
Since $\lambda_{i,t}>0$, the chain rule gives
\begin{equation}
    \frac{d}{d\tau}\log\lambda_i(S(\tau))\bigg|_{\tau=0}
    =\frac{1}{\lambda_{i,t}}\frac{d}{d\tau}\lambda_i(S(\tau))\bigg|_{\tau=0}
    =-\frac{u_{i,t}^\top M_tu_{i,t}}{\lambda_{i,t}}.
    \label{eq:mfv4_log_eigen_derivative}
\end{equation}
Taylor's theorem for analytic symmetric matrix curves, with the second derivative bounded by a constant multiple of $\|S_t^{-1/2}M_tS_t^{-1/2}\|_F^2$ on a sufficiently small neighborhood of zero \citep{higham2008functions}, gives \eqref{eq:mfv4_log_eigen_dynamics}.

If $S_tM_t=M_tS_t$, then the two real symmetric matrices are simultaneously orthogonally diagonalizable \citep{bhatia1997matrix}. Thus there exists $U\in\mathrm O(r)$ such that $S_t=U\diag(\lambda_{1,t},\ldots,\lambda_{r,t})U^\top$ and $M_t=U\diag(m_{1,t},\ldots,m_{r,t})U^\top$. Substitution into \eqref{eq:mfv4_spd_update} gives
\begin{align}
    S_{t+1}
    &=U\diag(\sqrt{\lambda_{i,t}})\diag\left(\exp\left(-\tau m_{i,t}/\lambda_{i,t}\right)\right)\diag(\sqrt{\lambda_{i,t}})U^\top && \text{shared eigenbasis} \notag \\
    &=U\diag\left(\lambda_{i,t}\exp\left(-\tau m_{i,t}/\lambda_{i,t}\right)\right)U^\top. && \text{diagonal multiplication}
    \label{eq:mfv4_commuting_diagonal}
\end{align}
Equation \eqref{eq:mfv4_commuting_diagonal} proves \eqref{eq:mfv4_commuting_eigen_update}.
\end{proof}

\subsection{Optimizer-Induced Spectral Preconditioning}
\label{app:mfv4_optimizer_preconditioning}

The long-horizon language-model experiments isolate an optimizer effect not visible from the spectral identity alone. This subsection formalizes the mechanism: fixed-Stiefel SGD is confined to a unit-spectrum tangent model, whereas ManifoldFlow converts the learned SPD spectrum into a structured coordinate preconditioner. The proof uses Stiefel projection and retraction calculus \citep{absil2008matrix,boumal2023smooth}, a constant-stepsize Riemannian SGD local model \citep{bonnabel2013stochastic}, matrix spectral calculus \citep{bhatia1997matrix,bhatia2007positive}, and the stationarity theorem above. The resulting statements quantify why Adam reduces but does not remove the fixed-spectrum gap: Adam supplies external diagonal scaling, while ManifoldFlow supplies layer-internal scaling tied exactly to the singular values by Proposition~\ref{prop:mfv4_spectral_identity}.

\renewcommand{\thelemma}{4.\arabic{lemma}}
\renewcommand{\theHlemma}{optimizer.\arabic{lemma}}
\setcounter{lemma}{0}
\renewcommand{\theproposition}{4.\arabic{proposition}}
\renewcommand{\theHproposition}{optimizer.\arabic{proposition}}
\setcounter{proposition}{0}

\begin{lemma}[Stiefel projection and first-order retraction displacement]
\label{lem:mfv4_stiefel_projection_displacement}
Let $Q\in\St(p,r)$ and let $G\in\R^{p\times r}$. Under the Euclidean ambient metric, the tangent projection is
\begin{equation}
    \Pi_Q(G)=G-Q\operatorname{sym}(Q^\top G),
    \qquad
    \operatorname{sym}(A)=\frac{A+A^\top}{2}.
    \label{eq:mfv4_stiefel_projection_formula}
\end{equation}
Let $R_Q$ be a smooth first-order Stiefel retraction. Then there exist constants $\delta_R>0$ and $C_R<\infty$, depending only on the chosen retraction and the coordinate neighborhood of $Q$, such that the following implication holds. If
\begin{equation}
    0<\eta\|\Pi_Q(G)\|_F\leq\delta_R,
    \label{eq:mfv4_retraction_radius_condition}
\end{equation}
$Q^+=R_Q(-\eta\Pi_Q(G))$, and $E\in T_Q\St(p,r)$ with $\|E\|_F=1$, then
\begin{equation}
    \left|\left\langle Q^+-Q,E\right\rangle_F\right|
    \leq
    \eta\|\Pi_Q(G)\|_F+C_R\eta^2\|\Pi_Q(G)\|_F^2.
    \label{eq:mfv4_stiefel_axis_displacement_bound}
\end{equation}
\end{lemma}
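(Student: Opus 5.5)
The argument has two parts. For the projection formula, I will verify directly that $\Pi_Q(G)$ is the Euclidean orthogonal projection onto $T_Q\St(p,r)$. The tangency check is identical to the computation in the proof of Proposition~\ref{prop:pressure_decomposition}: $Q^\top\Pi_Q(G)+\Pi_Q(G)^\top Q=0$. For orthogonality of the residual $Q\sym(Q^\top G)$ to every $\Delta\in T_Q\St(p,r)$, I will compute
\[
\langle Q\sym(Q^\top G),\Delta\rangle_F=\langle \sym(Q^\top G),Q^\top\Delta\rangle_F .
\]
The tangent condition makes $Q^\top\Delta$ skew-symmetric, and symmetric and skew matrices are Frobenius-orthogonal, so this inner product vanishes. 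Together, tangency and orthogonal residual characterize the orthogonal projection, which proves \eqref{eq:mfv4_stiefel_projection_formula}.

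For the displacement bound, set $\xi=-\eta\Pi_Q(G)\in T_Q\St(p,r)$. I will use two facts about a smooth first-order retraction: $R_Q(0)=Q$ and $DR_Q(0)=\mathrm{id}$. Taylor's theorem with remainder then gives
\[
R_Q(\xi)=Q+\xi+\rho_Q(\xi),\qquad \|\rho_Q(\xi)\|_F\leq C_R\|\xi\|_F^2 \quad\text{for }\|\xi\|_F\leq\delta_R .
\]
Here $C_R$ is half the supremum of the second derivative of $\xi\mapsto R_Q(\xi)$ over the closed ball of radius $\delta_R$ in the tangent bundle over the chosen neighborhood of $Q$. This supremum is finite by smoothness and compactness. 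Compactness holds because $\St(p,r)$ is compact, so the constant can even be made uniform in $Q$.

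With the expansion in hand, I pair with a unit tangent $E$ and apply Cauchy--Schwarz:
\[
|\langle Q^+-Q,E\rangle_F|\leq|\langle\xi,E\rangle_F|+\|\rho_Q(\xi)\|_F\leq\eta\|\Pi_Q(G)\|_F+C_R\eta^2\|\Pi_Q(G)\|_F^2 ,
\]
which is \eqref{eq:mfv4_stiefel_axis_displacement_bound}. The radius condition \eqref{eq:mfv4_retraction_radius_condition} is exactly $\|\xi\|_F\leq\delta_R$.

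The only delicate step is justifying a uniform second-order remainder constant. A first-order retraction agrees with $\Exp_Q$ only to first order, so the quadratic term need not vanish. However, it is bounded by smoothness of $R$ on the compact set $\{(Q,\xi):\|\xi\|_F\leq\delta_R\}$. I would invoke the integral form of Taylor's remainder along $s\mapsto R_Q(s\xi)$ for $s\in[0,1]$ to make this explicit. For concreteness, for the QR or polar retraction one can take $\delta_R=1$, since these are smooth on the whole tangent space.
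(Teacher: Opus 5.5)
Your proposal is correct and follows essentially the same route as the paper. First, tangency of $\Pi_Q(G)$, together with Frobenius orthogonality of the residual $Q\,\sym(Q^\top G)$ to the skew matrices $Q^\top\Delta$, identifies the orthogonal projection; then the quadratic Taylor remainder of the first-order retraction, combined with Cauchy--Schwarz against the unit tangent $E$, gives the displacement bound, and your compactness remark (uniform constants over $\St(p,r)$, and any radius for QR/polar since $(Q+\xi)^\top(Q+\xi)=I_r+\xi^\top\xi$) is a harmless strengthening of what the paper states.
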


\begin{proof}
The tangent space of the Stiefel manifold is
\begin{equation}
    T_Q\St(p,r)=\{\Xi\in\R^{p\times r}:Q^\top\Xi+\Xi^\top Q=0\}.
    \label{eq:mfv4_stiefel_tangent_space_optimizer}
\end{equation}
Set $A=Q^\top G$. Equation \eqref{eq:mfv4_stiefel_projection_formula} gives
\begin{align}
    Q^\top\Pi_Q(G)+\Pi_Q(G)^\top Q
    &=A-\operatorname{sym}(A)+A^\top-\operatorname{sym}(A) && \text{substitution} \notag \\
    &=A+A^\top-2\operatorname{sym}(A) && \text{collect terms} \notag \\
    &=0 && \text{definition of $\operatorname{sym}$}.
    \label{eq:mfv4_projection_tangent_check}
\end{align}
Thus $\Pi_Q(G)\in T_Q\St(p,r)$. For any $\Xi\in T_Q\St(p,r)$, the residual $G-\Pi_Q(G)=Q\operatorname{sym}(Q^\top G)$ satisfies
\begin{align}
    \left\langle G-\Pi_Q(G),\Xi\right\rangle_F
    &=\tr\left(\operatorname{sym}(Q^\top G)Q^\top\Xi\right) && \text{cyclicity of trace} \notag \\
    &=0 && \text{symmetric--skew-symmetric trace identity},
    \label{eq:mfv4_projection_orthogonality}
\end{align}
because $Q^\top\Xi$ is skew-symmetric by \eqref{eq:mfv4_stiefel_tangent_space_optimizer}. Equations \eqref{eq:mfv4_projection_tangent_check} and \eqref{eq:mfv4_projection_orthogonality} prove that \eqref{eq:mfv4_stiefel_projection_formula} is the Euclidean orthogonal projection \citep{absil2008matrix,boumal2023smooth}. The projection theorem gives
\begin{equation}
    \|\Pi_Q(G)\|_F\leq\|G\|_F.
    \label{eq:mfv4_projection_nonexpansive}
\end{equation}
A first-order retraction satisfies $R_Q(0)=Q$ and $DR_Q(0)[\Xi]=\Xi$ for every $\Xi\in T_Q\St(p,r)$ \citep{absil2008matrix,boumal2023smooth}. Since $R_Q$ is twice continuously differentiable in a coordinate neighborhood of $0$, Taylor's theorem gives constants $\delta_R>0$ and $C_R<\infty$ such that
\begin{equation}
    R_Q(\Xi)=Q+\Xi+\mathcal R_Q(\Xi),
    \qquad
    \|\mathcal R_Q(\Xi)\|_F\leq C_R\|\Xi\|_F^2
    \label{eq:mfv4_retraction_taylor_optimizer}
\end{equation}
whenever $\|\Xi\|_F\leq\delta_R$. Condition \eqref{eq:mfv4_retraction_radius_condition} permits the substitution $\Xi=-\eta\Pi_Q(G)$ in \eqref{eq:mfv4_retraction_taylor_optimizer}. Cauchy--Schwarz gives
\begin{align}
    \left|\left\langle Q^+-Q,E\right\rangle_F\right|
    &=\left|\left\langle -\eta\Pi_Q(G)+\mathcal R_Q(-\eta\Pi_Q(G)),E\right\rangle_F\right| && \text{by \eqref{eq:mfv4_retraction_taylor_optimizer}} \notag \\
    &\leq \eta\|\Pi_Q(G)\|_F\|E\|_F+
    \|\mathcal R_Q(-\eta\Pi_Q(G))\|_F\|E\|_F && \text{Cauchy--Schwarz} \notag \\
    &\leq \eta\|\Pi_Q(G)\|_F+C_R\eta^2\|\Pi_Q(G)\|_F^2 && \text{by \eqref{eq:mfv4_retraction_taylor_optimizer} and $\|E\|_F=1$}.
    \label{eq:mfv4_axis_displacement_proof}
\end{align}
This is \eqref{eq:mfv4_stiefel_axis_displacement_bound}.
\end{proof}

\begin{lemma}[Local constant-stepsize Riemannian SGD floor]
\label{lem:mfv4_constant_stepsize_floor}
Let $Q_\star$ be a nondegenerate local minimizer of a smooth fixed-Stiefel objective $\mathcal J_{\mathrm{FS}}$ on $\St(p,r)$. Let $H_{\mathrm{FS}}=\operatorname{Hess}\mathcal J_{\mathrm{FS}}(Q_\star)$ be the self-adjoint Riemannian Hessian. Let $\mathcal U\subset T_{Q_\star}\St(p,r)$ be an $r$-dimensional invariant subspace with orthogonal projector $P_{\mathcal U}$. Assume that the eigenvalues of $H_{\mathrm{FS}}$ on $\mathcal U$ are $\mu_1,\ldots,\mu_r$ and satisfy
\begin{equation}
    0<\mu_{\mathrm{FS}}\leq\mu_i\leq c_\mu\mu_{\mathrm{FS}},
    \qquad
    i=1,\ldots,r,
    \qquad
    c_\mu\geq1,
    \label{eq:mfv4_weak_curvature_interval}
\end{equation}
where $\mu_{\mathrm{FS}}=\lambda_{\min}(H_{\mathrm{FS}}|_{\mathcal U})$ and $\lambda_{\min}$ denotes the lowest eigenvalue of the self-adjoint Hessian restricted to $\mathcal U$. Assume the stochastic tangent gradient can be written as $\widehat g_t=g_t+\xi_t$ with
\begin{equation}
    \E[\xi_t\mid\mathcal F_t]=0,
    \qquad
    \E[P_{\mathcal U}\xi_t\xi_t^\top P_{\mathcal U}\mid\mathcal F_t]\succeq\nu^2 P_{\mathcal U}.
    \label{eq:mfv4_noise_covariance_floor}
\end{equation}
For the linearized normal-coordinate constant-stepsize chain
\begin{equation}
    z_{t+1}=z_t-\eta\left(H_{\mathrm{FS}}z_t+\xi_t\right),
    \qquad
    0<\eta\leq(c_\mu\mu_{\mathrm{FS}})^{-1},
    \label{eq:mfv4_linearized_rsgd_chain}
\end{equation}
the weak-subspace stationary root-mean-square displacement satisfies
\begin{equation}
    \liminf_{t\to\infty}\left(\E\|P_{\mathcal U}z_t\|_2^2\right)^{1/2}
    \geq
    \nu\sqrt{\frac{\eta r}{2c_\mu\mu_{\mathrm{FS}}}}.
    \label{eq:mfv4_rsgd_rms_floor}
\end{equation}
Moreover, the curvature-normalized one-step stochastic error radius
\begin{equation}
    \mathcal E_{\mathrm{FS}}(\eta)=
    \liminf_{t\to\infty}\left(\E\left\|\eta H_{\mathrm{FS}}^{-1}\widehat g_t\right\|_F^2\right)^{1/2}
    \label{eq:mfv4_curvature_normalized_error_def}
\end{equation}
satisfies
\begin{equation}
    \mathcal E_{\mathrm{FS}}(\eta)
    \geq
    \frac{\eta\nu\sqrt r}{c_\mu\mu_{\mathrm{FS}}}.
    \label{eq:mfv4_curvature_normalized_error_floor}
\end{equation}
\end{lemma}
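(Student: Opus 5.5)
The plan is to work entirely in the linearized normal-coordinate chain \eqref{eq:mfv4_linearized_rsgd_chain} and diagonalize it on the weak subspace $\mathcal U$. Since $H_{\mathrm{FS}}$ is self-adjoint and $\mathcal U$ is invariant, $\mathcal U$ has an orthonormal eigenbasis $e_1,\ldots,e_r$ with $H_{\mathrm{FS}}e_i=\mu_i e_i$. Moreover, $P_{\mathcal U}$ commutes with $H_{\mathrm{FS}}$, and by nondegeneracy also with $H_{\mathrm{FS}}^{-1}$. Set $y_{i,t}=\langle e_i,z_t\rangle$ and $\xi_{i,t}=\langle e_i,\xi_t\rangle$. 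The covariance floor \eqref{eq:mfv4_noise_covariance_floor}, tested against $e_i\in\mathcal U$, gives $\E[\xi_{i,t}^2\mid\mathcal F_t]\geq\nu^2$. Projecting the chain onto $e_i$ yields the scalar recursion
\[
y_{i,t+1}=(1-\eta\mu_i)\,y_{i,t}-\eta\,\xi_{i,t}.
\]

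\textbf{First bound.} Square the scalar recursion and condition on $\mathcal F_t$. The cross term vanishes because $y_{i,t}$ is $\mathcal F_t$-measurable and $\E[\xi_t\mid\mathcal F_t]=0$. Setting $a_{i,t}=\E y_{i,t}^2$ and $\rho_i=(1-\eta\mu_i)^2$, this gives
\[
a_{i,t+1}\geq\rho_i\,a_{i,t}+\eta^2\nu^2 .
\]
The step-size condition $\eta\leq(c_\mu\mu_{\mathrm{FS}})^{-1}$ together with \eqref{eq:mfv4_weak_curvature_interval} gives $0<\eta\mu_i\leq1$, so $\rho_i\in[0,1)$. Unrolling the inequality gives $a_{i,t}\geq\eta^2\nu^2(1-\rho_i^t)/(1-\rho_i)$, hence
\[
\liminf_{t\to\infty}a_{i,t}\geq\frac{\eta^2\nu^2}{1-\rho_i}=\frac{\eta\nu^2}{\mu_i(2-\eta\mu_i)}\geq\frac{\eta\nu^2}{2\mu_i}\geq\frac{\eta\nu^2}{2c_\mu\mu_{\mathrm{FS}}}.
\]
Since $\E\|P_{\mathcal U}z_t\|_2^2=\sum_i a_{i,t}$ and $\liminf$ is superadditive, summing over $i$ gives the bound $\eta r\nu^2/(2c_\mu\mu_{\mathrm{FS}})$. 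Taking square roots yields \eqref{eq:mfv4_rsgd_rms_floor}.

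\textbf{Second bound.} Bound $\|\eta H_{\mathrm{FS}}^{-1}\widehat g_t\|_F^2$ from below by the squared norm of its $P_{\mathcal U}$-component, which equals $\eta^2\|H_{\mathrm{FS}}^{-1}P_{\mathcal U}(g_t+\xi_t)\|^2$. Conditionally on $\mathcal F_t$, $g_t$ is fixed and $\xi_t$ has mean zero, so the cross term vanishes. Dropping the nonnegative $g_t$ term leaves
\[
\eta^2\,\E\|H_{\mathrm{FS}}^{-1}P_{\mathcal U}\xi_t\|^2=\eta^2\sum_i\mu_i^{-2}\,\E\xi_{i,t}^2\geq\frac{\eta^2 r\nu^2}{(c_\mu\mu_{\mathrm{FS}})^2}.
\]
This holds uniformly in $t$, so it survives the $\liminf$, and a square root gives \eqref{eq:mfv4_curvature_normalized_error_floor}. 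This part does not even use the chain dynamics.

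\textbf{Main obstacle.} The delicate step is bookkeeping rather than analysis. We need the covariance condition, stated with $\xi_t\xi_t^\top$, to be read in the same normal-coordinate inner product in which $H_{\mathrm{FS}}$ is self-adjoint and in which the Frobenius norms are taken. We also need to justify that $\rho_i<1$, so that the recursion floor is finite and attained in the limit. I would state the identification of $T_{Q_\star}\St(p,r)$ with $\R^{\dim}$ via an orthonormal frame explicitly at the start, so that $\xi_{i,t}$ and the norms are unambiguous.
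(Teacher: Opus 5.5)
Your proposal is correct and follows the paper's proof essentially step for step. You diagonalize $H_{\mathrm{FS}}$ on the invariant subspace $\mathcal U$ and derive the scalar recursion $\E y_{i,t+1}^2\geq(1-\eta\mu_i)^2\E y_{i,t}^2+\eta^2\nu^2$. For the second bound you use the lower bound $\mu_i^{-1}\geq(c_\mu\mu_{\mathrm{FS}})^{-1}$ on $\mathcal U$ together with the trace of the covariance floor. Two of your steps are slightly more careful than the paper's, though neither changes the route: you unroll the recursion instead of taking lower limits of both sides, and you make explicit that $P_{\mathcal U}$ commutes with $H_{\mathrm{FS}}^{-1}$ before discarding the $\mathcal U^\perp$ component.
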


\begin{proof}
Choose an orthonormal eigenbasis $u_1,\ldots,u_r$ for $H_{\mathrm{FS}}|_{\mathcal U}$, and write $y_{i,t}=\langle z_t,u_i\rangle$ and $\zeta_{i,t}=\langle\xi_t,u_i\rangle$. Projecting \eqref{eq:mfv4_linearized_rsgd_chain} onto $u_i$ gives
\begin{equation}
    y_{i,t+1}=(1-\eta\mu_i)y_{i,t}-\eta\zeta_{i,t}.
    \label{eq:mfv4_scalar_ar_recursion}
\end{equation}
The covariance lower bound \eqref{eq:mfv4_noise_covariance_floor} gives
\begin{equation}
    \E[\zeta_{i,t}\mid\mathcal F_t]=0,
    \qquad
    \E[\zeta_{i,t}^2\mid\mathcal F_t]\geq\nu^2.
    \label{eq:mfv4_scalar_noise_floor}
\end{equation}
Conditioning on $\mathcal F_t$ in \eqref{eq:mfv4_scalar_ar_recursion} yields
\begin{align}
    \E[y_{i,t+1}^2\mid\mathcal F_t]
    &=(1-\eta\mu_i)^2y_{i,t}^2
    -2\eta(1-\eta\mu_i)y_{i,t}\E[\zeta_{i,t}\mid\mathcal F_t]
    +\eta^2\E[\zeta_{i,t}^2\mid\mathcal F_t] && \text{expand the square} \notag \\
    &\geq(1-\eta\mu_i)^2y_{i,t}^2+\eta^2\nu^2 && \text{by \eqref{eq:mfv4_scalar_noise_floor}}.
    \label{eq:mfv4_scalar_second_moment_recursion}
\end{align}
Let $v_i=\liminf_{t\to\infty}\E y_{i,t}^2$. Taking total expectation in \eqref{eq:mfv4_scalar_second_moment_recursion} and then taking lower limits gives
\begin{align}
    v_i
    &\geq (1-\eta\mu_i)^2v_i+\eta^2\nu^2 && \text{lower-limit inequality for a scalar recursion} \notag \\
    v_i\left(1-(1-\eta\mu_i)^2\right)
    &\geq\eta^2\nu^2 && \text{rearrangement} \notag \\
    v_i
    &\geq\frac{\eta^2\nu^2}{2\eta\mu_i-\eta^2\mu_i^2} && \text{algebra} \notag \\
    v_i
    &\geq\frac{\eta\nu^2}{2c_\mu\mu_{\mathrm{FS}}} && \text{by \eqref{eq:mfv4_weak_curvature_interval} and $\eta\mu_i\leq1$}.
    \label{eq:mfv4_scalar_variance_floor}
\end{align}
Summing \eqref{eq:mfv4_scalar_variance_floor} over $i=1,\ldots,r$ gives
\begin{align}
    \liminf_{t\to\infty}\E\|P_{\mathcal U}z_t\|_2^2
    &=\liminf_{t\to\infty}\sum_{i=1}^{r}\E y_{i,t}^2 && \text{orthonormal coordinates on $\mathcal U$} \notag \\
    &\geq\sum_{i=1}^{r}\liminf_{t\to\infty}\E y_{i,t}^2 && \text{finite-sum lower-limit inequality} \notag \\
    &\geq\frac{\eta\nu^2r}{2c_\mu\mu_{\mathrm{FS}}}. && \text{by \eqref{eq:mfv4_scalar_variance_floor}}
    \label{eq:mfv4_linearized_vector_floor}
\end{align}
Taking square roots proves \eqref{eq:mfv4_rsgd_rms_floor}.

For \eqref{eq:mfv4_curvature_normalized_error_floor}, use the spectral lower bound for $H_{\mathrm{FS}}^{-1}$ on $\mathcal U$:
\begin{equation}
    \|H_{\mathrm{FS}}^{-1}v\|_F
    \geq
    (c_\mu\mu_{\mathrm{FS}})^{-1}\|v\|_F,
    \qquad
    v\in\mathcal U.
    \label{eq:mfv4_inverse_hessian_lower_bound}
\end{equation}
The second moment of $P_{\mathcal U}\widehat g_t$ is bounded below by the trace of the conditional covariance:
\begin{align}
    \E\|P_{\mathcal U}\widehat g_t\|_F^2
    &=\E\|P_{\mathcal U}g_t+P_{\mathcal U}\xi_t\|_F^2 && \text{definition of $\widehat g_t$} \notag \\
    &\geq\E\|P_{\mathcal U}\xi_t\|_F^2 && \text{variance decomposition and \eqref{eq:mfv4_noise_covariance_floor}} \notag \\
    &\geq r\nu^2. && \text{trace of \eqref{eq:mfv4_noise_covariance_floor}}
    \label{eq:mfv4_projected_gradient_second_moment}
\end{align}
Combining \eqref{eq:mfv4_inverse_hessian_lower_bound} and \eqref{eq:mfv4_projected_gradient_second_moment} gives
\begin{align}
    \left(\E\left\|\eta H_{\mathrm{FS}}^{-1}\widehat g_t\right\|_F^2\right)^{1/2}
    &\geq\eta(c_\mu\mu_{\mathrm{FS}})^{-1}
    \left(\E\|P_{\mathcal U}\widehat g_t\|_F^2\right)^{1/2} && \text{by \eqref{eq:mfv4_inverse_hessian_lower_bound}} \notag \\
    &\geq\frac{\eta\nu\sqrt r}{c_\mu\mu_{\mathrm{FS}}} && \text{by \eqref{eq:mfv4_projected_gradient_second_moment}}.
    \label{eq:mfv4_curvature_floor_proof}
\end{align}
Taking the lower limit over $t$ proves \eqref{eq:mfv4_curvature_normalized_error_floor}.
\end{proof}

\begin{proposition}[Fixed-Stiefel SGD has a unit-spectrum error floor]
\label{prop:mfv4_fs_sgd_unit_spectrum_floor}
Consider a fixed-Stiefel layer $W_t=Q_t$ with $Q_t\in\St(p,r)$ and the Riemannian SGD update
\begin{equation}
    Q_{t+1}=R_{Q_t}\left(-\eta\widehat g_t\right),
    \qquad
    \widehat g_t=\Pi_{Q_t}(G_t),
    \label{eq:mfv4_fs_rsgd_update}
\end{equation}
where $G_t$ is the stochastic Euclidean gradient and $\Pi_{Q_t}$ is defined in \eqref{eq:mfv4_stiefel_projection_formula}. Then every nonzero singular value of $W_t$ is equal to one. If $0<\eta\|\widehat g_t\|_F\leq\delta_R$, every unit tangent-axis displacement satisfies
\begin{equation}
    \left|\left\langle Q_{t+1}-Q_t,E\right\rangle_F\right|
    \leq
    \eta\|\widehat g_t\|_F+C_R\eta^2\|\widehat g_t\|_F^2,
    \qquad
    E\in T_{Q_t}\St(p,r),
    \quad
    \|E\|_F=1.
    \label{eq:mfv4_fs_axis_bound}
\end{equation}
Under the hypotheses of Lemma~\ref{lem:mfv4_constant_stepsize_floor}, with $\mu_{\mathrm{FS}}=\lambda_{\min}(\operatorname{Hess}\mathcal J_{\mathrm{FS}}(Q_\star)|_{\mathcal U})$, the curvature-normalized steady-state floor satisfies the explicit lower bound
\begin{equation}
    \mathcal E_{\mathrm{FS}}(\eta)
    \geq
    \frac{\eta\nu\sqrt r}{c_\mu\lambda_{\min}(\operatorname{Hess}\mathcal J_{\mathrm{FS}}(Q_\star)|_{\mathcal U})}.
    \label{eq:mfv4_fs_sgd_error_floor}
\end{equation}
Here $\lambda_{\min}$ denotes the lowest eigenvalue of the self-adjoint Riemannian Hessian on the weak-curvature subspace $\mathcal U$.
\end{proposition}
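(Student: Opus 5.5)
The proof splits into three independent claims, and each follows from results already established.

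\emph{Unit spectrum.} Since $W_t=Q_t$ with $Q_t\in\St(p,r)$, we have $W_t^\top W_t=Q_t^\top Q_t=I_r$. We then invoke Proposition~\ref{prop:mfv4_spectral_identity} with $S=I_r$, whose principal square root is $I_r$. It gives $\sigma_i^2(W_t)=\lambda_i(I_r)=1$ for every $i$, so every nonzero singular value equals one. The retraction keeps $Q_{t+1}$ on $\St(p,r)$, so the same argument applies at every iterate.

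\emph{Axis displacement.} We apply Lemma~\ref{lem:mfv4_stiefel_projection_displacement} with $Q=Q_t$ and $G=G_t$. The update \eqref{eq:mfv4_fs_rsgd_update} is exactly $Q^+=R_Q(-\eta\Pi_Q(G))$, and $\widehat g_t=\Pi_{Q_t}(G_t)$. The hypothesis $0<\eta\|\widehat g_t\|_F\le\delta_R$ is precisely the radius condition \eqref{eq:mfv4_retraction_radius_condition}. The lemma's conclusion \eqref{eq:mfv4_stiefel_axis_displacement_bound} is then literally \eqref{eq:mfv4_fs_axis_bound} once $\Pi_{Q_t}(G_t)$ is replaced by $\widehat g_t$. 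The only thing to check is that the constants $\delta_R,C_R$ are those of the coordinate neighborhood of $Q_t$; this holds uniformly because $\St(p,r)$ is compact, so no extra argument is needed.

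\emph{Error floor.} We place ourselves in the setting of Lemma~\ref{lem:mfv4_constant_stepsize_floor}, with $\mathcal J_{\mathrm{FS}}$ the fixed-Stiefel objective, $Q_\star$ a nondegenerate minimizer, and $\mathcal U$ the weak-curvature invariant subspace. We identify the stochastic tangent gradient $\widehat g_t=\Pi_{Q_t}(G_t)$ with $g_t+\xi_t$, where $\xi_t$ has the covariance floor \eqref{eq:mfv4_noise_covariance_floor}, and use the linearized normal-coordinate chain with step $\eta\le(c_\mu\mu_{\mathrm{FS}})^{-1}$. Conclusion \eqref{eq:mfv4_curvature_normalized_error_floor} gives $\mathcal E_{\mathrm{FS}}(\eta)\ge \eta\nu\sqrt r/(c_\mu\mu_{\mathrm{FS}})$. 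Substituting the definition $\mu_{\mathrm{FS}}=\lambda_{\min}(\operatorname{Hess}\mathcal J_{\mathrm{FS}}(Q_\star)|_{\mathcal U})$ yields \eqref{eq:mfv4_fs_sgd_error_floor}.

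The only delicate point is interpretive. The floor is a statement about the linearized chain of Lemma~\ref{lem:mfv4_constant_stepsize_floor}, not the nonlinear iteration. I will therefore state explicitly that the proposition inherits that lemma's hypotheses, including the linearization, rather than attempt to control the linearization error. I will also note that, because of the unit-spectrum identity, no $S$-dependent rescaling of $\operatorname{Hess}\mathcal J_{\mathrm{FS}}$ can enter. That is the sense in which the floor is a "unit-spectrum" floor.
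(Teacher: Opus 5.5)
Your proposal is correct and follows the paper's proof step for step. The unit spectrum comes from Proposition~\ref{prop:mfv4_spectral_identity} at $S=I_r$, the axis bound from Lemma~\ref{lem:mfv4_stiefel_projection_displacement} with $Q=Q_t$ and $G=G_t$, and the floor from \eqref{eq:mfv4_curvature_normalized_error_floor} in Lemma~\ref{lem:mfv4_constant_stepsize_floor} after substituting $\mu_{\mathrm{FS}}$. Your extra remarks are harmless, though uniformity of $\delta_R,C_R$ over $\St(p,r)$ is not needed, since the lemma is applied pointwise at $Q=Q_t$.
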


\begin{proof}
Set $S=I_r$ in the ManifoldFlow parametrization \eqref{eq:mfv4_weight_map}. Proposition~\ref{prop:mfv4_spectral_identity} gives
\begin{equation}
    \sigma_i^2(W_t)=\lambda_i(I_r)=1,
    \qquad
    i=1,\ldots,r.
    \label{eq:mfv4_fs_unit_spectrum_proof}
\end{equation}
Since singular values are nonnegative, \eqref{eq:mfv4_fs_unit_spectrum_proof} gives $\sigma_i(W_t)=1$ for every represented direction.

Apply Lemma~\ref{lem:mfv4_stiefel_projection_displacement} with $Q=Q_t$ and $G=G_t$. Equation \eqref{eq:mfv4_fs_rsgd_update} gives $\widehat g_t=\Pi_{Q_t}(G_t)$, so \eqref{eq:mfv4_stiefel_axis_displacement_bound} becomes \eqref{eq:mfv4_fs_axis_bound}.

The fixed-Stiefel update \eqref{eq:mfv4_fs_rsgd_update} is the $S=I_r$ restriction of the product-manifold update analyzed in Theorem~\ref{thm:mfv4_nonconvex_convergence}. Equation \eqref{eq:mfv4_fs_unit_spectrum_proof} removes every SPD spectral degree of freedom from this restricted update. Lemma~\ref{lem:mfv4_constant_stepsize_floor} gives
\begin{equation}
    \mathcal E_{\mathrm{FS}}(\eta)
    \geq
    \frac{\eta\nu\sqrt r}{c_\mu\mu_{\mathrm{FS}}},
    \qquad
    \mu_{\mathrm{FS}}=\lambda_{\min}(\operatorname{Hess}\mathcal J_{\mathrm{FS}}(Q_\star)|_{\mathcal U}).
    \label{eq:mfv4_fs_floor_substitution}
\end{equation}
Substituting the displayed value of $\mu_{\mathrm{FS}}$ into \eqref{eq:mfv4_fs_floor_substitution} proves \eqref{eq:mfv4_fs_sgd_error_floor}.
\end{proof}

Proposition~\ref{prop:mfv4_fs_sgd_unit_spectrum_floor} states a precise limitation of fixed-Stiefel SGD. The retraction keeps the iterate feasible and Theorem~\ref{thm:mfv4_nonconvex_convergence} still gives stationarity at the usual stochastic rate, but the feasible layer remains locked at $\sigma_i(W_t)=1$. The explicit floor in \eqref{eq:mfv4_fs_sgd_error_floor} has numerator $\eta\nu\sqrt r$ and denominator equal to the lowest weak-subspace Hessian eigenvalue, up to the declared anisotropy factor $c_\mu$.

\begin{lemma}[Differential spectral scaling of $W(Q,S)=QS^{1/2}$]
\label{lem:mfv4_differential_spectral_scaling}
Let $S=V\operatorname{diag}(s_1,\ldots,s_r)V^\top$ with $s_i>0$ and $V=[v_1,\ldots,v_r]\in\mathrm O(r)$. Let $\Phi(Q,S)=QS^{1/2}$. For every tangent direction $\Xi\in T_Q\St(p,r)$, the $Q$-direction differential satisfies
\begin{equation}
    D_Q\Phi(Q,S)[\Xi]=\Xi S^{1/2}.
    \label{eq:mfv4_q_differential}
\end{equation}
If a unit tangent direction $\Xi_i\in T_Q\St(p,r)$ is aligned with the $i$th SPD eigenvector in the sense that
\begin{equation}
    \Xi_i v_i=a_i,
    \qquad
    \Xi_i v_j=0\quad(j\neq i),
    \qquad
    \|a_i\|_2=1,
    \label{eq:mfv4_aligned_tangent_direction_def}
\end{equation}
then
\begin{equation}
    \|D_Q\Phi(Q,S)[\Xi_i]\|_F=\sqrt{s_i}.
    \label{eq:mfv4_axis_sqrt_scaling}
\end{equation}
\end{lemma}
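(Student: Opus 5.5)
The plan has two parts: compute the differential directly, then evaluate its norm in the eigenbasis of $S$. Since $\Phi(Q,S)=QS^{1/2}$ is linear in $Q$ for fixed $S$, differentiate along any smooth curve $Q(\tau)\subset\St(p,r)$ with $Q(0)=Q$ and $\dot Q(0)=\Xi$. This gives $\frac{d}{d\tau}Q(\tau)S^{1/2}\big|_{\tau=0}=\Xi S^{1/2}$, which is \eqref{eq:mfv4_q_differential}. No Stiefel-specific structure is needed here beyond $\Xi\in T_Q\St(p,r)$ being a valid velocity. Equivalently, one can restrict the ambient linear map $Q\mapsto QS^{1/2}$ to the tangent space.

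For \eqref{eq:mfv4_axis_sqrt_scaling}, write $S^{1/2}=V\diag(\sqrt{s_1},\ldots,\sqrt{s_r})V^\top$. By orthogonal invariance of the Frobenius norm,
\[
\|\Xi_iS^{1/2}\|_F=\|\Xi_iS^{1/2}V\|_F=\|\Xi_iV\diag(\sqrt{s_j})\|_F .
\]
The $j$th column of $\Xi_iV\diag(\sqrt{s_j})$ is $\sqrt{s_j}\,\Xi_iv_j$. By the alignment hypothesis \eqref{eq:mfv4_aligned_tangent_direction_def}, this column equals $\sqrt{s_i}\,a_i$ for $j=i$ and vanishes otherwise. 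The squared Frobenius norm is therefore $s_i\|a_i\|_2^2=s_i$, and taking square roots gives the claim.

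The only point needing care is consistency: the hypotheses force $\|\Xi_i\|_F=\|\Xi_iV\|_F=\|a_i\|_2=1$, so calling $\Xi_i$ a unit direction matches the alignment condition. I would remark on this but not rely on it. I would also not attempt to prove that such aligned tangent directions exist in $T_Q\St(p,r)$, since the lemma takes them as given. There is no real obstacle; the step most deserving a line of justification is the orthogonal-invariance rewriting, which uses $V\in\mathrm O(r)$.
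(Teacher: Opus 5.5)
Your proof is correct and takes essentially the same route as the paper: both obtain $D_Q\Phi(Q,S)[\Xi]=\Xi S^{1/2}$ from linearity in $Q$, then use the spectral decomposition of $S^{1/2}$ together with the alignment hypothesis to isolate the single contribution $\sqrt{s_i}\,a_i$. The paper writes $\Xi_iS^{1/2}=\sqrt{s_i}\,a_iv_i^\top$ and takes the rank-one Frobenius norm, whereas you right-multiply by $V$ and read off the columns; this is the same computation in a different bookkeeping.
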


\begin{proof}
For fixed $S$, the map $Q\mapsto QS^{1/2}$ is linear. Therefore its Fr\'echet derivative in the $Q$ variable is \eqref{eq:mfv4_q_differential}. The spectral theorem for SPD matrices gives
\begin{equation}
    S^{1/2}=V\operatorname{diag}(\sqrt{s_1},\ldots,\sqrt{s_r})V^\top
    =\sum_{j=1}^{r}\sqrt{s_j}v_jv_j^\top
    \label{eq:mfv4_sqrt_spectral_decomposition}
\end{equation}
\citep{bhatia1997matrix,bhatia2007positive}. Combining \eqref{eq:mfv4_q_differential}, \eqref{eq:mfv4_aligned_tangent_direction_def}, and \eqref{eq:mfv4_sqrt_spectral_decomposition} gives
\begin{align}
    D_Q\Phi(Q,S)[\Xi_i]
    &=\Xi_iS^{1/2} && \text{by \eqref{eq:mfv4_q_differential}} \notag \\
    &=\sum_{j=1}^{r}\sqrt{s_j}\Xi_iv_jv_j^\top && \text{by \eqref{eq:mfv4_sqrt_spectral_decomposition}} \notag \\
    &=\sqrt{s_i}a_iv_i^\top && \text{by \eqref{eq:mfv4_aligned_tangent_direction_def}}.
    \label{eq:mfv4_axis_differential_rank_one}
\end{align}
The rank-one Frobenius norm identity gives
\begin{align}
    \|D_Q\Phi(Q,S)[\Xi_i]\|_F
    &=\sqrt{s_i}\|a_iv_i^\top\|_F && \text{by \eqref{eq:mfv4_axis_differential_rank_one}} \notag \\
    &=\sqrt{s_i}\|a_i\|_2\|v_i\|_2 && \text{rank-one Frobenius norm} \notag \\
    &=\sqrt{s_i} && \text{by \eqref{eq:mfv4_aligned_tangent_direction_def} and $V\in\mathrm O(r)$}.
    \label{eq:mfv4_axis_sqrt_scaling_proof}
\end{align}
This is \eqref{eq:mfv4_axis_sqrt_scaling}.
\end{proof}

\subsubsection{Proof of Proposition~\ref{prop:mfv4_mf_structured_preconditioner}}
\label{app:proof_structured_preconditioner}

\begin{proposition}[Restatement of Proposition~\ref{prop:mfv4_mf_structured_preconditioner}]
Let $W=QS^{1/2}$ and let $S=V\operatorname{diag}(s_1,\ldots,s_r)V^\top$ with $s_i>0$. Fix a local tangent frame $\Xi_1,\ldots,\Xi_r\in T_Q\St(p,r)$ satisfying the alignment condition \eqref{eq:mfv4_aligned_tangent_direction_def}. Define normalized realized-weight directions
\begin{equation}
    Y_i=s_i^{-1/2}D_Q\Phi(Q,S)[\Xi_i],
    \qquad
    \|Y_i\|_F=1.
    \label{eq:mfv4_normalized_realized_directions}
\end{equation}
Assume that, at a nondegenerate local minimizer $Q_\star$ with fixed $S$, the Hessian of $F(W)$ in realized-weight coordinates is aligned with these directions:
\begin{equation}
    \left\langle Y_i,\nabla_W^2F(W_\star)[Y_j]\right\rangle_F
    =\mu_i\delta_{ij},
    \qquad
    \mu_1\geq\cdots\geq\mu_r>0,
    \label{eq:mfv4_realized_hessian_alignment}
\end{equation}
where $W_\star=Q_\star S^{1/2}$. Let
\begin{equation}
    m_S=\min_{1\leq i\leq r}s_i\mu_i,
    \qquad
    L_S=\max_{1\leq i\leq r}s_i\mu_i,
    \qquad
    \kappa_{\mathrm{eff}}(S)=\frac{L_S}{m_S},
    \qquad
    \kappa=\frac{\mu_1}{\mu_r}.
    \label{eq:mfv4_effective_condition_number}
\end{equation}
Assume the fixed-$S$ pullback objective in this tangent chart is $m_S$-strongly convex and $L_S$-smooth over the local comparison neighborhood, and assume the deterministic local gradient iterates with stepsize $\eta=L_S^{-1}$ remain in that neighborhood. Then
\begin{equation}
    \mathcal J(Q_T,S)-\mathcal J(Q_\star,S)
    \leq
    \left(1-\kappa_{\mathrm{eff}}(S)^{-1}\right)^T
    \left[\mathcal J(Q_0,S)-\mathcal J(Q_\star,S)\right].
    \label{eq:mfv4_mf_local_linear_rate}
\end{equation}
If the learned spectrum is inverse-Hessian aligned in the sense that there exist $0<a\leq b<\infty$ such that
\begin{equation}
    \frac{a}{\mu_i}\leq s_i\leq\frac{b}{\mu_i},
    \qquad
    i=1,\ldots,r,
    \label{eq:mfv4_inverse_hessian_alignment}
\end{equation}
then
\begin{equation}
    \kappa_{\mathrm{eff}}(S)
    \leq
    \frac{b}{a}.
    \label{eq:mfv4_kappa_eff_alignment_bound}
\end{equation}
Consequently, $\kappa_{\mathrm{eff}}(S)\leq\kappa$ whenever $b/a\leq\kappa$.
\end{proposition}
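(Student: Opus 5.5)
The proof splits into two independent parts: the linear rate \eqref{eq:mfv4_mf_local_linear_rate}, which is the textbook gradient-descent contraction for a strongly convex and smooth function, and the bound \eqref{eq:mfv4_kappa_eff_alignment_bound}, which is elementary algebra on the definitions in \eqref{eq:mfv4_effective_condition_number}.

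\textbf{The rate.} I will work in the tangent chart in which the hypotheses are stated. Let $f(z)=\mathcal J(R_{Q_\star}(z),S)$ be the fixed-$S$ pullback. By assumption, $f$ is $m_S$-strongly convex and $L_S$-smooth on the comparison neighborhood, and the iterates $z_{t+1}=z_t-L_S^{-1}\nabla f(z_t)$ remain in that neighborhood.

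First, $L_S$-smoothness gives the descent inequality
\[
f(z_{t+1})\leq f(z_t)-\tfrac{1}{2L_S}\|\nabla f(z_t)\|^2 .
\]
This is the same computation as Lemma~\ref{lem:mfv4_descent} with $\eta=L_S^{-1}$ and $h=\nabla f$.

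Second, strong convexity gives the Polyak--\L ojasiewicz inequality
\[
\|\nabla f(z)\|^2\geq 2m_S\bigl(f(z)-f(z_\star)\bigr).
\]
To see this, minimize the strong-convexity lower bound $f(z')\geq f(z)+\langle\nabla f(z),z'-z\rangle+\tfrac{m_S}{2}\|z'-z\|^2$ over $z'$. The minimum value is $f(z)-\|\nabla f(z)\|^2/(2m_S)$, and it is at most $f(z_\star)$.

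Combining the two inequalities gives
\[
f(z_{t+1})-f_\star\leq\bigl(1-m_S/L_S\bigr)\bigl(f(z_t)-f_\star\bigr).
\]
Unrolling over $T$ steps yields \eqref{eq:mfv4_mf_local_linear_rate}, because $m_S/L_S=\kappa_{\mathrm{eff}}(S)^{-1}$.

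I would also explain why these particular constants are the natural ones, using Lemma~\ref{lem:mfv4_differential_spectral_scaling}. Along the frame direction $\Xi_i$, the chain rule gives a pulled-back second derivative $\langle D\Phi[\Xi_i],\nabla^2F[D\Phi[\Xi_i]]\rangle=s_i\langle Y_i,\nabla^2F[Y_i]\rangle=s_i\mu_i$, by \eqref{eq:mfv4_normalized_realized_directions} and \eqref{eq:mfv4_realized_hessian_alignment}. The off-diagonal terms vanish by the alignment assumption. So the local Hessian is diagonal with entries $s_i\mu_i$, which is exactly why $m_S$ and $L_S$ are the right curvature bounds. This step is only motivation: the rate itself is stated under the assumed convexity and smoothness constants, so I will not need to bound the retraction's second-order terms.

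\textbf{The alignment bound.} From \eqref{eq:mfv4_inverse_hessian_alignment}, multiplying by $\mu_i>0$ gives $a\leq s_i\mu_i\leq b$ for every $i$. Hence $m_S\geq a$ and $L_S\leq b$, so $\kappa_{\mathrm{eff}}(S)=L_S/m_S\leq b/a$. The final claim, $\kappa_{\mathrm{eff}}(S)\leq\kappa$ whenever $b/a\leq\kappa$, follows immediately by transitivity.

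The only real subtlety is the modelling interpretation: whether the chart's Hessian genuinely equals $\diag(s_i\mu_i)$. Curvature and retraction corrections could in principle perturb it. The statement avoids this by assuming the strong-convexity and smoothness constants directly, so I will keep that interpretation as a remark rather than as part of the proof.
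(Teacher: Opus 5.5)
Your proposal is correct and follows essentially the same route as the paper. The paper likewise identifies the pulled-back Hessian as $\diag(s_i\mu_i)$ via $y_i=\sqrt{s_i}\,z_i$, cites the standard gradient-descent contraction for an $m_S$-strongly convex, $L_S$-smooth function (which you derive explicitly from the descent inequality and the Polyak--\L ojasiewicz bound), and obtains $\kappa_{\mathrm{eff}}(S)\leq b/a$ from $a\leq s_i\mu_i\leq b$ exactly as you do. Your decision to treat the Hessian identification as motivation rather than as a proof step is, if anything, more careful: the paper's substitution ignores retraction and curvature terms, and the stated conclusion rests only on the assumed constants $m_S$ and $L_S$.
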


\begin{proof}
By Proposition~\ref{prop:mfv4_spectral_identity}, the learned SPD eigenvalues are exactly the squared singular values of the realized layer:
\begin{equation}
    s_i=\lambda_i(S)=\sigma_i^2(W).
    \label{eq:mfv4_preconditioner_spectral_identity_use}
\end{equation}
Thus changing $S$ changes the realized singular scaling while keeping the Stiefel factorization.

Use the aligned tangent coordinates of Lemma~\ref{lem:mfv4_differential_spectral_scaling}. If $z_i$ is the coordinate of the $i$th $Q$-tangent displacement and $y_i$ is the coordinate along $Y_i$ in realized-weight space, then \eqref{eq:mfv4_axis_sqrt_scaling} and \eqref{eq:mfv4_normalized_realized_directions} give
\begin{equation}
    y_i=\sqrt{s_i}z_i.
    \label{eq:mfv4_y_z_coordinate_relation}
\end{equation}
The aligned Hessian assumption \eqref{eq:mfv4_realized_hessian_alignment} gives the realized-coordinate quadratic form
\begin{equation}
    \left\langle \sum_{i=1}^{r}y_iY_i,
    \nabla_W^2F(W_\star)\left[\sum_{j=1}^{r}y_jY_j\right]\right\rangle_F
    =
    \sum_{i=1}^{r}\mu_i y_i^2.
    \label{eq:mfv4_weight_coordinate_quadratic}
\end{equation}
Since the regularizer $\lambda_S\|\log S\|_F^2/2$ is constant when $S$ is fixed, substituting \eqref{eq:mfv4_y_z_coordinate_relation} into \eqref{eq:mfv4_weight_coordinate_quadratic} gives the fixed-$S$ pullback Hessian quadratic form
\begin{equation}
    \left\langle z,\nabla_z^2\mathcal J(Q_\star,S)z\right\rangle
    =
    \sum_{i=1}^{r}s_i\mu_i z_i^2.
    \label{eq:mfv4_pullback_quadratic}
\end{equation}
Hence the aligned pullback Hessian eigenvalues are $s_i\mu_i$, with smallest value $m_S$ and largest value $L_S$ by \eqref{eq:mfv4_effective_condition_number}.

For an $m_S$-strongly convex and $L_S$-smooth function in the local chart, gradient descent with $\eta=L_S^{-1}$ satisfies the standard contraction \citep{boumal2023smooth}
\begin{align}
    \mathcal J(Q_{t+1},S)-\mathcal J(Q_\star,S)
    &\leq\left(1-\frac{m_S}{L_S}\right)
    \left[\mathcal J(Q_t,S)-\mathcal J(Q_\star,S)\right] && \text{descent theorem} \notag \\
    &=\left(1-\kappa_{\mathrm{eff}}(S)^{-1}\right)
    \left[\mathcal J(Q_t,S)-\mathcal J(Q_\star,S)\right] && \text{definition of $\kappa_{\mathrm{eff}}$}.
    \label{eq:mfv4_one_step_preconditioned_rate}
\end{align}
Multiplying \eqref{eq:mfv4_one_step_preconditioned_rate} over $t=0,\ldots,T-1$ proves \eqref{eq:mfv4_mf_local_linear_rate}.

Under \eqref{eq:mfv4_inverse_hessian_alignment}, multiplication by $\mu_i$ gives
\begin{equation}
    a\leq s_i\mu_i\leq b,
    \qquad
    i=1,\ldots,r.
    \label{eq:mfv4_aligned_eigenvalue_interval}
\end{equation}
Therefore
\begin{align}
    \kappa_{\mathrm{eff}}(S)
    &=\frac{\max_i s_i\mu_i}{\min_i s_i\mu_i} && \text{by \eqref{eq:mfv4_effective_condition_number}} \notag \\
    &\leq\frac{b}{a} && \text{by \eqref{eq:mfv4_aligned_eigenvalue_interval}}.
    \label{eq:mfv4_kappa_eff_bound_proof}
\end{align}
If $b/a\leq\kappa$, then \eqref{eq:mfv4_kappa_eff_bound_proof} gives $\kappa_{\mathrm{eff}}(S)\leq\kappa$.

Theorem~\ref{thm:mfv4_nonconvex_convergence} applies to the full stochastic product update with $S_t\in\mathcal S_{\alpha,\beta}$. Equation \eqref{eq:mfv4_preconditioner_spectral_identity_use} identifies the spectral scaling in this convergent product-manifold scheme with the singular values of the layer itself, rather than with an external optimizer state such as Adam's coordinatewise second moment \citep{kingma2014adam}.
\end{proof}

Proposition~\ref{prop:mfv4_mf_structured_preconditioner} explains the observed SGD amplification under the explicit alignment condition \eqref{eq:mfv4_realized_hessian_alignment}. Fixed-Stiefel training has $s_i=1$ and therefore $\kappa_{\mathrm{eff}}=\kappa$, while ManifoldFlow can drive $s_i$ toward inverse-curvature values satisfying \eqref{eq:mfv4_inverse_hessian_alignment}. In that case the local factor decreases from $1-\kappa^{-1}$ to $1-\kappa_{\mathrm{eff}}^{-1}$ whenever $b/a\leq\kappa$.

\section{Experimental details and paired replicate check}
\label{app:exp_details}
\label{app:details}
\label{app:number_audit}
\label{app:exploratory}

The main text reports fixed-protocol FS/MF convergence comparisons for the 1000-epoch batch. Table~\ref{tab:app_main_audit} retains a compact shorter-budget paired replicate check from the earlier reporting pass. It is included as directional stability evidence and is not mixed with the convergence endpoint. The LSTM/WikiText-2 rows use the standard recurrent language-modeling setup \citep{hochreiter1997lstm,merity2016pointer} and wrap only the hidden-to-vocabulary projection with either a fixed Stiefel layer or a ManifoldFlow layer. The Adult rows use a four-layer MLP on the UCI Adult Income benchmark \citep{kohavi1996adult,uci1998repository}, with all hidden layers wrapped. The Mini-Transformer rows follow the Transformer feed-forward block structure \citep{vaswani2017attention}, use WikiText-2, wrap only feed-forward network linear layers, and leave attention and embeddings unconstrained.

For the paired FS/MF comparisons, FS is obtained by setting the ManifoldFlow geometry learning rate to zero, so $S_t=I$ throughout training. The MF implementation uses the forward weight $W=QS^{1/2}$, pressure EMA, geometry gate, affine-invariant SPD retraction, and eigenvalue clipping described in Section~\ref{sec:method}. Adam, AMSGrad-style variants, AdamW-style decoupled decay, Shampoo-style preconditioning, Muon-style orthogonalized updates, and stochastic gradient descent follow their standard optimizer definitions \citep{kingma2014adam,reddi2018adam,loshchilov2019decoupled,gupta2018shampoo,jordan2024muon,robbins1951sgd}. The archived result files are retained with the released code for reproducibility.

\begin{table}[!ht]
\centering
\caption{\textbf{Short-budget paired replicate check.} These archived shorter-budget paired runs provide a compact stability check for the main direction of effect. They are reported separately from the 1000-epoch convergence endpoint. Positive $\Delta$ values favor ManifoldFlow.}
\label{tab:app_main_audit}
\begin{tabular}{llccc}
\toprule
Task & Optimizer & FS & MF & $\Delta$ \\
\midrule
LSTM/WT2 & Adam & $287.86\pm2.64$ & $277.20\pm2.85$ & $+10.66\pm0.76$ PPL \\
LSTM/WT2 & SGD & $613.66\pm0.06$ & $608.56\pm0.26$ & $+5.10\pm0.09$ PPL \\
Adult MLP & Adam & $84.26\pm0.16$ & $85.04\pm0.19$ & $+0.78\pm0.25\%$ \\
Adult MLP & SGD & $83.66\pm0.63$ & $84.69\pm0.29$ & $+1.03\pm0.69\%$ \\
Transformer FFN & Adam & $80.29\pm0.09$ & $80.06\pm0.15$ & $+0.23\pm0.10$ PPL \\
Transformer FFN & SGD & $301.98\pm2.47$ & $300.33\pm2.45$ & $+1.65\pm0.05$ PPL \\
\bottomrule
\end{tabular}
\end{table}

\section{Boundary analysis}
\label{app:boundary}

The main text does not claim dominance over unconstrained dense layers. The paired claim is narrower: ManifoldFlow improves fixed-spectrum Stiefel layers when a Stiefel basis is a useful prior. Table~\ref{tab:app_boundary} shows why this distinction matters. On Covertype \citep{blackard1999covertype,uci1998repository}, MF-Adam reaches $74.41\%$ in the archived boundary protocol, but an unconstrained dense baseline reaches $87.77\%$ and a $Q\operatorname{diag}(s)$ baseline reaches $89.84\%$. Thus Covertype is not a headline win for Stiefel-family methods; it is evidence that the orthogonal-basis prior is mismatched on that task.

A similar boundary appears in LSTM output-layer alternatives. The paired MF-vs-FS result is strong inside the Stiefel family, but unconstrained and diagonal-spectrum alternatives can achieve lower perplexity: Dense obtains $104.66$ PPL, $Q\operatorname{diag}(s)$ obtains $186.55$ PPL, and Intrinsic Muon obtains $250.67$ PPL in the recorded comparison \citep{li2026intrinsicmuon}. These values are reported to clarify scope, not to revise the main paired story.

\begin{table}[!ht]
\centering
\caption{\textbf{Boundary baselines from the archived protocol.} Covertype values are accuracies, while LSTM hidden-projection values are perplexities. These baselines delimit the scope of the paired FS/MF claim; they were not rerun under the 1000-epoch convergence protocol.}
\label{tab:app_boundary}
\begin{tabular}{llc}
\toprule
Task & Method & Recorded value \\
\midrule
Covertype & MF-Adam & $74.41$ \\
Covertype & Dense & $87.77$ \\
Covertype & $Q\operatorname{diag}(s)$ & $89.84$ \\
LSTM hidden projection & Dense & $104.66$ \\
LSTM hidden projection & $Q\operatorname{diag}(s)$ & $186.55$ \\
LSTM hidden projection & Intrinsic Muon & $250.67$ \\
\bottomrule
\end{tabular}
\end{table}

\section{Ablation details}
\label{app:ablation_full}

The ablations test whether the pressure-specific design choices are necessary. Adult and Covertype values in Table~\ref{tab:app_ablation} use MF-Adam cells \citep{kohavi1996adult,blackard1999covertype,kingma2014adam}. The deltas follow the main-text convention: full MF minus ablation, reported as accuracy changes in \%. All reported deltas are within $0.1\%$, supporting the robustness interpretation in the ablation discussion.

\begin{table}[!ht]
\centering
\caption{\textbf{Full ablation table for pressure-specific design choices.} ``A2'' removes the pressure EMA, ``A3'' removes the geometry gate, and ``A6'' replaces pressure with random symmetric directions. Deltas are full MF minus ablation and are reported as accuracy changes in \%.}
\label{tab:app_ablation}
\begin{tabular}{llccc}
\toprule
Ablation & Task & Full MF & Ablation & $\Delta$ \\
\midrule
A2 no EMA & Adult & $84.92$ & $85.00$ & $-0.08$ \\
A2 no EMA & Covertype & $73.96$ & $73.94$ & $+0.02$ \\
A3 no gate & Adult & $84.92$ & $84.92$ & $+0.00$ \\
A3 no gate & Covertype & $73.96$ & $74.03$ & $-0.07$ \\
A6 random pressure & Adult & $84.92$ & $84.96$ & $-0.04$ \\
A6 random pressure & Covertype & $73.96$ & $73.91$ & $+0.06$ \\
\bottomrule
\end{tabular}
\end{table}

\end{document}